\documentclass[letterpaper]{article} 
\usepackage{arxiv}  
\usepackage[hyphens]{url}  
\usepackage{graphicx} 
\usepackage{natbib}  
\usepackage{caption} 
\usepackage{algorithm}
\usepackage{algorithmic}

\usepackage{newfloat}
\usepackage{listings}
\DeclareCaptionStyle{ruled}{labelfont=normalfont,labelsep=colon,strut=off} 
\floatstyle{ruled}
\newfloat{listing}{tb}{lst}{}
\floatname{listing}{Listing}

\usepackage{booktabs}

\usepackage{amsmath}
\usepackage{graphicx}
\usepackage{amssymb}
\usepackage{bbding}
\usepackage{booktabs}
\usepackage{tabularx}

\usepackage{colortbl}
\usepackage{xcolor}
\definecolor{mygray}{gray}{.9} 
\definecolor{myred}{rgb}{1,0,0} 
\definecolor{mygreen}{rgb}{0,1,0} 
\definecolor{myblue}{rgb}{0,0,1} 
\definecolor{mycyan}{rgb}{0,1,1} 
\definecolor{mymagenta}{rgb}{1,0,1} 
\definecolor{myyellow}{rgb}{1,1,0} 
\definecolor{myorange}{rgb}{1,0.647,0} 

\usepackage{subcaption}
\usepackage{multirow}
\usepackage{diagbox}
\usepackage{bm}
\usepackage{amsthm}

\definecolor{LightRed}{RGB}{255,182,193} 
\definecolor{LightBlue}{RGB}{173,216,230}

\theoremstyle{plain}
\newtheorem{theorem}{Theorem}
\newtheorem{proposition}{Proposition}
\newtheorem{lemma}{Lemma}

\theoremstyle{definition}

\newtheorem{assumption}{Assumption}
\theoremstyle{remark}
\newtheorem{remark}{Remark}

\usepackage{xspace}
\DeclareRobustCommand{\method}{FedACT\xspace}

\title{FedACT: Federated Adaptive Coordinate Trust Modulation for Robust Transformer Training under Data Heterogeneity}

\author{
    Shuai Li\textsuperscript{\rm 1,2,3},
    Qinglin Wang\textsuperscript{\rm 1,2,3},
    Ping Luo\textsuperscript{\rm 2,3},
    Jiahuan Wang\textsuperscript{\rm 2,3},
    Hongyang Hu\textsuperscript{\rm 1,2,3},
    Haotian Mo\textsuperscript{\rm 1,2,3}\\
    Yigui Feng\textsuperscript{\rm 1,2,3},
    Ziang Liu\textsuperscript{\rm 1,2,3},
    Qisong Xiao\textsuperscript{\rm 1,2,3},
    Jie Liu\textsuperscript{\rm 1,2,3}\thanks{Corresponding authors: Jie Liu (liujie@nudt.edu.cn) and Tao Sun (suntao.saltfish@outlook.com).},
    Tao Sun\textsuperscript{\rm 2,3}\footnotemark[1]
}
\affiliations{
    \textsuperscript{\rm 1}Laboratory of Digitizing Software for Frontier Equipment, National University of Defense Technology\\
    \textsuperscript{\rm 2}National Key Laboratory of Parallel and Distributed Computing, National University of Defense Technology\\
    \textsuperscript{\rm 3}College of Computer Science and Technology, National University of Defense Technology, Changsha 410073, China\\
    
}

\usepackage{bibentry}

\begin{document}

\maketitle

\begin{abstract}
Federated Transformer training increasingly relies on local AdamW, whose adaptive updates can provide much stronger local progress than SGD-based training. However, under heterogeneous client data, even globally corrected AdamW updates may remain highly uneven in coordinate-wise reliability. We refer to this phenomenon as \textbf{\textit{coordinate trust mismatch}}. Existing federated adaptive optimizers mainly address mismatch at the client-update or communication-round level, but still apply the corrected adaptive direction densely and uniformly across coordinates. In this paper, we propose \textbf{\method}, a global-aware coordinate trust modulation method for federated AdamW training. \method first forms a globally corrected adaptive direction and then reallocates update magnitudes according to a coordinate-wise trust score, assigning larger steps to coordinates jointly supported by local gradients and global correction, while preserving smaller non-zero updates on the remaining coordinates. Extensive experiments on federated vision Transformers, CNNs, LLM pre-training, and LLM fine-tuning show that \method consistently improves over strong federated adaptive baselines, with the largest gains on Transformer models under stronger data heterogeneity. Mechanism analyses further show that \method improves cross-client direction consistency, suggesting that coordinate-level trust allocation effectively complements round-level global-local correction. Code will be released.
\end{abstract}
\section{Introduction}
Adaptive optimization has become central to large-scale distributed training~\cite{kingma2014adam, loshchilov2017decoupled, zhou2020towards, zhang2024transformers, chen2023symbolic, jordan2024muon}, and this trend has naturally extended to federated learning~\cite{reddi2020adaptive, wang2022communication, sun2023efficient, liu2026fedadamw, liu2025fedmuon}. In particular, AdamW~\cite{loshchilov2017decoupled} is highly effective for Transformer architectures, whose optimization often benefits from heterogeneous effective learning rates rather than a single global learning-rate scale. Recent federated studies~\cite{sun2023efficient, liu2026fedadamw} further show that local AdamW can substantially outperform local SGD when training vision and language Transformers, motivating a shift from purely first-order local training toward adaptive local optimization in federated settings. This trend is also consistent with recent Hessian-based evidence showing that Transformers exhibit pronounced block heterogeneity, which favors coordinate- or block-wise adaptive scaling over single-rate first-order updates~\cite{zhang2024transformers, zhang2024adam}.

Recent works have begun to address adaptive local optimization in federated learning from the perspective of correction and stabilization~\cite{reddi2020adaptive, sun2023efficient, chen2025gradient, liu2026fedadamw}. Early adaptive federated optimization methods such as FedOpt~\cite{reddi2020adaptive} introduce server-side adaptivity, while later methods increasingly recognize that heterogeneity-aware correction becomes essential once adaptive optimization is used locally. In particular, FedLADA~\cite{sun2023efficient} amends local adaptive optimization with a global offset-like correction signal, FAdamGC~\cite{chen2025gradient} injects correction in a manner that respects the moment structure of adaptive methods, and FedAdamW~\cite{liu2026fedadamw} combines global-local alignment with second-moment synchronization to stabilize local AdamW under heterogeneous data. These developments suggest that global correction is a key ingredient for robust federated AdamW training.

However, existing adaptive FL methods still share an important limitation: after global correction, the adaptive local update remains \textit{\textbf{dense and uniform across coordinates}}. Once a global correction signal has been incorporated, all coordinates of the corrected local direction are implicitly treated as equally trustworthy. We argue that this assumption is unnecessarily strong under non-IID federated training, especially for Transformer models, where AdamW-style local updates are dense and highly non-uniform across coordinates, layers, and modules. In practice, some coordinates are jointly supported by the current stochastic gradient and the global correction signal, whereas others may be dominated by local noise, client-specific bias, or short-term overfitting. We refer to this fine-grained reliability imbalance as \textit{\textbf{coordinate trust mismatch}}.

This observation reveals a missing ingredient in federated AdamW training. Existing methods mainly correct local adaptive directions at the client-update or communication-round level, whereas recent centralized methods such as cautious optimizers~\cite{liang2024cautious} and MGUP~\cite{changmgup} show that update reliability can also vary substantially across coordinates. Nevertheless, these coordinate-wise modulation methods are developed for centralized stochastic optimization and rely only on local optimizer states or local gradients. They do not account for federated global information, client heterogeneity, or cross-client correction. This motivates a natural question: \textbf{\textit{once a globally corrected AdamW direction is formed in federated learning, is dense uniform updating still sufficient under non-IID data?}} We answer this question negatively and propose \textbf{\method}, an adaptive global-aware coordinate trust modulation method for federated AdamW training.

\method first forms a globally corrected AdamW direction and then reallocates update magnitudes across coordinates according to a global-aware trust score. Coordinates that are more strongly supported by both the current stochastic gradient and the global correction signal receive larger update magnitudes, while the remaining coordinates are still updated with smaller but non-zero magnitudes. In this way, \method does not replace round-level global-local correction; instead, it complements it with coordinate-level trust allocation. This design is particularly suitable for AdamW-style diagonal adaptive optimization, where coordinate-wise reliability is naturally meaningful because each coordinate has its own adaptive scale. Empirically, \method improves communication-round efficiency, training stability, and final performance over strong federated baselines, with especially clear gains on Transformer models under stronger data heterogeneity. 

Our main contributions are summarized as follows:
\begin{itemize}
    \item We identify \textbf{\textit{coordinate trust mismatch}} as a fine-grained failure mode in federated AdamW training: even after round-level global-local correction, different coordinates of the corrected adaptive update can remain highly uneven in reliability under heterogeneous data.
    \item We propose \textbf{\method}, an adaptive global-aware coordinate trust modulation method that first forms a globally corrected AdamW direction and then reallocates update magnitudes according to a coordinate-wise trust score. This design complements round-level correction with coordinate-level trust allocation.
    \item We conduct extensive experiments on federated vision Transformers, CNNs, LLM pre-training, and LLM fine-tuning. FedACT-AdamW consistently improves over strong federated adaptive baselines, with especially clear gains on Transformer models under stronger data heterogeneity. We further analyze direction consistency and coordinate-level trust diagnostics to explain the mechanism behind the improvement.
\end{itemize}
\section{Related Work}
\label{sec:related}

\begin{figure*}[!ht]
    \vskip -0.2cm
    \centering
    \begin{subfigure}{0.245\textwidth}
        \centering
        \includegraphics[width=1\textwidth]{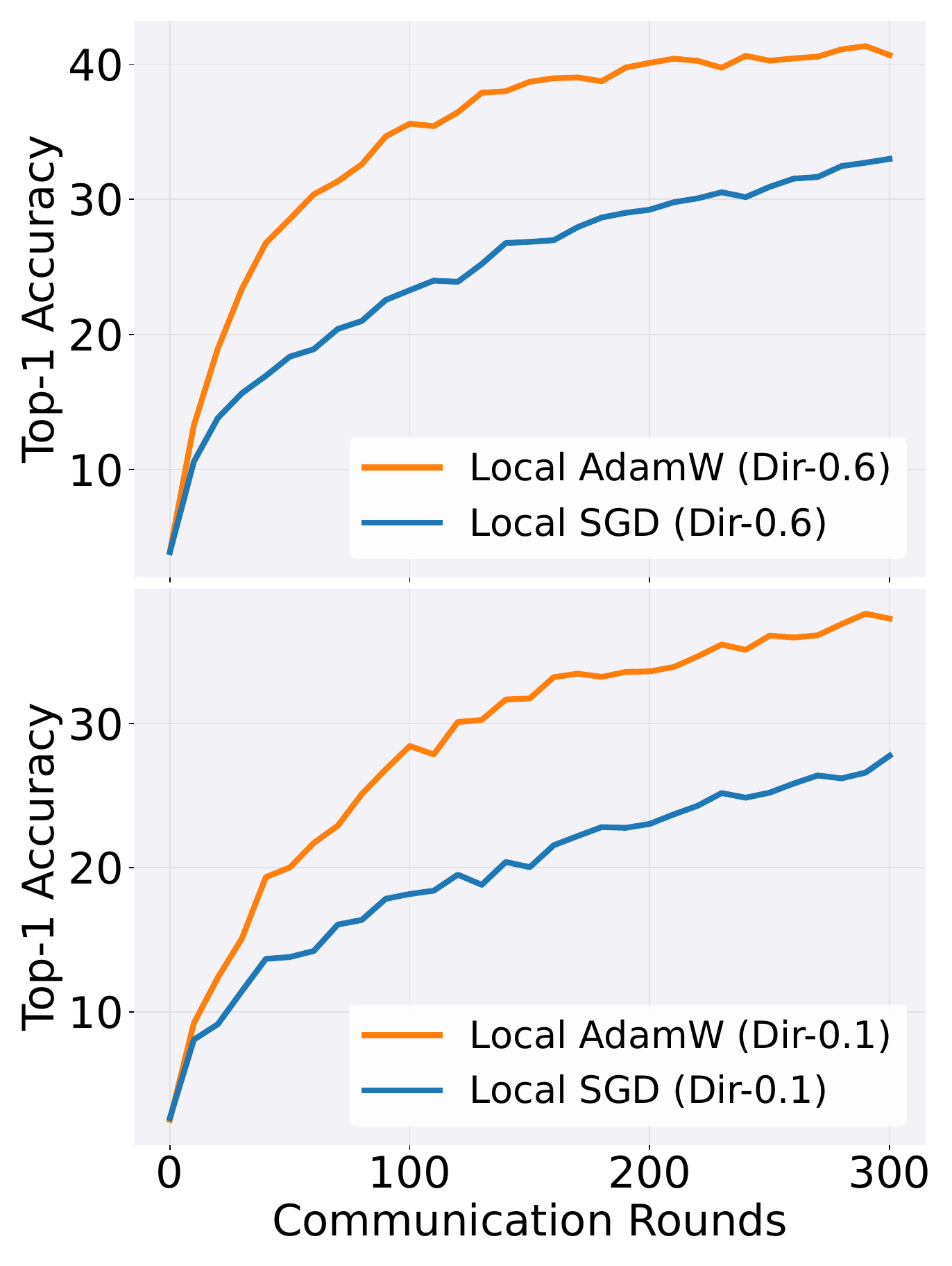}
        \vskip -0.2cm
        \caption{Top-1 Accuracy}
        \label{fig:motivation_adamw_vs_sgd}
    \end{subfigure}\hfill
    \begin{subfigure}{0.245\textwidth}
        \centering
        \includegraphics[width=1\textwidth]{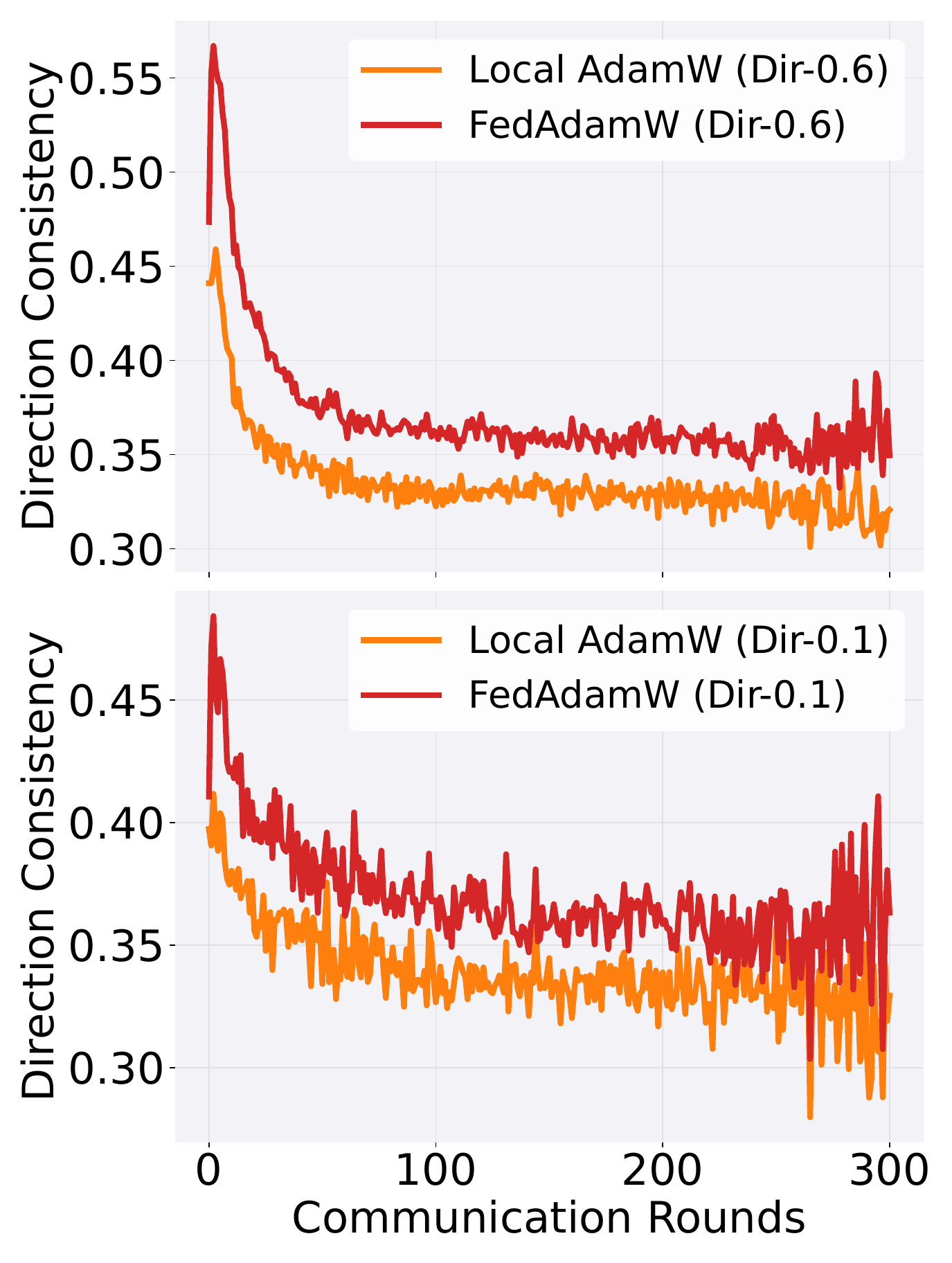}
        \vskip -0.2cm
        \caption{Direction Consistency}
        \label{fig:motivation_direction_consistency}
    \end{subfigure}\hfill
    \begin{subfigure}{0.245\textwidth}
        \centering
        \includegraphics[width=1\textwidth]{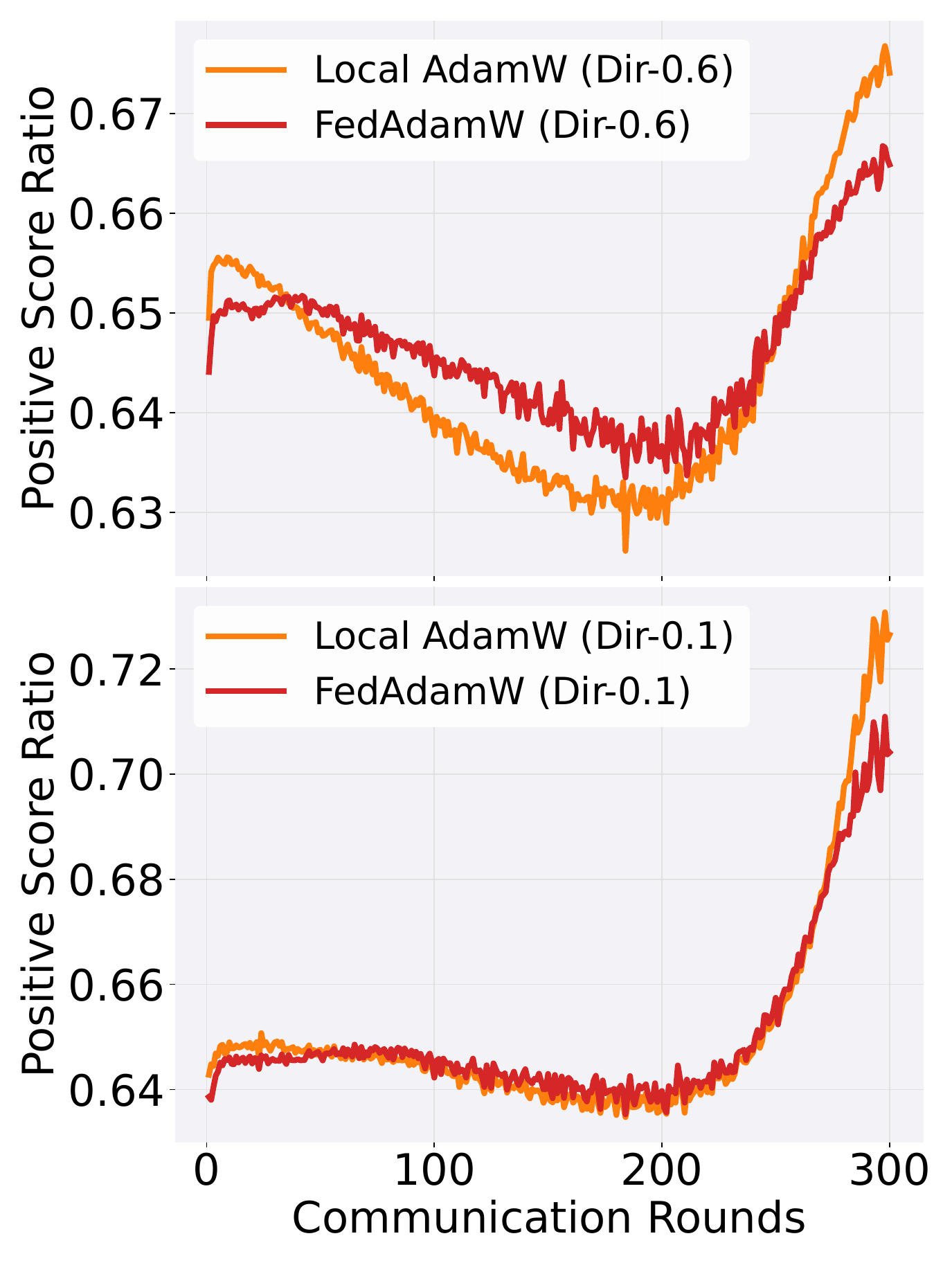}
        \vskip -0.2cm
        \caption{Positive Score Ratio}
        \label{fig:motivation_positive_score_ratio}
    \end{subfigure}
    \begin{subfigure}{0.245\textwidth}
        \centering
        \includegraphics[width=1\textwidth]{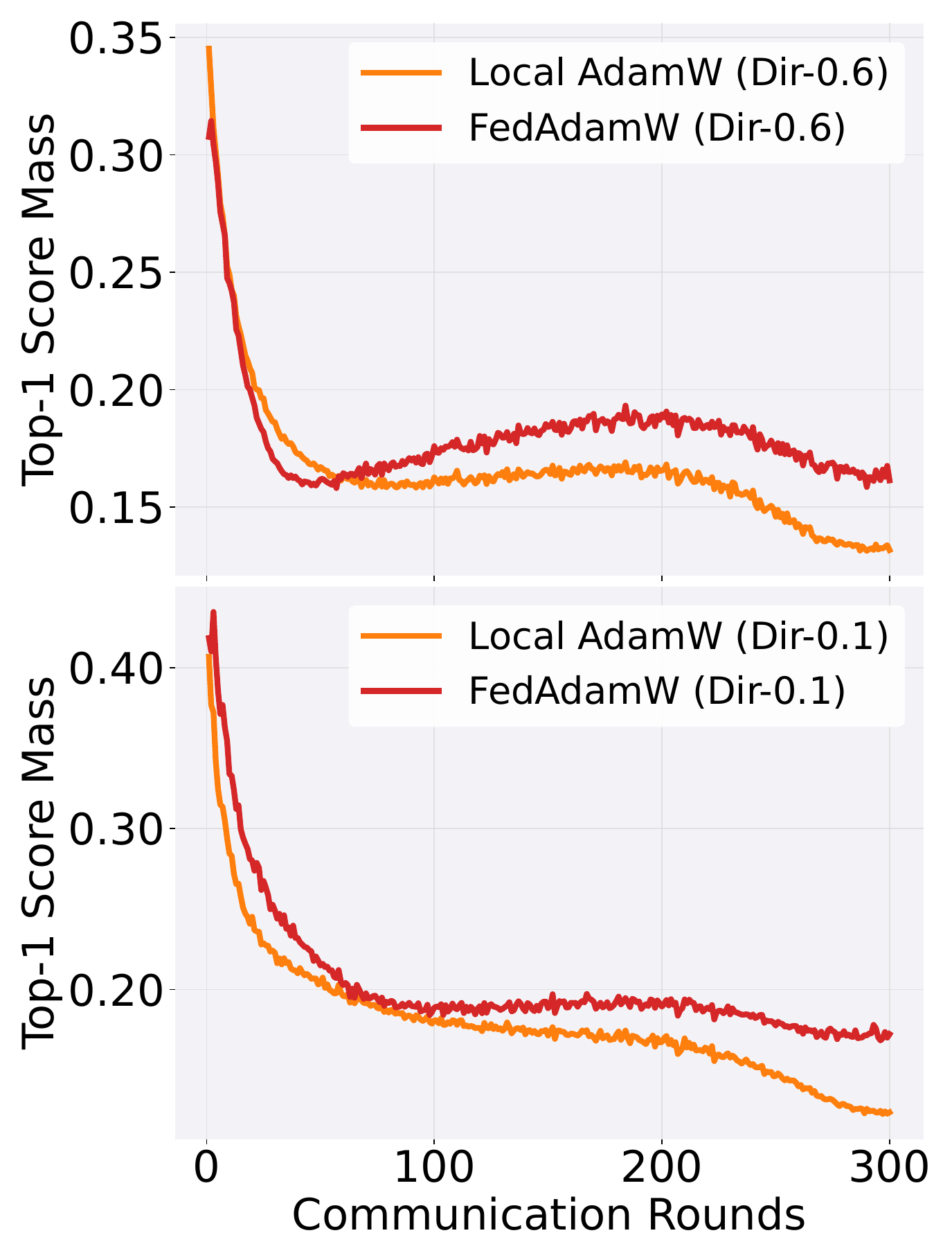}
        \vskip -0.2cm
        \caption{Top-1 Score Mass}
        \label{fig:motivation_top_1_score_mass}
    \end{subfigure}\hfill
    \vskip -0.2cm
    \caption{\textbf{Motivation for coordinate-level trust in federated AdamW training.} On CIFAR-100 with ViT-Tiny under 300 communication rounds, 100 clients, 10\% participation, and 50 local steps, (a) Local AdamW substantially outperforms Local SGD, confirming the effectiveness of adaptive local optimization for federated Transformer training; (b) FedAdamW improves round-level direction consistency over Local AdamW; however, (c) the positive score ratio and (d) the top-1 score mass show that coordinate trust remains highly non-uniform even after global correction.}
    \vskip -0.2cm
\end{figure*}

\paragraph{Federated Optimization under Data Heterogeneity.}
Federated learning reduces communication by allowing clients to perform multiple local updates before aggregation, as in FedAvg~\cite{mcmahan2017communication}. Under non-IID data, however, repeated local training can induce client drift and slow or destabilize convergence. Existing methods address this issue through local regularization, gradient correction, or improved aggregation. FedProx~\cite{li2020federated} uses a proximal term, SCAFFOLD~\cite{karimireddy2020scaffold} employs control variates, FedNova~\cite{wang2020tackling} normalizes heterogeneous local updates, and FedDyn~\cite{acar2021federated} and FedPD~\cite{zhang2021fedpd} improve local-global consistency through dynamic regularization or primal-dual formulations. Momentum- and lookahead-style methods such as SlowMo~\cite{wang2019slowmo}, FedADC~\cite{ozfatura2021fedadc}, FedCM~\cite{xu2021fedcm}, and FedACG~\cite{kim2024communication} further stabilize global progress using historical or aggregated update information. These methods mainly reduce mismatch at the client-update or communication-round level. In contrast, \method focuses on a finer-grained failure mode: even after round-level correction, different coordinates of a corrected local adaptive update can remain highly uneven in reliability under heterogeneous data.

\paragraph{Adaptive Optimization in Federated Learning.}
Adaptive optimizers such as Adam~\cite{kingma2014adam} and AdamW~\cite{loshchilov2017decoupled} are central to large-scale training because they provide coordinate-wise effective learning rates, which are particularly useful for Transformer models with heterogeneous optimization geometry across parameter blocks~\cite{zhang2024transformers, zhang2024adam}. In federated learning, early adaptive methods mainly focus on server-side adaptivity, such as the FedOpt family~\cite{reddi2020adaptive}, while MIME~\cite{karimireddy2020mime} shows that adapting centralized momentum or Adam to federated settings requires server-level statistics and correction. More recent methods study client-side adaptive optimization under data heterogeneity. FedLADA~\cite{sun2023efficient} amends local adaptive updates with a global offset-like correction, FAdamGC~\cite{chen2025gradient} injects drift compensation while respecting Adam's moment structure, and FedAdamW~\cite{liu2026fedadamw} stabilizes local AdamW through global-local alignment, second-moment synchronization, and decoupled weight decay. These methods show the importance of global correction and adaptive statistics, but they still apply the corrected adaptive direction densely and uniformly across coordinates. \method is complementary: it preserves the globally corrected AdamW direction while reallocating coordinate-wise update magnitudes according to global-aware trust.


\paragraph{Coordinate-Wise Trust Modulation in Centralized Optimization.}
Another line of work studies whether all coordinates of an optimizer update should be treated equally. Cautious optimizers~\cite{liang2024cautious} mask coordinates whose update direction conflicts with the current gradient, while MGUP~\cite{changmgup} softens this idea by assigning larger steps to reliable coordinates and smaller but non-zero steps to the remaining ones. More recently, Magma~\cite{joo2026surprising} extends update modulation to momentum-aligned stochastic masking at the block level. These methods show that update reliability can be highly non-uniform across coordinates or parameter blocks, and that selective modulation can improve optimization. However, they are developed for centralized stochastic optimization, where trust signals are computed from local optimizer states or local gradients. In contrast, federated training requires trust estimation to account for global information, client heterogeneity, and cross-client correction.

\section{Motivation: From Round-Level Alignment to Coordinate-Level Trust}
\label{sec:motivation}

Existing adaptive FL methods~\cite{sun2023efficient, liu2026fedadamw, chen2025gradient, liu2025fedmuon} have shown that global correction is important for mitigating heterogeneity-induced mismatch. In particular, FedAdamW~\cite{liu2026fedadamw} stabilizes local AdamW through global-local alignment and second-moment synchronization. However, this round-level correction leaves a finer question unanswered: \textbf{\textit{once a globally corrected AdamW direction is formed, is it still sufficient to apply it densely and uniformly across all coordinates under non-IID data?}} We answer this question by revisiting federated AdamW training from a coordinate-wise perspective.

\paragraph{Observation 1: AdamW is promising yet fragile in federated Transformer training.}
Figure~\ref{fig:motivation_adamw_vs_sgd} compares Local SGD and Local AdamW. Consistent with prior work~\cite{chen2025gradient, liu2026fedadamw}, Local AdamW achieves substantially faster progress than Local SGD, confirming that AdamW-style adaptive local optimization is worth preserving for federated Transformer training. However, its strong and dense local adaptivity can also amplify heterogeneity-induced mismatch: under non-IID data, each client repeatedly updates toward its private distribution, which may bias the local trajectory toward client-specific directions. Thus, the key issue is not whether local AdamW should be used, but how its adaptive direction should be corrected and trusted.

\paragraph{Observation 2: Round-level alignment is necessary but not sufficient.}
Figure~\ref{fig:motivation_direction_consistency} compares Local AdamW and FedAdamW using the \textit{direction consistency} metric 
{\small
\begin{equation}
    \operatorname{DC}^r = \frac{1}{S}\sum_{i \in S_r}  \cos(\Delta_i^r, \bar{\Delta}^r) = \frac{1}{S}\sum_{i \in S_r} \cos \left(\Delta_i^r, \frac{1}{s}\sum_{j \in S_r} \Delta_j^r\right)
\end{equation}
}
where $\Delta_i^r$ is the accumulated local update of client $i$ in round $r$, and $\bar{\Delta}^r$ is the aggregated round-level update. 

FedAdamW improves direction consistency over Local AdamW, confirming the value of global-local alignment. Nevertheless, residual inconsistency remains non-negligible, especially under stronger heterogeneity. This suggests that even after round-level correction, the corrected AdamW updates may still contain unreliable components that differ across clients. However, round-level direction consistency only measures the aggregate update direction; it does not reveal which parts of the corrected update remain unreliable. This motivates a finer coordinate-wise diagnosis.

\paragraph{Observation 3: Coordinate trust remains highly non-uniform even after global correction.}
To inspect the corrected AdamW direction at a finer granularity, we examine the coordinate-wise trust score
\begin{equation}
    s_{i,j}^{r,k}=u_{i,j}^{r,k}g_{i,j}^{r,k}
\end{equation}
where $u_{i,j}^{r,k}$ is the $j$-th coordinate of the globally corrected AdamW direction on client $i$ at local step $k$ of round $r$, and $g_{i,j}^{r,k}$ is the corresponding stochastic gradient coordinate. A positive score indicates local alignment between the corrected direction and the current gradient, while a larger positive value suggests stronger coordinate-level support. 

We summarize the score distribution using two statistics over the observed client-step pairs $\mathcal{O}_r$: the \textit{positive score ratio}, which measures the fraction of coordinates with $s_{i,j}^{r,k}>0$, and the \textit{top-$p$ positive score mass}, which measures how much positive support is concentrated in the top-scoring coordinates. As shown in Figure~\ref{fig:motivation_positive_score_ratio} and~\ref{fig:motivation_top_1_score_mass}, many coordinates are positively aligned, but the positive support is highly concentrated in a small subset of coordinates. This concentration remains pronounced even under FedAdamW, indicating that global correction reduces round-level mismatch but does not eliminate coordinate-level reliability imbalance.

These observations reveal a fine-grained failure mode, which we call \textit{\textbf{coordinate trust mismatch}}: even after round-level global-local correction, different coordinates of a corrected AdamW update remain highly uneven in reliability under non-IID data. Some coordinates are jointly supported by the current stochastic gradient and the global correction signal, whereas others are weakly supported or dominated by client-specific noise. Applying the corrected direction uniformly across all coordinates may therefore over-trust a broad tail of unreliable coordinates.

This motivates \textbf{\method}, which complements round-level global-local correction with coordinate-level trust allocation. Instead of replacing the corrected AdamW direction, \method assigns larger update magnitudes to high-trust coordinates and smaller but non-zero magnitudes to the remaining coordinates, thereby preserving adaptive local optimization while reducing the influence of unreliable coordinate components.
\section{Methodology: FedACT}


\subsection{From Corrected Adaptive Updates to Coordinate Trust}
\label{subsec:fedact_core}

We consider the standard federated optimization problem
\begin{equation}
    f(x)=\frac{1}{N}\sum_{i=1}^{N} f_i(x), \quad f_i(x)=\mathbb{E}_{\xi_i\sim\mathcal D_i}[F_i(x;\xi_i)]
    \label{eq:objective}
\end{equation}
where $N$ is the number of clients, $f_i$ is the local objective on client $i$, $\mathcal D_i$ is the local data distribution, and $x\in\mathbb R^d$ is the model parameter. In each communication round $r$, the server samples clients $S_r$, broadcasts the global model and correction statistics, and each selected client performs $K$ local updates before aggregation.

At local step $k$ on client $i$, let $g_i^{r,k}$ denote the stochastic gradient. Following AdamW-style local optimization, the client maintains
\begin{equation}
\begin{aligned}
    m_i^{r,k} &= \beta_1 m_i^{r,k-1} + (1-\beta_1) g_i^{r,k}, \\
    v_i^{r,k} &= \beta_2 v_i^{r,k-1} + (1-\beta_2)(g_i^{r,k}\odot g_i^{r,k}),
\end{aligned}
    \label{eq:adam_moments}
\end{equation}
with bias-corrected moments $\hat m_i^{r,k}$ and $\hat v_i^{r,k}$. The local AdamW direction is
\begin{equation}
    u_{i,\mathrm{loc}}^{r,k} = \frac{\hat m_i^{r,k}}{\sqrt{\hat v_i^{r,k}}+\epsilon}.
    \label{eq:local_adaptive_direction}
\end{equation}


Under non-IID data, this direction can be biased toward the client's private distribution. We therefore incorporate the server-provided global correction signal $\Delta_G^{r-1}$ and form the corrected AdamW direction
\begin{equation}
    u_i^{r,k} = (1-\rho)\,u_{i,\mathrm{loc}}^{r,k} + \rho\,\Delta_G^{r-1}, \quad \rho\in[0,1],
    \label{eq:global_aware_direction}
\end{equation}
where $\rho$ controls the strength of global guidance.

To estimate coordinate-level reliability, \method computes the trust score
\begin{equation}
    s_i^{r,k} = u_i^{r,k}\odot g_i^{r,k}.
    \label{eq:trust_score}
\end{equation}
A large positive score indicates that the globally corrected AdamW direction is well aligned with the current stochastic gradient. Since the score is computed from the corrected direction, \method favors coordinates that are jointly supported by local adaptive information and the federated global correction signal.


\subsection{Global-Aware Coordinate Trust Modulation---ACT}
\label{subsec:trust_modulation}

Given $s_i^{r,k}\in\mathbb R^d$, let $I_{\mathrm{top}}(s_i^{r,k})$ denote the indices of the top $\lfloor \tau d\rfloor$ entries, where $\tau\in(0,1)$ is the trust ratio. \method constructs
\begin{equation}
    \phi_{i,j}^{r,k} =
    \begin{cases}
        \alpha, & j\in I_{\mathrm{top}}(s_i^{r,k}),\\
        \gamma, & j\notin I_{\mathrm{top}}(s_i^{r,k}),
    \end{cases}
    \qquad \alpha>\gamma>0.
    \label{eq:fedact_modulation}
\end{equation}
The resulting local update is
\begin{equation}
    x_i^{r,k+1} = x_i^{r,k} - \eta\big(\phi_i^{r,k}\odot u_i^{r,k}\big) - \eta\lambda x_i^{r,k},
    \label{eq:fedact_update}
\end{equation}
where $\eta$ is the local learning rate and $\lambda$ is the decoupled weight decay coefficient.

Thus, \method preserves the dense globally corrected AdamW direction, but reallocates its coordinate-wise magnitudes: high-trust coordinates receive stronger updates, while the remaining coordinates are softly attenuated rather than discarded. This avoids the instability of hard masking while reducing the influence of weakly supported coordinates. Unless otherwise stated, we use the MGUP-style default $\alpha=1/\tau$ and $\gamma=\tau$.


\paragraph{Special cases.} When $\rho=0$, \method reduces to a local-only coordinate trust policy. When $\phi_i^{r,k}=\mathbf{1}$, it reduces to a dense corrected AdamW update. If $\alpha=\gamma$, the modulation becomes a uniform rescaling that can be absorbed into the learning rate. Algorithm~\ref{alg:algorithm_fedact} summarizes the full procedure.


\begin{algorithm}[!ht]
	\caption{\texttt{\method} Algorithm}
    \label{alg:algorithm_fedact}
	\begin{algorithmic}[1]
		\STATE {\textbf{Initial} model $\boldsymbol{x}^0$, $\beta_1=0.9, \beta_2=0.999, \epsilon=10^{-8}$, time step $t \leftarrow 0$, global correction $\boldsymbol{\Delta}_G^0 \leftarrow \boldsymbol{0}$, the number of all clients $N$, selected clients $S$, weight decay $\lambda$, \texttt{ACT} ratio $\tau$.}
		\FOR{$r = 1, \dots, R$}
		\FOR{each selected client $i \in \{1, \dots, S\}$ in parallel}
		\STATE $\boldsymbol{x}_{i}^{r,0} \gets \boldsymbol{x}^r$, $\boldsymbol{m}^{r,0}_{i} \gets \boldsymbol{0}$, $\boldsymbol{v}^{r,0}_{i} \gets \boldsymbol{\bar{v}}^{r}$;
		\FOR{$k = 1, \dots, K$}
		\STATE $t \gets t+1$;
		\STATE $\boldsymbol{g}^{r,k}_i \gets \nabla f_i(\boldsymbol{x}_i^{r,k-1} ; \xi_i)$;
		\STATE $\boldsymbol{m}^{r,k}_i = \beta_1 \boldsymbol{m}^{r,k-1}_i + \left(1-\beta_1\right)\boldsymbol{g}^{r,k}_i$;
		\STATE $\boldsymbol{v}^{r,k}_i = \beta_2 \boldsymbol{v}^{r,k-1}_i + \left(1-\beta_2\right)\boldsymbol{g}^{r,k}_i \odot \boldsymbol{g}^{r,k}_i$;
		\STATE \textbf{Bias correction}
		\STATE $\hat{\boldsymbol{m}}^{r,k}_i = \boldsymbol{m}^{r,k}_i /\left(1-\beta_1^{k}\right)$;
		\STATE $\hat{\boldsymbol{v}}^{r,k}_i = \boldsymbol{v}^{r,k}_i /\left(1-\beta_2^{t}\right)$;
		\STATE $\vartheta_{i}^{r,k} = 1 / ( \sqrt{\hat{\boldsymbol{v}}_{i}^{r,k}}+\epsilon)$;
		\STATE \textbf{Global-Aware Coordinate Trust Modulation}
		\STATE $\boldsymbol{u}_i^{r,k} \gets (1-\rho)\hat{\boldsymbol{m}}^{r,k}_i \odot \vartheta_{i}^{r,k} + \rho \boldsymbol{\Delta}_G^{r-1}$;
		\STATE $\boldsymbol{s}_i^{r,k} \gets \boldsymbol{u}_i^{r,k} \odot \boldsymbol{g}_i^{r,k}$;
		\STATE $\boldsymbol{\phi}_i^{r,k} \gets \texttt{ACT}(\boldsymbol{s}_i^{r,k}, \tau)$;

	 	\STATE \textbf{Update model parameters }\\
    	$\boldsymbol{x}_i^{r,k} = (1-\eta\lambda)\boldsymbol{x}_i^{r,k-1} - \eta(\boldsymbol{\phi}_i^{r,k} \odot \boldsymbol{u}_i^{r,k})$;
		\ENDFOR
		\STATE Communicate $( \boldsymbol{x}^{r,K}_i-\boldsymbol{x}^{r,0}_i, \boldsymbol{\bar{v}}_i=\boldsymbol{v}^{r,K}_i )$ to Server;
		\ENDFOR
        \STATE $\boldsymbol{\Delta}_G^r = \frac{-1}{SK\eta} \sum_{i=1}^S (\boldsymbol{x}^{r,K}_i-\boldsymbol{x}^{r,0}_i)$;
		\STATE $\boldsymbol{x}^{r+1} = \boldsymbol{x}^{r} + \frac{1}{S} \sum_{i=1}^S (\boldsymbol{x}^{r,K}_i-\boldsymbol{x}^{r,0}_i)$;
		\STATE $\boldsymbol{\bar{v}}^{r+1} = \frac{1}{S} \sum_{i=1}^S \boldsymbol{\bar{v}}_i$;
		\STATE Communicate $(\boldsymbol{x}^{r+1}, \boldsymbol{\bar{v}}^{r+1}, \boldsymbol{\Delta}_G^r )$ to Clients.
		\ENDFOR
	\end{algorithmic}
\end{algorithm}


\section{Convergence Analysis}
\label{sec:convergence}

We provide a nonconvex convergence analysis for \method, with the full proof deferred to the Appendix. Following standard analyses of local adaptive federated optimization, we consider $\lambda=0$ and full client participation. Let $T=RK$ be the total number of local update steps. We analyze the virtual averaged trajectory
{\small
\begin{equation}
    \bar{x}_t = \frac{1}{N}\sum_{i=1}^{N} x_{i,t}, \quad \bar{p}_t = \frac{1}{N}\sum_{i=1}^{N} p_{i,t}, \quad
    \bar{x}_{t+1} = \bar{x}_t - \eta \bar{p}_t ,
\label{eq:virtual_average}
\end{equation}
}where $p_{i,t}=\phi_{i,t}\odot u_{i,t}$ is the effective update direction on client $i$. The corrected AdamW direction is
\begin{equation}
    u_{i,t}=(1-\rho)H_t g_{i,t}+\rho \Delta_t ,
\label{eq:corrected_direction_theory}
\end{equation}
Here $H_t$ is the synchronized diagonal adaptive scaling matrix and $\Delta_t$ is the global correction signal.

We assume the global objective is lower bounded, each local objective is $L$-smooth, stochastic gradients are unbiased with bounded variance $\sigma^2$, client heterogeneity is bounded by $\zeta^2$, and stochastic gradients are uniformly bounded. We also use the predictable-preconditioner convention, where $H_t$ is formed before sampling current stochastic gradients and satisfies $\mu I\preceq H_t\preceq M I$.

\begin{theorem}[Convergence of FedACT]
\label{thm:fedact_convergence}
Suppose the above assumptions hold. Let $\lambda=0$ and let $T=RK$ be the total number of local update steps. Choose $T$-dependent parameters $\rho_T$ and $\tau_T$ such that
$0<\rho_T<\tau_T\le 1$, $\rho_T=\mathcal{O}(T^{-1/4})$, and $1-\tau_T=\mathcal{O}(T^{-1/4})$, and assume that $\tau_T-\rho_T$ is bounded away from zero. For
$\tau_T\le \phi_{i,t,j}\le 1/\tau_T$, if the stepsize satisfies the conditions specified in Appendix and $\eta=\Theta(1/\sqrt{T})$, then
\begin{equation}
    \frac{1}{T}\sum_{t=0}^{T-1}
    \mathbb{E}\left\|
    \nabla f(\bar{x}_t)
    \right\|^2
    =
    \mathcal{O}\left(\frac{1}{\sqrt{T}}\right)
    =
    \mathcal{O}\left(\frac{1}{\sqrt{RK}}\right).
\label{eq:main_convergence_bound}
\end{equation}
\end{theorem}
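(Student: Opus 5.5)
The approach is the standard descent-lemma analysis of the virtual averaged iterate $\bar{x}_t$ in \eqref{eq:virtual_average}. The coordinate modulation $\phi_{i,t}$ and the global correction $\rho\Delta_t$ are treated as perturbations of a preconditioned local-SGD step $\eta H_t \bar{g}_t$.

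\textbf{Step 1: one-step descent.} By $L$-smoothness of $f$,
\begin{equation*}
\mathbb{E} f(\bar{x}_{t+1}) \le \mathbb{E} f(\bar{x}_t) - \eta\,\mathbb{E}\langle \nabla f(\bar{x}_t), \bar{p}_t\rangle + \tfrac{L\eta^2}{2}\mathbb{E}\|\bar{p}_t\|^2 .
\end{equation*}
We then decompose
\begin{equation*}
p_{i,t}=\phi_{i,t}\odot u_{i,t} = (1-\rho_T)H_t g_{i,t} + (1-\rho_T)(\phi_{i,t}-\mathbf{1})\odot H_t g_{i,t} + \rho_T\,\phi_{i,t}\odot\Delta_t .
\end{equation*}

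\textbf{Step 2: bound each term.}
\begin{itemize}
\item \emph{Main term.} $H_t$ is predictable and $\mu I\preceq H_t$, so the first term contributes $-\eta(1-\rho_T)\mu\,\mathbb{E}\|\nabla f(\bar{x}_t)\|^2$. This holds up to a client-drift error $\eta(1-\rho_T)M L\cdot\frac1N\sum_i\mathbb{E}\|x_{i,t}-\bar{x}_t\|$, which is handled by Young's inequality.
\item \emph{Modulation term.} Since $\tau_T\le\phi_{i,t,j}\le 1/\tau_T$, we have $|\phi_{i,t,j}-1|\le 1/\tau_T-1=\mathcal{O}(1-\tau_T)=\mathcal{O}(T^{-1/4})$. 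Using the uniform gradient bound $G$, the inner product with $\nabla f$ is at most $\eta\,\mathcal{O}(T^{-1/4})MG^2$.
\item \emph{Correction term.} $\Delta_t$ is an average of past effective updates. Its entries are bounded by $\max\phi\cdot(\|H\| G+\|\Delta\|)$, which closes by induction to give $\|\Delta_t\|_\infty\lesssim MG/(\tau_T-\rho_T)$. This is where the assumption that $\tau_T-\rho_T$ is bounded away from zero enters. The term is then $\eta\rho_T\,\mathcal{O}(G^2)=\eta\,\mathcal{O}(T^{-1/4})$.
\end{itemize}

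\textbf{Step 3: drift.} The quadratic term satisfies $\mathbb{E}\|\bar{p}_t\|^2=\mathcal{O}(1)$ from the same uniform bounds. Between synchronizations, $\|x_{i,t}-\bar{x}_t\|\le 2\eta K\max\|p\|$. To keep the rate, we want the sharper estimate
\begin{equation*}
\frac1N\sum_i\mathbb{E}\|x_{i,t}-\bar{x}_t\|^2\lesssim \eta^2K^2(\sigma^2+\zeta^2+G^2).
\end{equation*}
This is the usual bound derived from the recursion across local steps, under the appendix stepsize condition $\eta\lesssim 1/(LKM)$.

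\textbf{Step 4: telescope.} Summing over $t$ and dividing by $\eta T(1-\rho_T)\mu/2$ gives
\begin{equation*}
\frac1T\sum_t\mathbb{E}\|\nabla f(\bar{x}_t)\|^2 \lesssim \frac{f(x^0)-f^\*}{\eta T} + L\eta + L^2\eta^2K^2 + \frac{\mathcal{O}(T^{-1/4})\,G^2}{\eta}\cdot\eta .
\end{equation*}
With $\eta=\Theta(1/\sqrt{T})$ the first two terms are $\mathcal{O}(1/\sqrt{T})$. The drift term is of lower order when $K$ is not too large.

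\textbf{Main obstacle.} The perturbation terms from $\phi$ and $\rho$ enter linearly. Bounded naively, each contributes $\mathcal{O}(T^{-1/4})$, not $\mathcal{O}(T^{-1/2})$. The fix is to bound these cross terms by Young's inequality:
\begin{equation*}
\langle\nabla f, e\rangle\le\tfrac{\mu}{4}\|\nabla f\|^2+\tfrac1\mu\|e\|^2 .
\end{equation*}
Since $\|e\|^2=\mathcal{O}((1-\tau_T)^2+\rho_T^2)G^2=\mathcal{O}(T^{-1/2})$, these errors are squared, and this is precisely why the scalings $T^{-1/4}$ are chosen. The part of $\frac{\mu}{4}\|\nabla f\|^2$ is absorbed into the main descent term. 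The remaining delicate point is the self-referential bound on $\Delta_t$, which I will establish first by induction over rounds.
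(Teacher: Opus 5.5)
Your proposal is correct and follows essentially the same route as the paper. Both arguments apply a descent lemma to $\bar{x}_t$; prove by induction that $\|p_{i,t}\|\le P_{\rho,\tau}=(1-\rho)MG/(\tau-\rho)$ (keep the $(1-\rho)$ and $\rho$ weights in that induction, otherwise it does not close); control the modulation error deterministically from $\tau\le\phi_{i,t,j}\le 1/\tau$; and use Young's inequality so the $\rho_T$ and $1-\tau_T$ errors enter squared and leave an $\mathcal{O}(T^{-1/2})$ neighborhood. The differences are cosmetic: the paper writes $\phi=\tau\mathbf{1}+a$ with $0\le a_j\le 1/\tau-\tau$ rather than centering at $\mathbf{1}$, which gives descent coefficient $\tau(1-\rho)\mu/4$ and residual $(1-\tau^2)P_{\rho,\tau}$. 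Your sharper drift recursion in Step~3 is also unnecessary, since the crude bound $D_t\le 4\eta^2K^2P_{\rho,\tau}^2$ already suffices once $P_{\rho_T,\tau_T}=\mathcal{O}(1)$.
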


Theorem~\ref{thm:fedact_convergence} shows that FedACT achieves the standard nonconvex convergence rate $\mathcal{O}(1/\sqrt{RK})$, which is comparable to classical distributed SGD under the same smooth nonconvex setting.
\begin{table*}[!ht]
    \vskip -0.2cm
    \centering
    \caption{Accuracy (\%) of all methods on ViT-Tiny and Swin-Lite over 300 communication rounds under different datasets and heterogeneity settings (100 clients, 10\% participation, batch size 50, $K=50$).}
    \label{tab:main_results_vit_swin}
    \vskip -0.2cm
    \resizebox{1.0\linewidth}{!}{

{
\scriptsize
\setlength{\tabcolsep}{2.5pt}
\begin{tabular}{lcccccccccccccccccc}
  \toprule
  \multirow{3}{*}{\textbf{Method}}
  & \multicolumn{9}{c}{\textbf{ViT-Tiny}}
  & \multicolumn{9}{c}{\textbf{Swin-Lite}} \\
  \cmidrule(lr){2-10} \cmidrule(lr){11-19}
  & \multicolumn{3}{c}{\textbf{CIFAR-10}}
  & \multicolumn{3}{c}{\textbf{CIFAR-100}}
  & \multicolumn{3}{c}{\textbf{Tiny-ImageNet}}
  & \multicolumn{3}{c}{\textbf{CIFAR-10}}
  & \multicolumn{3}{c}{\textbf{CIFAR-100}}
  & \multicolumn{3}{c}{\textbf{Tiny-ImageNet}} \\
  \cmidrule(lr){2-4} \cmidrule(lr){5-7} \cmidrule(lr){8-10}
  \cmidrule(lr){11-13} \cmidrule(lr){14-16} \cmidrule(lr){17-19}
  & Dir-0.6 & Dir-0.3 & Dir-0.1
  & Dir-0.6 & Dir-0.3 & Dir-0.1
  & Dir-0.6 & Dir-0.3 & Dir-0.1
  & Dir-0.6 & Dir-0.3 & Dir-0.1
  & Dir-0.6 & Dir-0.3 & Dir-0.1
  & Dir-0.6 & Dir-0.3 & Dir-0.1 \\
  \midrule
  FedAvg
    & 58.24 & 55.90 & 44.48
    & 33.24 & 32.99 & 27.77
    & 18.00 & 16.90 & 15.21
    & 53.89 & 51.63 & 39.25
    & 31.99 & 29.86 & 22.91
    & 13.99 & 13.45 & 10.72 \\
  SCAFFOLD
    & 57.47 & 54.71 & 45.10
    & 32.90 & 32.61 & 27.65
    & 17.84 & 17.64 & 15.41
    & 53.89 & 51.47 & 39.79
    & 31.70 & 29.46 & 23.15
    & 14.01 & 13.07 & 10.74 \\
  FedProx
    & 57.28 & 54.05 & 46.13
    & 32.78 & 32.49 & 27.82
    & 18.00 & 16.96 & 15.35
    & 54.08 & 51.78 & 38.94
    & 31.91 & 29.74 & 23.29
    & 14.04 & 13.35 & 10.71 \\
  LocalAdam
    & 63.26 & 59.89 & 49.01
    & 38.78 & 36.72 & 29.64
    & 19.34 & 17.59 & 15.35
    & 71.04 & 67.28 & 56.92
    & 47.49 & 46.36 & 40.22
    & 27.54 & 25.25 & 21.79 \\
  FedAdam
    & 53.24 & 51.97 & 43.57
    & 30.17 & 30.12 & 25.52
    & 16.80 & 16.62 & 14.50
    & 51.90 & 50.68 & 37.39
    & 27.66 & 25.39 & 18.65
    & 11.47 & 10.49 & 7.65 \\
  FedLADA
    & 65.69 & 61.64 & 50.32
    & 37.68 & 37.01 & 33.87
    & 23.39 & 22.35 & 20.33
    & 59.35 & 53.06 & 39.24
    & 34.26 & 31.03 & 22.60
    & 15.74 & 15.04 & 10.55 \\
  LocalAdamW
    & 70.21 & 68.41 & 59.70
    & 41.34 & 40.62 & 37.62
    & 26.07 & 25.19 & 23.59
    & 70.47 & 67.54 & 56.19
    & 47.46 & 46.32 & 40.22
    & 27.25 & 25.62 & 21.34 \\
  FedAdamW
    & 73.06 & 72.39 & 63.70
    & 41.76 & 41.54 & 39.08
    & 26.67 & 25.18 & 22.82
    & 73.50 & 68.95 & 58.54
    & 49.79 & 48.59 & 42.75
    & 31.32 & 29.10 & 24.96 \\
  \rowcolor{mygray}
  \textbf{Ours}
    & \textbf{74.04} & \textbf{73.39} & \textbf{68.05}
    & \textbf{43.58} & \textbf{42.83} & \textbf{42.03}
    & \textbf{27.18} & \textbf{26.00} & \textbf{24.66}
    & \textbf{78.08} & \textbf{74.57} & \textbf{61.94}
    & \textbf{52.23} & \textbf{51.40} & \textbf{48.50}
    & \textbf{35.62} & \textbf{33.64} & \textbf{30.15} \\
  \bottomrule
\end{tabular}
}
    }
    \vskip -0.2cm
\end{table*}

\section{Experiments}



\subsection{Experimental Setup}

\textbf{Datasets.}
We evaluate \method on vision and language tasks. For image classification, we use CIFAR-10, CIFAR-100~\cite{krizhevsky2009learning}, and Tiny ImageNet~\cite{le2015tiny}, with non-IID client data simulated by Dirichlet partitioning~\cite{hsu2019measuring}. For LLM tasks, we use C4-en~\cite{raffel2020exploring} for pre-training, Alpaca-GPT4 for federated instruction tuning (FedIT), and HH-RLHF for federated value alignment (FedVA).

\textbf{Model Architectures.}
For vision tasks, we use ResNet-18~\cite{he2016deep} as a representative CNN, and lightweight Vision Transformer~\cite{dosovitskiy2020image} and Swin Transformer~\cite{liu2021swin} variants for federated Transformer training. For LLM pre-training, we use Llama2~\cite{touvron2023llama} at 60M/130M/250M scales. For LLM fine-tuning, we use LoRA~\cite{hu2022lora} with Llama2-7B for FedIT and a Wizard-Vicuna-style 7B model for FedVA.


\textbf{Comparison Methods.}
We compare \method with FedAvg~\cite{mcmahan2017communication}, FedProx~\cite{li2020federated}, SCAFFOLD~\cite{karimireddy2020scaffold}, LocalAdam, FedAdam~\cite{reddi2020adaptive}, FedLADA~\cite{sun2023efficient}, LocalAdamW, and FedAdamW~\cite{liu2026fedadamw}, covering first-order, drift-corrected, server-side adaptive, local adaptive, and corrected local AdamW training.

More training details are provided in the Appendix due to space limitations.

\subsection{Experimental Results}


\paragraph{Results on Vision Transformers.} Table~\ref{tab:main_results_vit_swin} reports the results on vision Transformer training. \method achieves the best accuracy across all settings, improving over FedAdamW by about 3.1 percentage points on average over the 12 CIFAR settings. The gains are particularly clear under strong heterogeneity: under Dir-0.1, \method improves over FedAdamW by 4.35 points on ViT-Tiny/CIFAR-10, 2.95 points on ViT-Tiny/CIFAR-100, 3.40 points on Swin-Lite/CIFAR-10, and 5.75 points on Swin-Lite/CIFAR-100. On Tiny-ImageNet, the improvement is especially pronounced for Swin-Lite, where \method outperforms FedAdamW by 4.30/4.54/5.19 points under Dir-0.6/0.3/0.1, respectively. Figure~\ref{fig:acc_vit_swin} further shows that \method converges faster and reaches higher final accuracy than FedAdamW, especially under stronger heterogeneity. These results support our hypothesis that coordinate-level trust allocation is particularly useful for federated Transformer training, where dense local AdamW updates remain highly non-uniform in coordinate reliability under heterogeneous data.

\begin{figure*}[!t]
    \centering
    \begin{subfigure}{0.248\textwidth}
        \centering
        \includegraphics[width=1\textwidth]{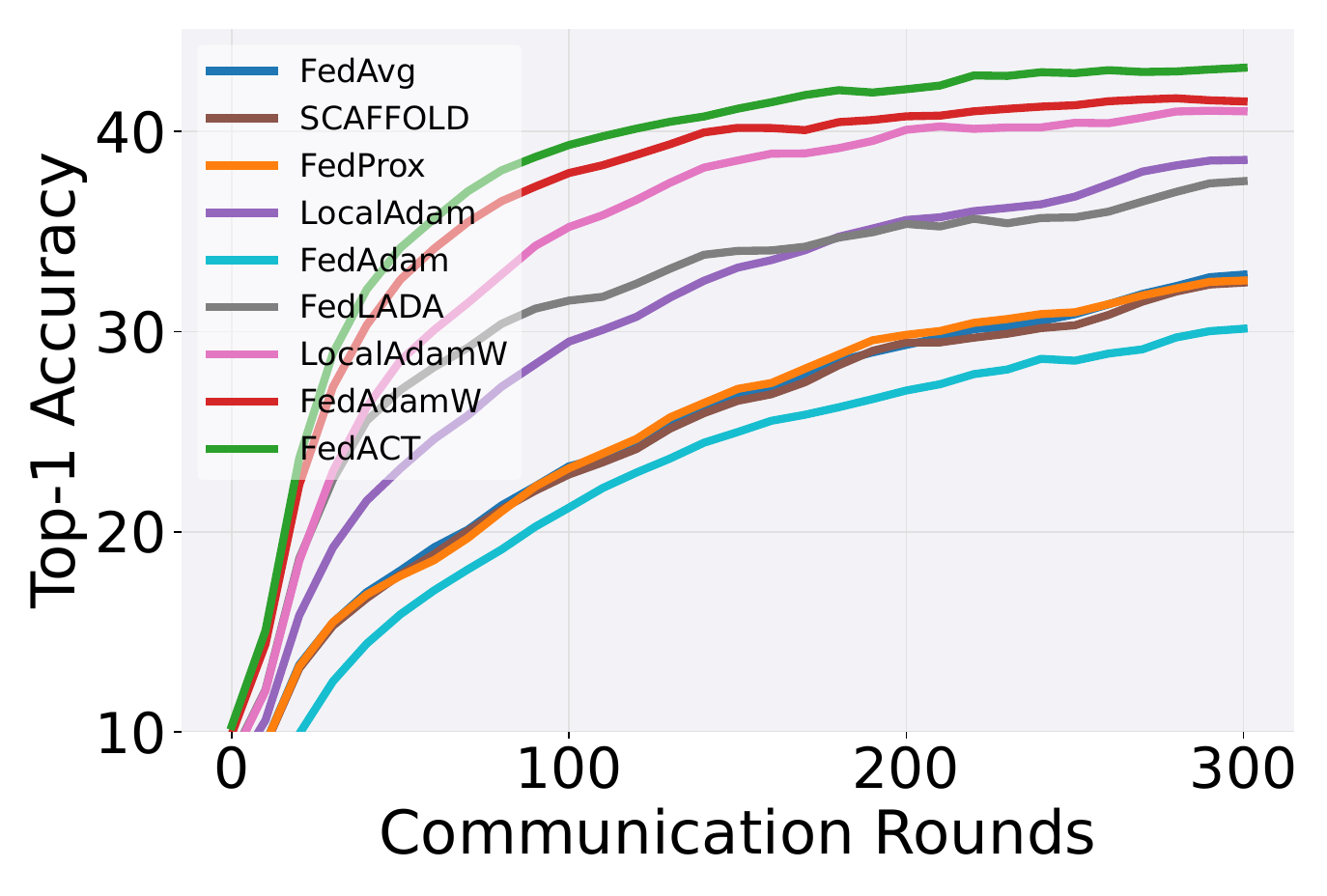}
        \vskip -0.2cm
        \caption{CIFAR-100, Dir-0.6}
    \end{subfigure}\hfill
    \begin{subfigure}{0.248\textwidth}
        \centering
        \includegraphics[width=1\textwidth]{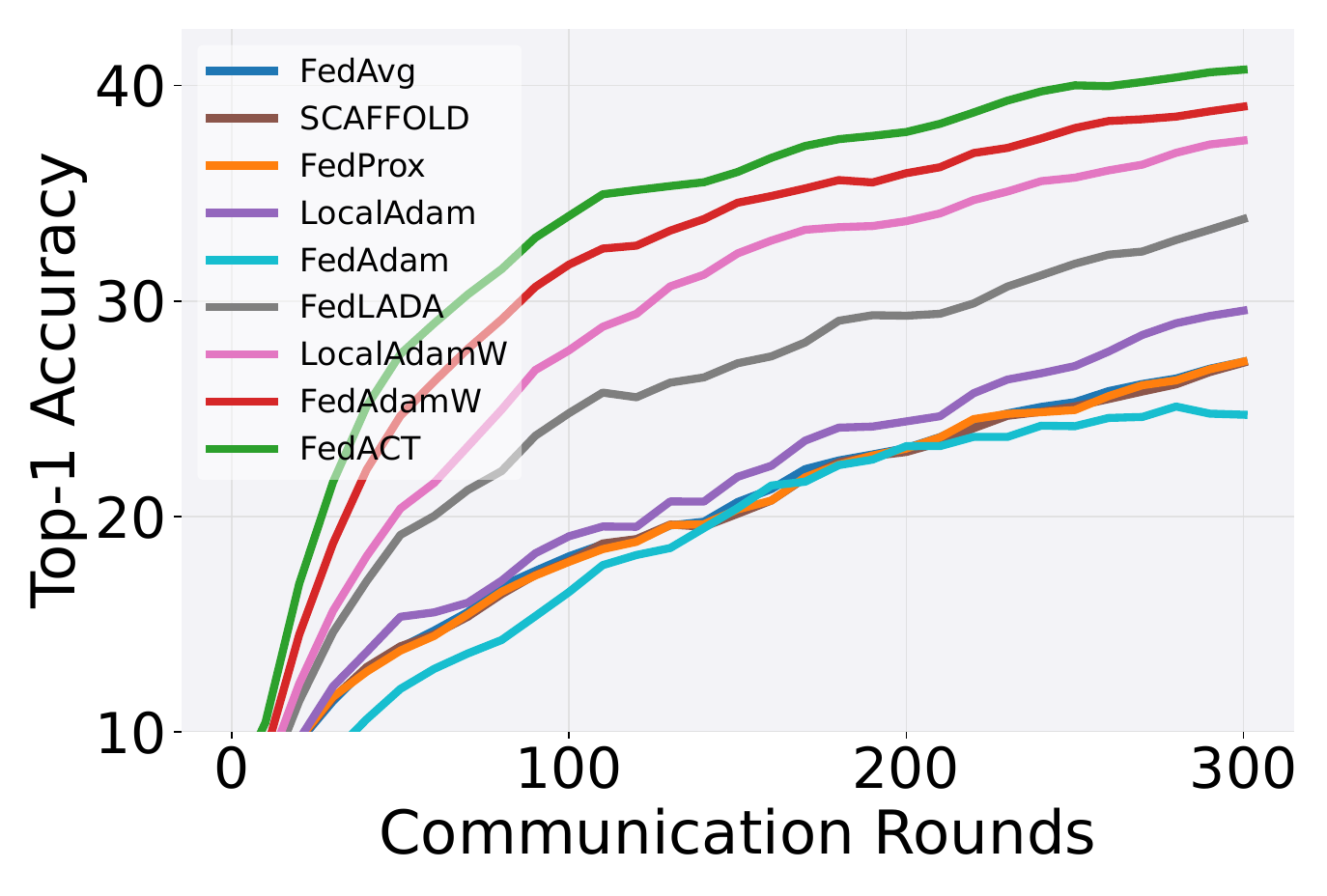}
        \vskip -0.2cm
        \caption{CIFAR-100, Dir-0.1}
    \end{subfigure}\hfill
    \begin{subfigure}{0.248\textwidth}
        \centering
        \includegraphics[width=1\textwidth]{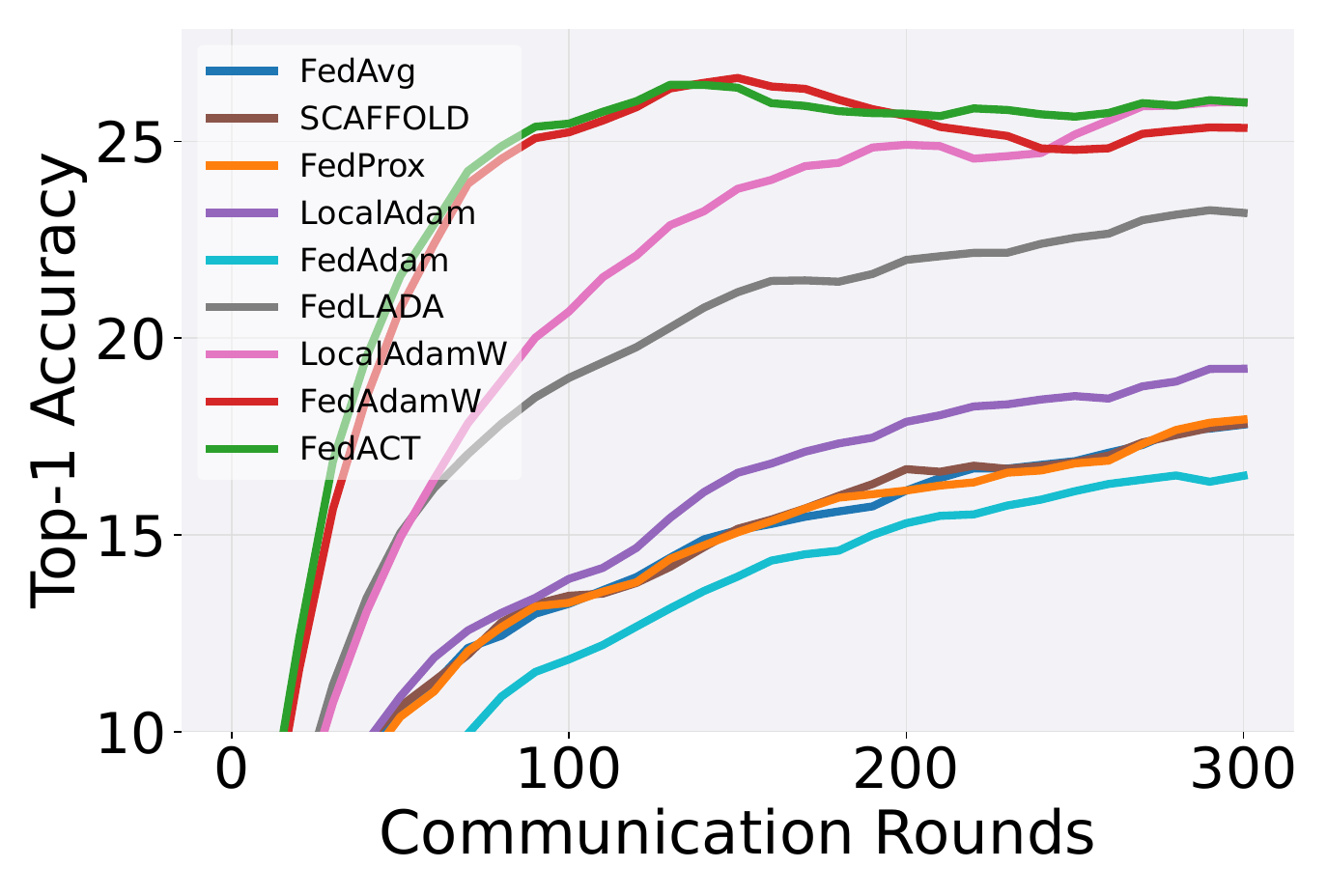}
        \vskip -0.2cm
        \caption{Tiny-ImageNet, Dir-0.6}
    \end{subfigure}
    \begin{subfigure}{0.248\textwidth}
        \centering
        \includegraphics[width=1\textwidth]{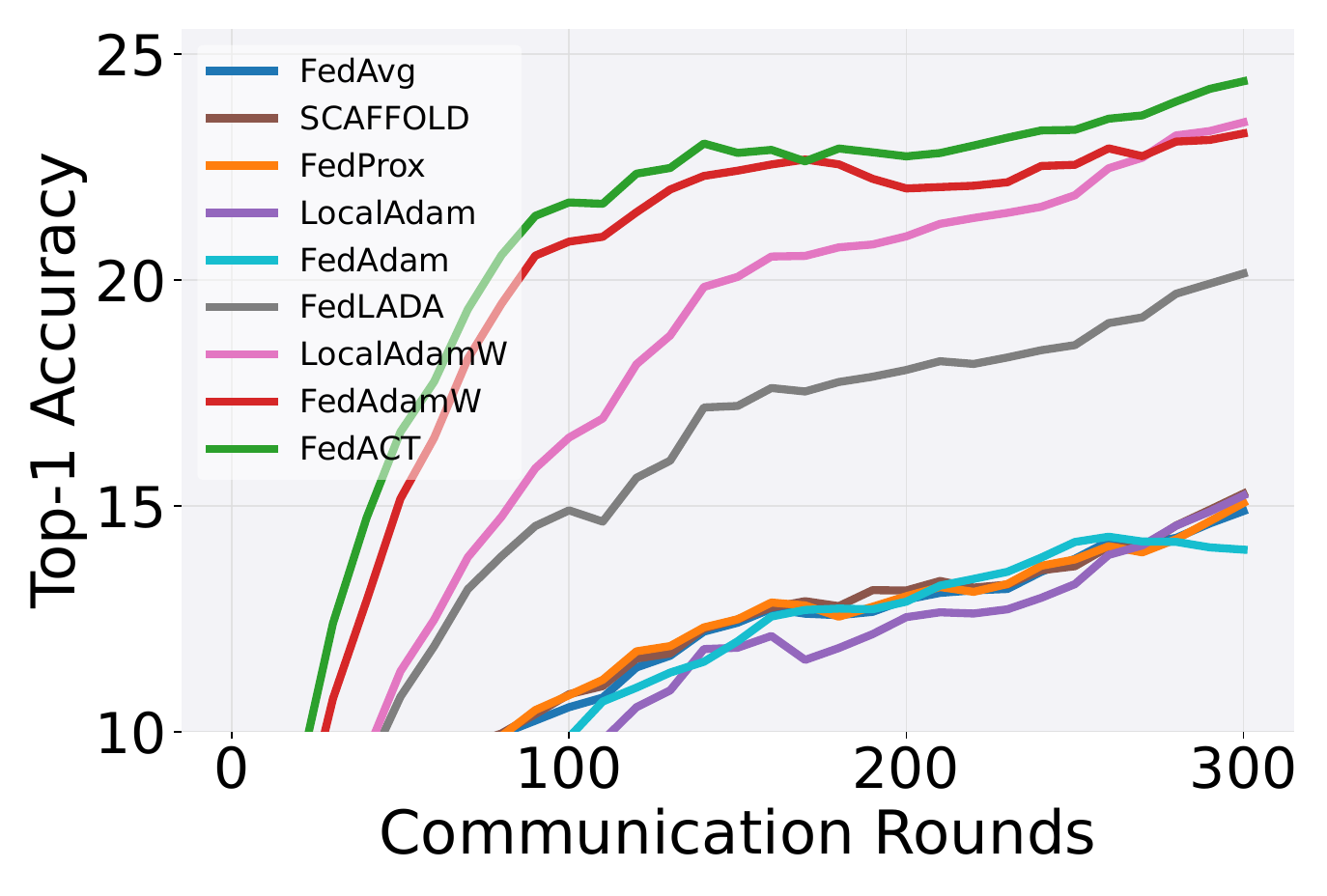}
        \vskip -0.2cm
        \caption{Tiny-ImageNet, Dir-0.1}
    \end{subfigure}\vfill
    \begin{subfigure}{0.248\textwidth}
        \centering
        \includegraphics[width=1\textwidth]{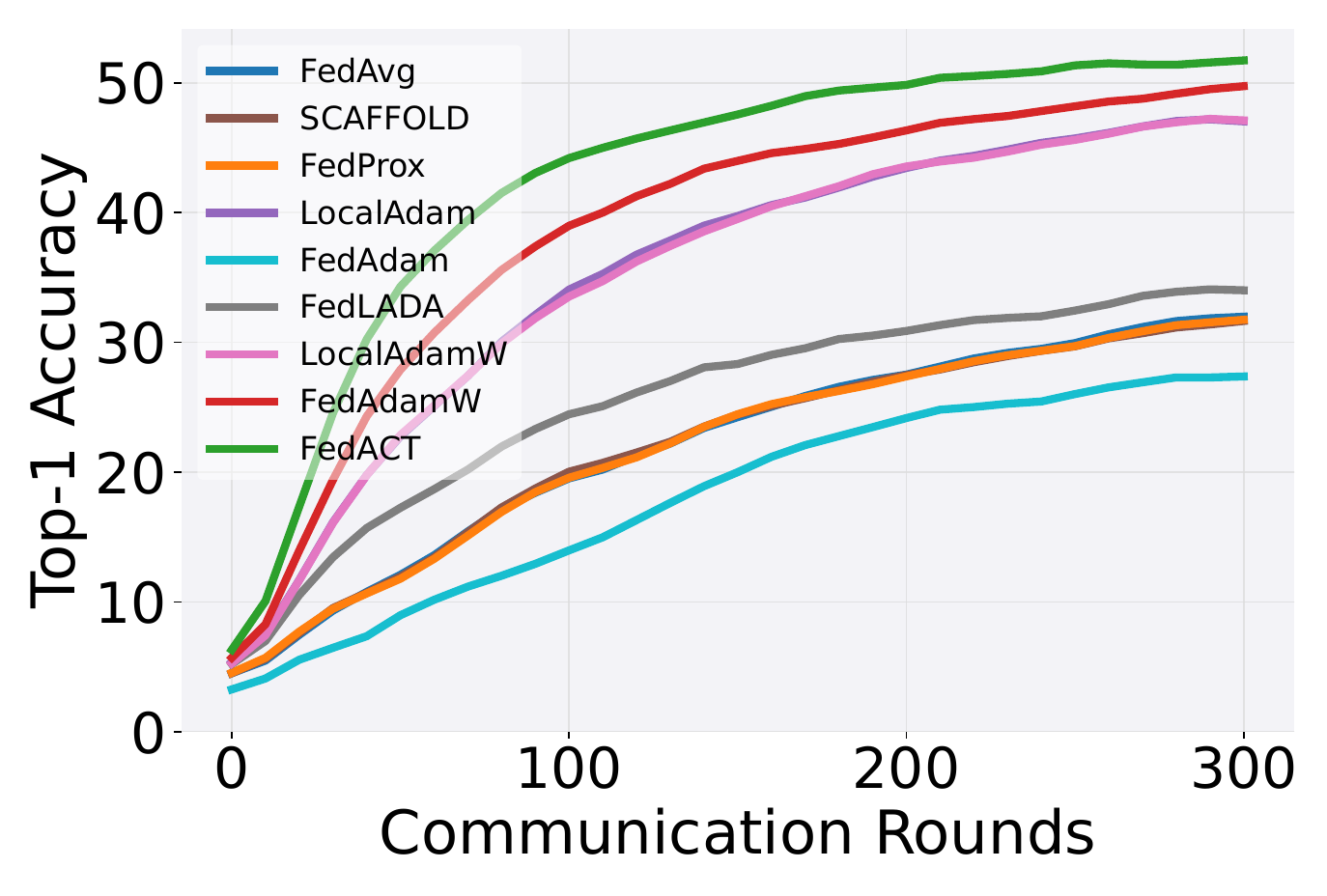}
        \vskip -0.2cm
        \caption{CIFAR-100, Dir-0.6}
    \end{subfigure}\hfill
    \begin{subfigure}{0.248\textwidth}
        \centering
        \includegraphics[width=1\textwidth]{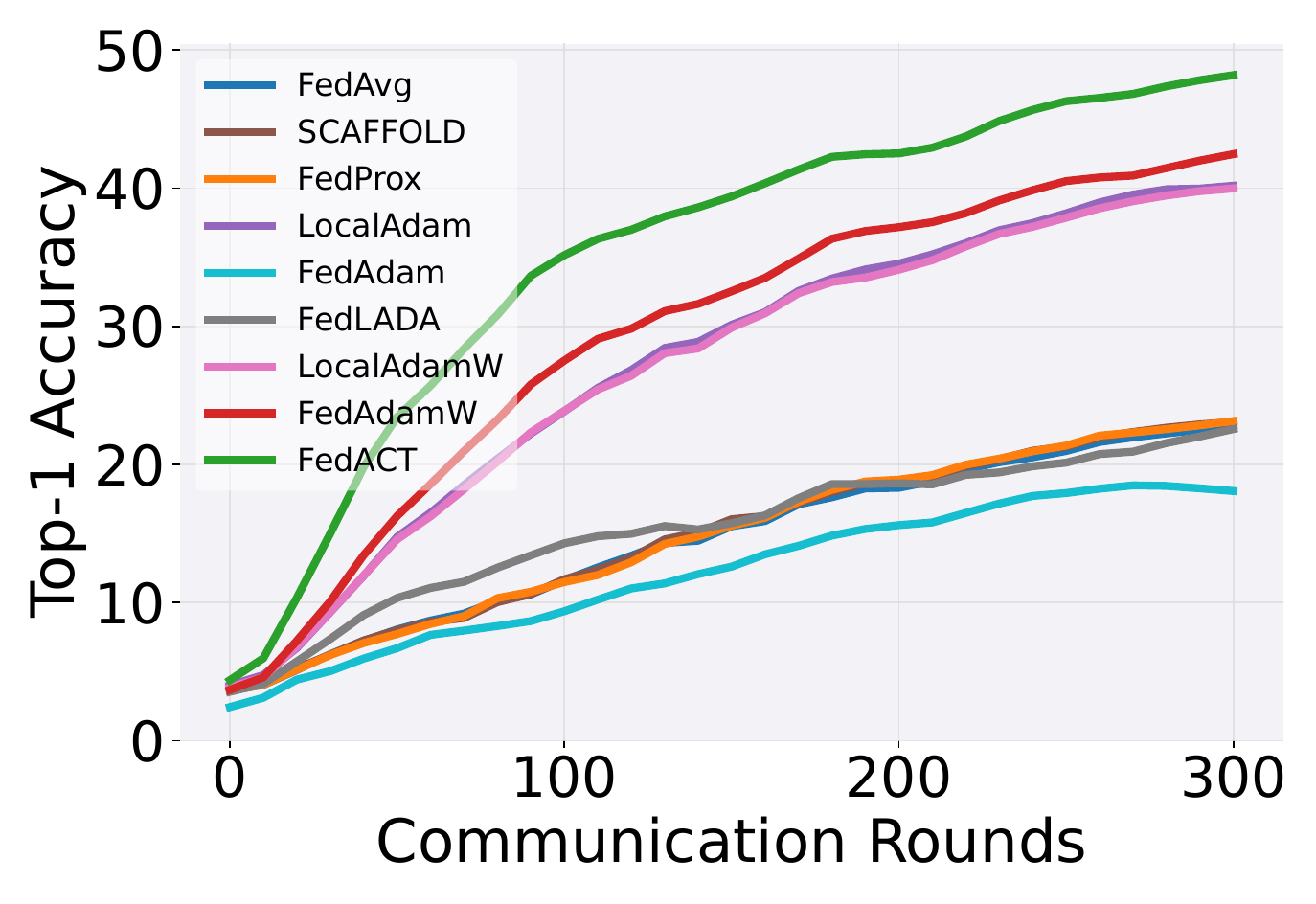}
        \vskip -0.2cm
        \caption{CIFAR-100, Dir-0.1}
    \end{subfigure}\hfill
    \begin{subfigure}{0.248\textwidth}
        \centering
        \includegraphics[width=1\textwidth]{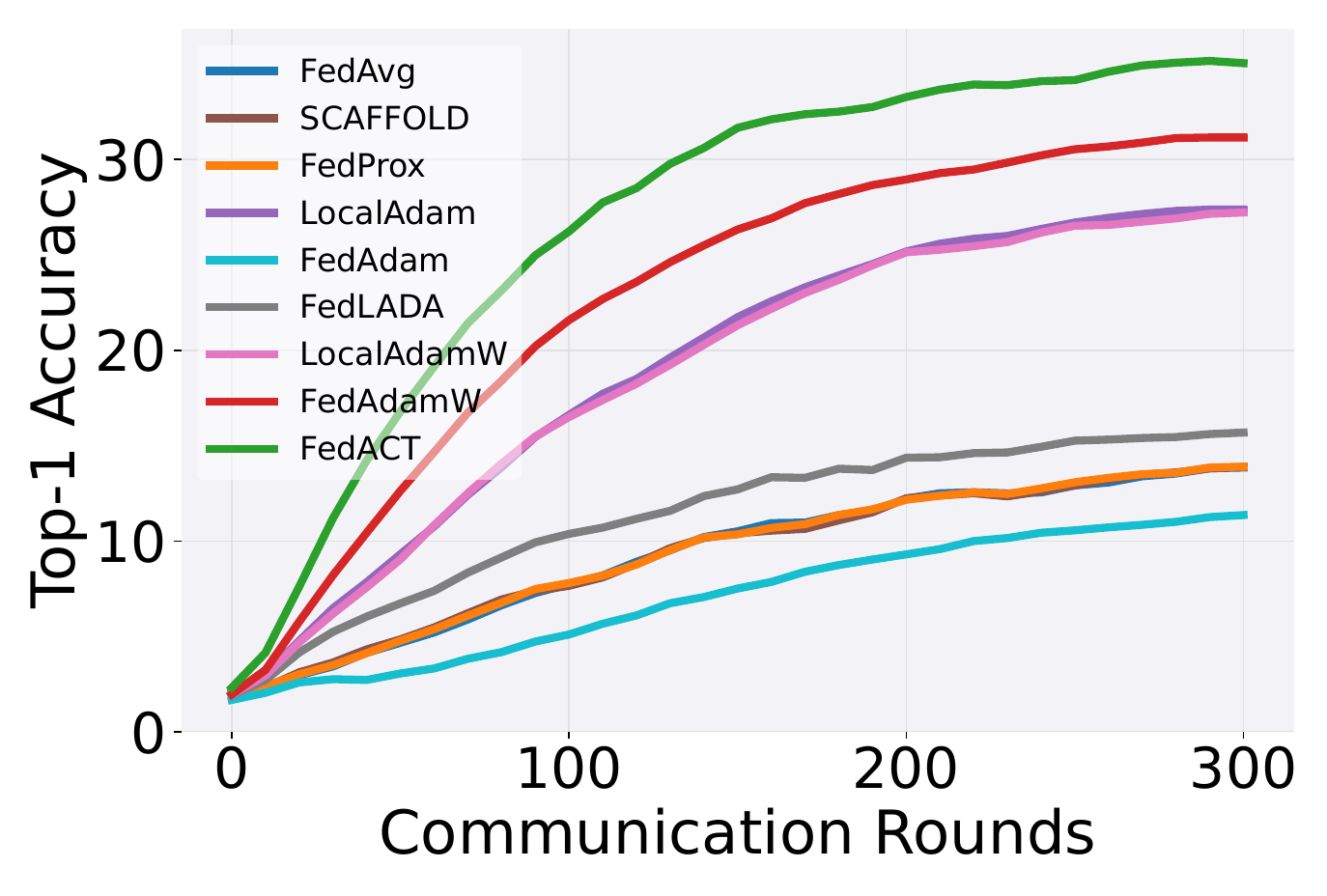}
        \vskip -0.2cm
        \caption{Tiny-ImageNet, Dir-0.6}
    \end{subfigure}
    \begin{subfigure}{0.248\textwidth}
        \centering
        \includegraphics[width=1\textwidth]{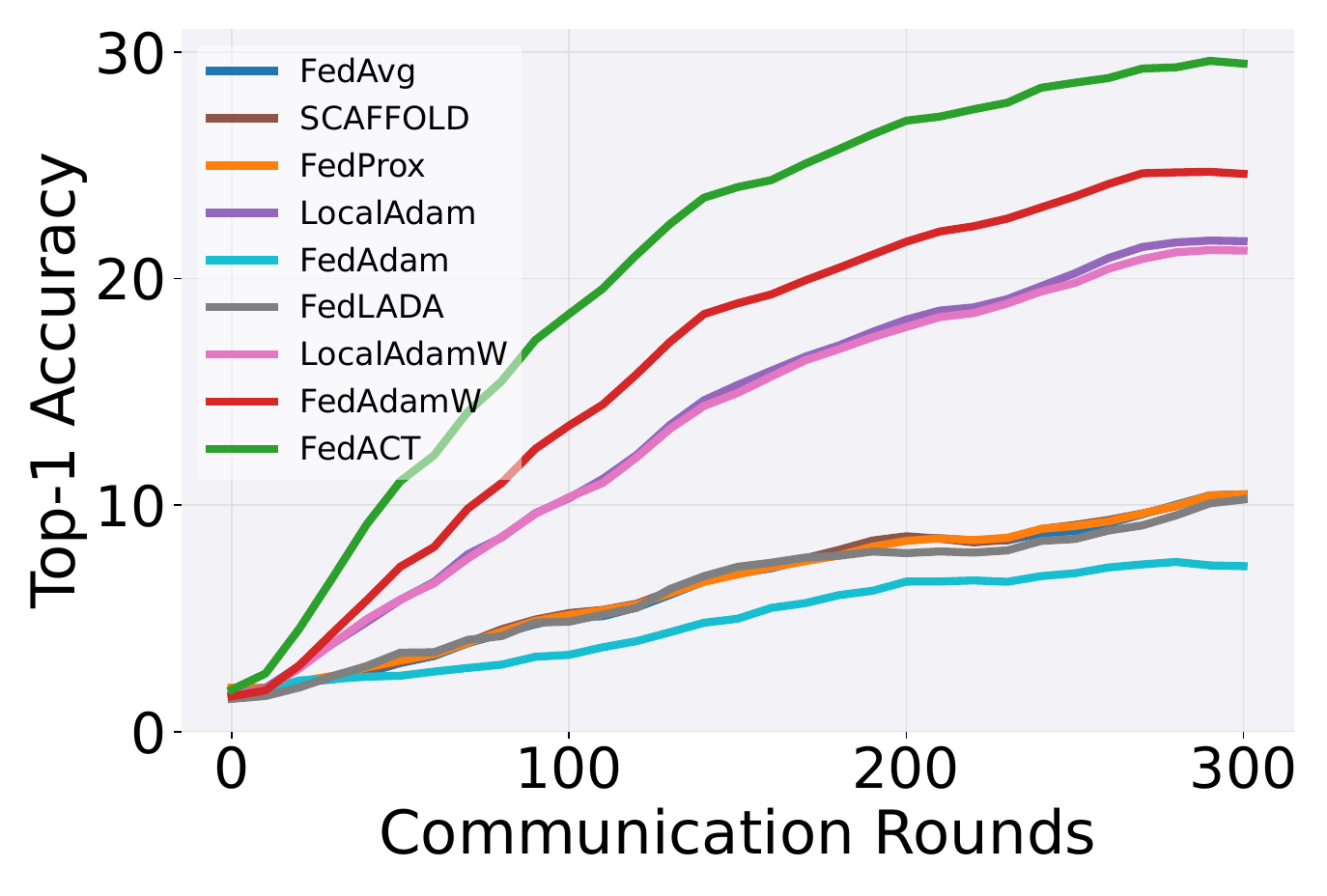}
        \vskip -0.2cm
        \caption{Tiny-ImageNet, Dir-0.1}
    \end{subfigure}
    \vskip -0.2cm
    \caption{Test Top-1 accuracy over communication rounds on federated vision Transformer training. The top row shows results for ViT-Tiny, and the bottom row shows results for Swin-Lite.}
    \label{fig:acc_vit_swin}
    \vskip -0.2cm
\end{figure*}

\begin{figure*}[!ht]
    \centering
    \begin{subfigure}{0.495\textwidth}
        \centering
        \includegraphics[width=1\textwidth]{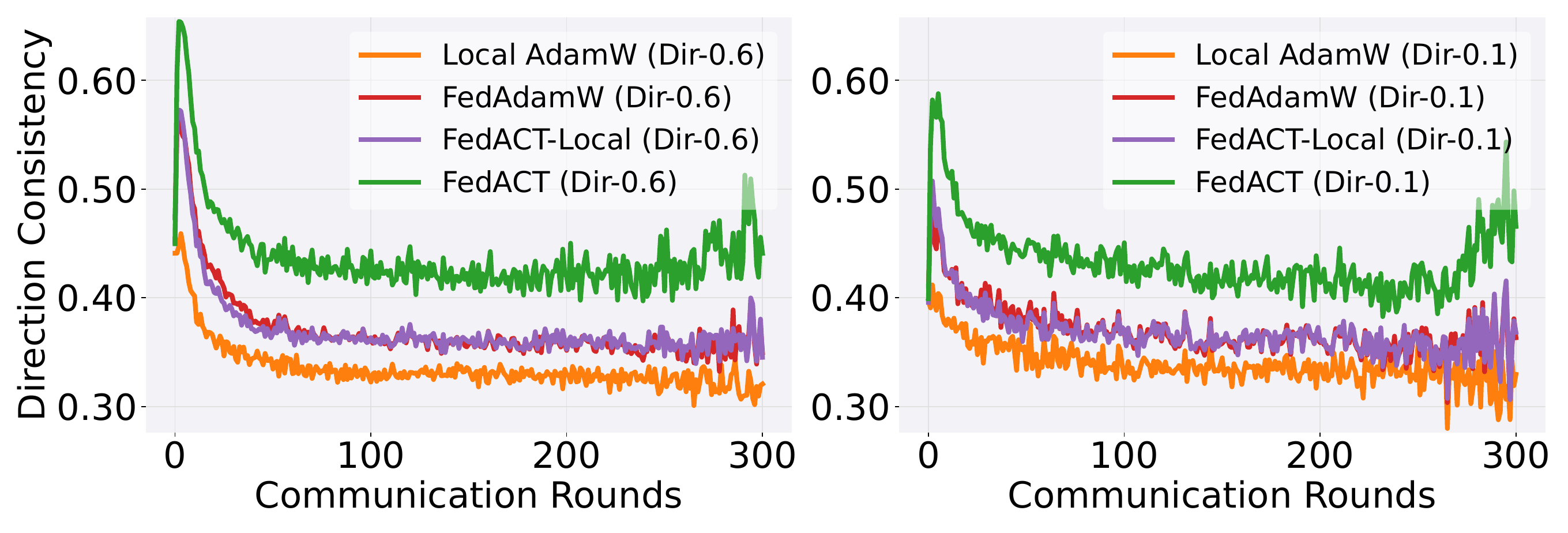}
        \vskip -0.2cm
        \caption{ViT-Tiny}
    \end{subfigure}\hfill
    \begin{subfigure}{0.495\textwidth}
        \centering
        \includegraphics[width=1\textwidth]{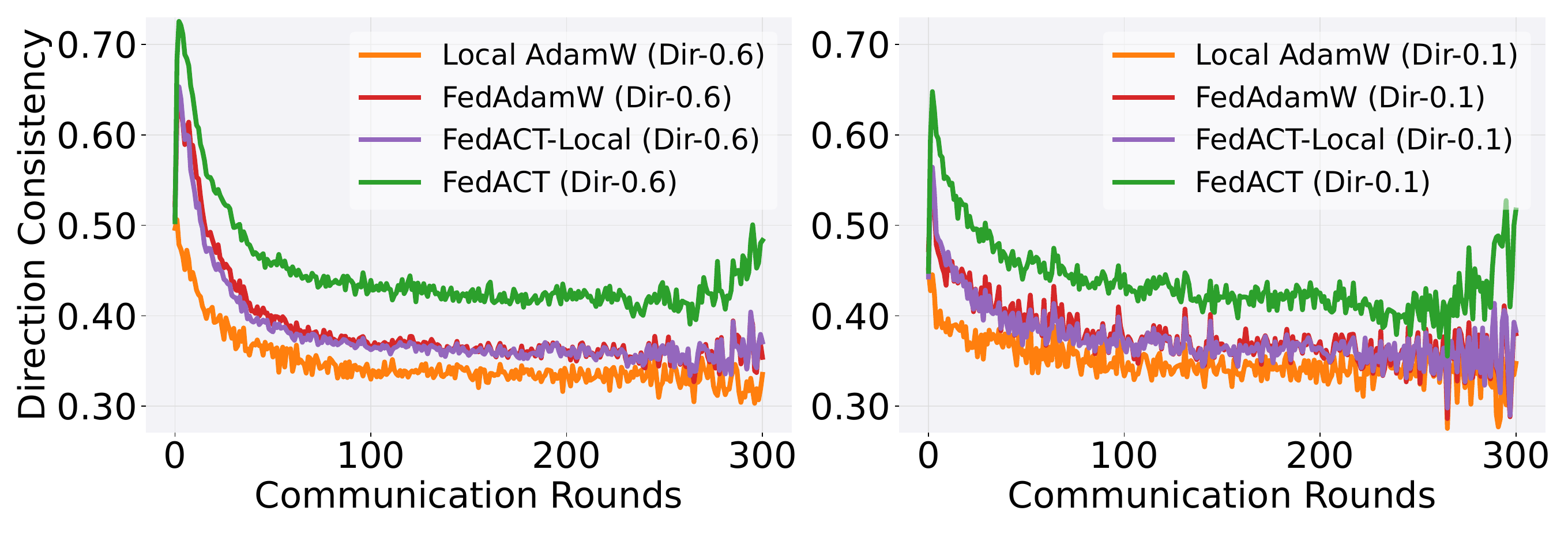}
        \vskip -0.2cm
        \caption{Swin-Lite}
    \end{subfigure}
    \vskip -0.2cm
    \caption{\textbf{Direction Consistency}. On CIFAR-100 with ViT-Tiny and Swin-Lite under 300 communication rounds, 100 clients, 10\% participation, and 50 local steps, FedACT consistently achieves higher direction consistency than Local AdamW, FedAdamW, and FedACT-Local under both Dirichlet heterogeneity levels.}
    \label{fig:mechanism_analysis}
    \vskip -0.2cm
\end{figure*}

\paragraph{Results on CNNs.} CNN results are deferred to the Appendix due to space limitations. \method consistently improves over FedAdamW across all nine settings, but the gains are smaller than those on ViT-Tiny and Swin-Lite. For example, the improvements are within 0.22--0.46 points on CIFAR-10, 0.76--1.41 points on CIFAR-100, and 0.40--0.91 points on Tiny-ImageNet. These results suggest that \method remains effective for CNN training, while its advantage is more pronounced in federated Transformer training, where local AdamW updates are stronger and more heterogeneous.


\paragraph{Results on Federated LLM Pre-Training and Fine-Tuning.} Table~\ref{tab:main_results_llama} reports the LLM pre-training results on C4-en with Llama2-60M/130M/250M under Chinchilla-matched token budgets. \method achieves the lowest training loss and validation perplexity across all three model sizes. Compared with FedAdamW, it reduces the validation perplexity from 35.64 to 32.57 on Llama2-60M, from 22.82 to 21.48 on Llama2-130M, and from 17.95 to 16.80 on Llama2-250M. Figure~\ref{fig:loss_llama} further shows that \method descends faster and reaches comparable loss levels with about $1.54\times$, $1.39\times$, and $1.30\times$ fewer communication rounds on the three model scales, respectively. For LLM fine-tuning, detailed metric results are deferred to the Appendix due to space limitations. The training loss curves in Figure~\ref{fig:loss_sft_dpo} show that \method reaches lower training loss than the baselines on both FedIT and FedVA, with a more pronounced advantage on FedVA. These results indicate that coordinate-level trust allocation remains effective in both federated LLM pre-training and parameter-efficient federated LLM fine-tuning.

\begin{table}[!ht]
    \centering
    \caption{Federated LLM pre-training results on C4-en.}
    \label{tab:main_results_llama}
    \vskip -0.2cm
    \resizebox{1.0\linewidth}{!}{
        {
\scriptsize
\setlength{\tabcolsep}{2.2pt}
\begin{tabular}{lcccccc}
  \toprule
  \multirow{2}{*}{\textbf{Method}}
  & \multicolumn{2}{c}{\textbf{Llama2-60M}}
  & \multicolumn{2}{c}{\textbf{Llama2-130M}}
  & \multicolumn{2}{c}{\textbf{Llama2-250M}} \\
  \cmidrule(lr){2-3} \cmidrule(lr){4-5} \cmidrule(lr){6-7}
  & \textbf{Train Loss $\downarrow$} & \textbf{Val. PPL $\downarrow$}
  & \textbf{Train Loss $\downarrow$} & \textbf{Val. PPL $\downarrow$}
  & \textbf{Train Loss $\downarrow$} & \textbf{Val. PPL $\downarrow$} \\
  \midrule
  LocalAdamW
    & 3.70 & 37.09
    & 3.36 & 26.33
    & 3.09 & 19.30 \\
  FedProx
    & 4.32 & 69.66
    & 3.97 & 48.38
    & 3.70 & 35.98 \\
  SCAFFOLD
    & 3.81 & 41.53
    & 3.40 & 27.44
    & 3.10 & 19.71 \\
  FedLADA
    & 3.72 & 37.87
    & 3.31 & 25.06
    & 3.46 & 27.36 \\
  FedAdamW
    & 3.65 & 35.64
    & 3.23 & 22.82
    & 2.95 & 17.95 \\
  \rowcolor{mygray}
  \textbf{Ours}
    & \textbf{3.57} & \textbf{32.57}
    & \textbf{3.18} & \textbf{21.48}
    & \textbf{2.94} & \textbf{16.80} \\
  \bottomrule
\end{tabular}
}
    }
    \vskip -0.2cm
\end{table}

\begin{figure}[!ht]
    \centering
    \includegraphics[width=1\linewidth]{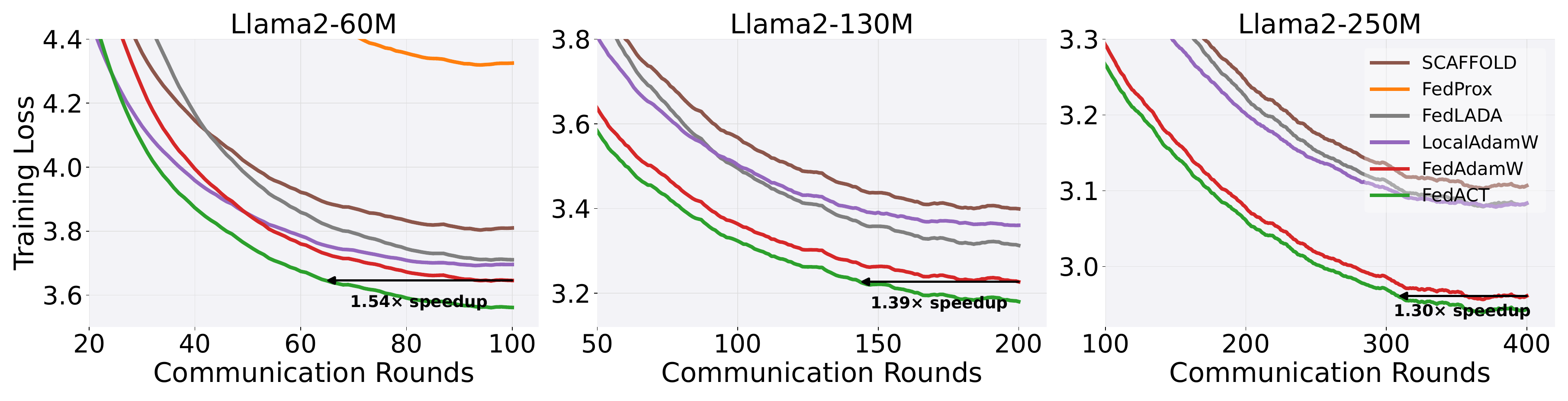}
    \vskip -0.2cm
    \caption{Training dynamics of different methods for federated LLM pre-training.}
    \label{fig:loss_llama}
    \vskip -0.2cm
\end{figure}

\begin{figure}[!ht]
    \centering
    \includegraphics[width=1\linewidth]{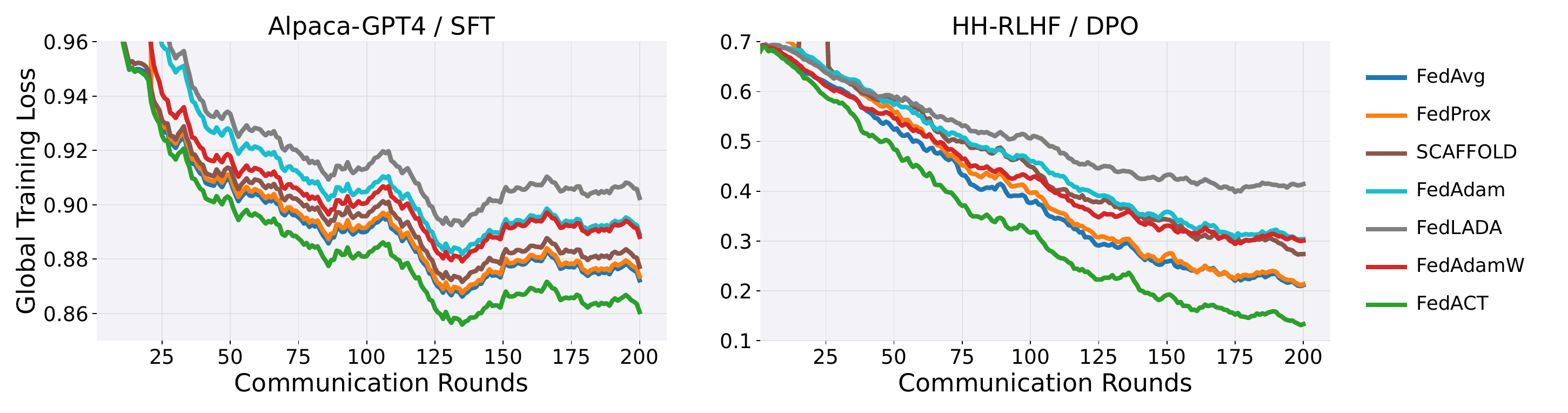}
    \vskip -0.2cm
    \caption{Training dynamics of different methods for FedIT and FedVA.}
    \label{fig:loss_sft_dpo}
    \vskip -0.2cm
\end{figure}


\subsection{Mechanism Analysis}
\label{subsec:mechanism}

We analyze whether the global-aware coordinate-level trust allocation of \method leads to more consistent client updates after local training. As shown in Figure~\ref{fig:mechanism_analysis}, \method consistently achieves higher direction consistency than both Local AdamW and FedAdamW across ViT-Tiny and Swin-Lite under mild and strong heterogeneity. Local AdamW suffers from lower consistency because its adaptive updates are mainly driven by client-specific data. FedAdamW improves consistency through global-local alignment and second-moment synchronization, but still applies the corrected direction densely and uniformly across all coordinates. In contrast, \method further modulates the corrected direction at the coordinate level: high-trust coordinates receive larger update magnitudes, while weakly supported coordinates are softly attenuated rather than discarded.


\subsection{Ablation Study}
\label{subsec:ablation}

\paragraph{Effect of Global-Aware Coordinate Trust Modulation.}
We examine whether the gains of \method come from coordinate-wise modulation alone or from making the modulation global-aware. To this end, we compare with FedACT-Local, which applies MGUP-style coordinate modulation using only local optimizer information. As shown in Table~\ref{tab:abs_global_local}, FedACT-Local improves over FedAdamW in most settings, indicating that coordinate-level trust non-uniformity is useful. However, the full \method consistently performs better, improving FedACT-Local by 1.17/2.66 points on ViT-Tiny under Dir-0.6/0.1, and by 1.20/2.34 points on Swin-Lite. Moreover, Figure~\ref{fig:mechanism_analysis} shows that FedACT-Local has direction consistency close to FedAdamW, while \method achieves clearly higher consistency. These results confirm that the key benefit does not come from local coordinate reweighting alone, but from using the global correction signal to select coordinates that are both locally descent-consistent and globally aligned.

\begin{table}[!ht]
    \vskip -0.2cm
    \centering
    \caption{Accuracy (\%) of FedAdamW, FedACT-Local, and FedACT on CIFAR-100.}
    \label{tab:abs_global_local}
    \vskip -0.2cm
    \resizebox{0.6\linewidth}{!}{
        \begin{tabular}{lcccc}
  \toprule
  \multirow{2}{*}{\textbf{Method}}
  & \multicolumn{2}{c}{\textbf{ViT-Tiny}}
  & \multicolumn{2}{c}{\textbf{Swin-Lite}} \\
  \cmidrule(lr){2-3} \cmidrule(lr){4-5}
  & Dir-0.6 & Dir-0.1
  & Dir-0.6 & Dir-0.1 \\
  \midrule
  FedAdamW
    & 41.76 & 39.08
    & 49.79 & 42.75 \\
  FedACT-Local
    & 42.41 & 39.37
    & 51.03 & 46.16 \\
  \rowcolor{mygray}
  \textbf{Ours}
    & \textbf{43.58} & \textbf{42.03}
    & \textbf{52.23} & \textbf{48.50} \\
  \bottomrule
\end{tabular}
    }
    \vskip -0.2cm
\end{table}

\paragraph{Sensitivity to Trust Ratio $\tau$ and Decay Factor $\gamma$.}
We study the sensitivity of \method to the trust ratio $\tau$ and decay factor $\gamma$ in Table~\ref{tab:abs_hyper_sensitivity}. The trust ratio $\tau$ controls the fraction of coordinates receiving amplified updates, while $\gamma$ controls the attenuation strength for the remaining coordinates, with $\gamma=0$ corresponding to hard masking. The results show that $\tau=0.5$ consistently provides the most reliable performance across ViT-Tiny and Swin-Lite under both Dir-0.6 and Dir-0.1. In contrast, $\tau=0.3$ severely degrades performance because too few coordinates are selected, while $\tau=0.7$ weakens the selectivity of trust modulation. The results also show that soft attenuation is preferable to hard masking: when $\tau=0.5$, non-zero $\gamma$ values generally outperform $\gamma=0$, indicating that preserving small updates on non-selected coordinates helps retain useful local information. Although $\gamma=0.9$ is slightly better in a few cases, $\tau=0.5$ and $\gamma=0.5$ give the most balanced and robust performance, and are therefore used as the default setting.

\begin{table}[!ht]
    \centering
    \caption{\textbf{Sensitivity Analysis.} Accuracy (\%) of FedACT with different trust ratios $\tau$ and decay factors $\gamma$ on CIFAR-100.}
    \label{tab:abs_hyper_sensitivity}
    \vskip -0.2cm
    \resizebox{\linewidth}{!}{
        \begin{tabular}{cc|cccc|cccc}
  \toprule
  \multirow{2}{*}{\textbf{Dir.}}
  & \multirow{2}{*}{\textbf{$\bm{\tau}$}}
  & \multicolumn{4}{c|}{\textbf{ViT-Tiny} ($\bm{\gamma}$)}
  & \multicolumn{4}{c}{\textbf{Swin-Lite} ($\bm{\gamma}$)} \\
  \cmidrule(lr){3-6} \cmidrule(lr){7-10}
  &
  & 0 & 0.1 & 0.5 & 0.9
  & 0 & 0.1 & 0.5 & 0.9 \\
  \midrule
  \multirow{3}{*}{Dir-0.6}
  & 0.3
    & 21.43 & 20.77 & 17.79 & 14.86
    & 12.90 & 12.04 & 10.24 & 8.81 \\
  & 0.5
    & 41.47 & 41.99 & \textbf{43.58} & \underline{43.16}
    & 51.82 & 51.82 & \underline{52.23} & \textbf{52.32} \\
  & 0.7
    & 38.93 & 39.30 & 41.45 & 41.36
    & 49.78 & 50.49 & 51.33 & 51.98 \\
  \midrule
  \multirow{3}{*}{Dir-0.1}
  & 0.3
    & 12.58 & 12.28 & 11.61 & 10.17
    & 6.55  & 6.32  & 6.00  & 5.61 \\
  & 0.5
    & 39.51 & 39.44 & \textbf{42.03} & \underline{41.30}
    & 46.69 & 46.75 & \textbf{48.50} & \underline{48.46} \\
  & 0.7
    & 36.66 & 37.48 & 38.18 & 39.72
    & 43.76 & 44.32 & 46.00 & 46.69 \\
  \bottomrule
\end{tabular}
    }
    \vskip -0.2cm
\end{table}

\section{Conclusion}
\label{sec:conclusion}

In this paper, we introduced \method, a global-aware coordinate trust modulation method for federated AdamW training under data heterogeneity. Our key observation is that, even after round-level global-local correction, corrected adaptive local updates can remain highly uneven in coordinate-wise reliability. \method addresses this coordinate trust mismatch by preserving the dense globally corrected AdamW direction while reallocating update magnitudes according to global-aware coordinate trust. Extensive experiments on federated vision Transformers, CNNs, LLM pre-training, and LLM fine-tuning show that \method consistently improves strong federated adaptive baselines, with especially clear gains on Transformer models under stronger heterogeneity. Mechanism analyses further show that \method improves cross-client direction consistency, highlighting coordinate-level trust allocation as a useful new design dimension for federated adaptive optimization.


\bibliography{main}


\clearpage
\onecolumn
\appendix

\section{Appendix A: Details of Evaluation Metrics}
\label{appendix:eval_metrics}

This section provides the detailed definitions of the diagnostic metrics used in the motivation analysis section. These metrics are designed to characterize federated adaptive optimization from two complementary perspectives. Direction Consistency measures whether local client updates are aligned at the communication-round level, while Positive Score Ratio and Top-$p$ Score Mass further inspect the internal coordinate-level trust structure of the corrected adaptive update. Together, these metrics help us distinguish whether the remaining mismatch comes from disagreement across clients or from highly non-uniform reliability across coordinates after global correction.

Let $S_r$ denote the set of clients participating in communication round $r$, and let $S=|S_r|$. For each client $i\in S_r$, let $\Delta_i^r$ be the accumulated local model update in round $r$. The averaged update direction in this round is defined as
\begin{equation}
    \bar{\Delta}^r = \frac{1}{S}\sum_{j\in S_r}\Delta_j^r .
\end{equation}
For coordinate-level diagnostics, let $K$ denote the number of local steps, and define the set of observed client-step pairs in round $r$ as
\begin{equation}
    \mathcal{O}_r = \{(i,k): i\in S_r,\; k=1,\ldots,K\}.
\end{equation}
At local step $k$ of client $i$, let $g_i^{r,k}$ denote the stochastic gradient and let $u_i^{r,k}$ denote the corrected adaptive update direction. 
We define the coordinate-wise trust score as
\begin{equation}
    s_i^{r,k} = u_i^{r,k} \odot g_i^{r,k},
\end{equation}
where $s_{i,j}^{r,k}$ is the trust score of coordinate $j$. Since the local parameter update moves along $-u_i^{r,k}$, a positive score $s_{i,j}^{r,k}>0$ means that the corrected adaptive direction is descent-consistent with the current stochastic gradient on coordinate $j$ under the first-order approximation. In contrast, a small or negative score indicates that the coordinate is weakly supported or potentially conflicting.

\paragraph{Direction Consistency.}
Direction Consistency measures how well each participating client's update agrees with the averaged update direction in the same communication round:
\begin{equation}
    \operatorname{DC}^r = 
    \frac{1}{S}\sum_{i \in S_r} \cos(\Delta_i^r, \bar{\Delta}^r)
    =
    \frac{1}{S}\sum_{i \in S_r} 
    \cos \left(
    \Delta_i^r, 
    \frac{1}{S}\sum_{j \in S_r} \Delta_j^r
    \right).
\end{equation}
A larger value indicates that local updates are more consistently aligned with the round-level aggregated direction, while a smaller value suggests stronger client drift or update disagreement across participating clients. Because this metric is based on cosine similarity, it focuses on update directions rather than update magnitudes. In the motivation analysis, we use this metric to evaluate whether global correction improves round-level alignment across clients.

\paragraph{Positive Score Ratio.}
Direction Consistency captures the aggregate alignment of client updates, but it does not reveal whether different coordinates of the corrected adaptive direction are equally reliable. To inspect the update at a finer granularity, we compute the fraction of coordinates whose trust scores are positive:
\begin{equation}
    \operatorname{PosRatio}^{r} = 
    \frac{1}{|\mathcal{O}_r|} 
    \sum_{(i,k)\in \mathcal{O}_r} 
    \frac{1}{d}\sum_{j=1}^d 
    \mathbf{1}\left(s_{i,j}^{r,k}>0\right).
\end{equation}
This metric measures the breadth of descent-consistent coordinates in the corrected adaptive update. A larger value means that more coordinates are at least positively supported by the current stochastic gradient.

However, this metric only counts how many coordinates have positive scores; it does not indicate whether the positive trust signal is uniformly distributed or dominated by a small subset of coordinates. Therefore, we interpret it together with the Top-$p$ Score Mass.

\paragraph{Top-$p$ Score Mass.}
To quantify the concentration of coordinate-wise trust, we compute the fraction of positive score mass carried by the top-$p$ fraction of coordinates:
\begin{equation}
    \operatorname{TopMass}_{p}^{r} = 
    \frac{1}{|\mathcal{O}_r|} 
    \sum_{(i,k) \in \mathcal{O}_r} 
    \frac{
    \sum_{j \in \operatorname{Top}_{p}(s_i^{r,k})}
    [s_{i,j}^{r,k}]_+
    }{
    \sum_{j=1}^{d} [s_{i,j}^{r,k}]_+
    },
\end{equation}
where $[a]_+=\max(a,0)$, and $\operatorname{Top}_{p}(s_i^{r,k})$ denotes the index set of the largest $\lceil p d\rceil$ entries in the score vector $s_i^{r,k}$. For example, $p=0.01$ corresponds to the top $1\%$ coordinates. If the denominator is zero, we define the corresponding ratio as zero.

This metric measures whether the positive trust signal is broadly distributed or concentrated in a small subset of coordinates. A larger Top-$p$ Score Mass indicates that a small fraction of coordinates carries most of the positive score mass, suggesting strong coordinate-level trust heterogeneity. In federated adaptive optimization, this is important because dense uniform updating implicitly assigns the same update scale to all coordinates of the corrected direction. When the Top-$p$ Score Mass is high, the corrected adaptive update contains a long tail of weakly supported coordinates that may be over-trusted by dense uniform updating. This motivates \method to assign larger update magnitudes to high-trust coordinates and smaller but non-zero magnitudes to the remaining coordinates.

\section{Appendix B: Convergence Analysis}
\label{appendix:convergence_analysis}

{
In this section, we analyze \method with $\lambda=0$ under full client participation. The analysis tracks the averaged local trajectory and separates three sources of error: stochastic gradient variance, client heterogeneity, and the modulation bias introduced by ACT. The bounded-gradient condition is used only to instantiate finite constants for the AdamW-style scaling and the stepsize range.

For clarity, we analyze the virtual averaged sequence of \method. Let $T=RK$ be the total number of local update steps. We use a single index $t=0, \ldots, T-1$ to denote the flattened local steps. Since we consider full participation in the analysis, all clients participate in each communication round, i.e.,
\begin{align}
    \mathcal{S}_r=\{1,\ldots,N\},\quad S=N.
\end{align}

Let $\bm{x}_{i,t}$ be the local iterate of client $i$ at step $t$, and define the averaged virtual iterate as
\begin{align}
    \bar{\bm{x}}_t=\frac{1}{N}\sum_{i=1}^{N}\bm{x}_{i,t}.
\end{align}

The averaged update is
\begin{equation}
    \bar{\bm{x}}_{t+1}=\bar{\bm{x}}_t-\eta\bar{\bm{p}}_t,\quad
    \bar{\bm{p}}_t=\frac{1}{N}\sum_{i=1}^{N}\bm{p}_{i,t}.
    \label{eq:virtual_avg_update}
\end{equation}

For each client,
\begin{equation}
    \bm{p}_{i,t}=\bm{\phi}_{i,t}\odot\bm{u}_{i,t},\quad
    \bm{u}_{i,t}=(1-\rho)\bm{H}_t\bm{g}_{i,t}+\rho\bm{\Delta}_t.
    \label{eq:fedact_direction_virtual}
\end{equation}

Here $\bm{H}_t$ is the synchronized AdamW-style diagonal adaptive scaling matrix and $\bm{\Delta}_t$ is the global correction direction inherited from the previous communication round. The ACT coefficient vector $\bm{\phi}_{i,t}\in\mathbb{R}^d$ is constructed from the trust score $\bm{s}_{i,t}=\bm{u}_{i,t}\odot\bm{g}_{i,t}$ as described in the Global-Aware Coordinate Trust Modulation subsection~\ref{subsec:trust_modulation} of the main paper. When $\rho=0$ and $\tau=1$, we have $\bm{\phi}_{i,t}=\bm{1}$ and \method reduces to the base AdamW-style preconditioned stochastic update.

\begin{remark}[Predictable preconditioner]
\label{rem:predictable_precond}
The analysis treats $\bm{H}_t$ as $\mathcal{F}_t$-measurable. Equivalently, $\bm{H}_t$ is formed before sampling the current stochastic gradients $\{\bm{g}_{i,t}\}_{i=1}^N$. This predictable-preconditioner convention isolates the effect of ACT from the additional dependence between the current gradient and the current second-moment estimate.
\end{remark}

\subsection{Assumptions}

\begin{assumption}[Lower bounded objective]
\label{ass:lower_bound}
The global objective $f$ is bounded from below. That is, there exists $f^\star>-\infty$ such that
\begin{align}
    f(\bm{x})\ge f^\star, \quad \forall \bm{x}\in\mathbb{R}^d .
\end{align}
We define $\Delta_f=f(\bm{x}_0)-f^\star$.
\end{assumption}

\begin{assumption}[Smoothness]
\label{ass:smoothness}
Each local objective $f_i$ and the global objective $f$ are $L$-smooth, i.e., for any $\bm{x},\bm{y}\in\mathbb{R}^d$,
\begin{align}
    f_i(\bm{y})\le f_i(\bm{x})+\left\langle\nabla f_i(\bm{x}),\bm{y}-\bm{x}\right\rangle+\frac{L}{2}\|\bm{y}-\bm{x}\|^2 .
\end{align}
\end{assumption}

\begin{assumption}[Unbiased stochastic gradients with bounded variance]
\label{ass:bounded_variance}
For every client $i$ and every step $t$, the stochastic gradient satisfies
\begin{align}
    \mathbb{E}\left[\bm{g}_{i,t}\mid\mathcal{F}_t\right]=\nabla f_i(\bm{x}_{i,t}),
\end{align}
and there exists $\sigma^2>0$ such that
\begin{align}
    \mathbb{E}\left[\left\|\bm{g}_{i,t}-\nabla f_i(\bm{x}_{i,t})\right\|^2\mid\mathcal{F}_t\right]\le \sigma^2 .
\end{align}
\end{assumption}

\begin{assumption}[Bounded data heterogeneity]
\label{ass:bounded_heterogeneity}
There exists $\zeta^2>0$ such that, for all $\bm{x}\in\mathbb{R}^d$,
\begin{align}
    \frac{1}{N}\sum_{i=1}^{N}\left\|\nabla f_i(\bm{x})-\nabla f(\bm{x})\right\|^2\le \zeta^2 .
\end{align}
\end{assumption}

\begin{assumption}[Uniformly bounded stochastic gradients]
\label{ass:bounded_gradient}
There exists $G>0$ such that
\begin{align}
    \|\bm{g}_{i,t}\|\le G \quad \text{almost surely}, \quad \forall i,t .
\end{align}
\end{assumption}

\begin{assumption}[Full client participation]
\label{ass:full_participation}
All clients participate in every communication round, i.e.,
\begin{align}
   \mathcal{S}_r=\{1,\ldots,N\},\quad S=N .
\end{align}
\end{assumption}

\subsection{Consequences of the Algorithm}

We first record several bounds implied by the algorithm. These bounds use full participation, bounded stochastic gradients, and the predictable preconditioner convention in Remark~\ref{rem:predictable_precond}. For the AdamW-style scaling,
\begin{align}
    \bm{H}_t=\operatorname{diag}\left(\frac{1}{\sqrt{\hat{\bm{v}}_t}+\epsilon}\right),
\end{align}
Assumption~\ref{ass:bounded_gradient} implies
\begin{equation}
    \mu\bm{I}\preceq \bm{H}_t\preceq M\bm{I},\quad
    \mu=\frac{1}{G/\sqrt{1-\beta_2}+\epsilon},\quad M=\frac{1}{\epsilon}.
    \label{eq:H_bound_new}
\end{equation}
Here the lower bound uses the conservative estimate $\sqrt{\hat{v}_{t,j}}\le G/\sqrt{1-\beta_2}$.

\begin{lemma}[Global correction as averaged previous directions]
\label{lem:correction_average}
For every communication round $r$, the global correction signal satisfies
\begin{equation}
    \bm{\Delta}_G^r=\frac{1}{NK}\sum_{i=1}^{N}\sum_{k=0}^{K-1}\bm{p}_i^{r,k}.
    \label{eq:delta_as_average}
\end{equation}
In the flattened virtual sequence, the correction $\bm{\Delta}_t$ used at any local step in round $r$ equals $\bm{\Delta}_G^{r-1}$, which is constructed from directions $\{\bm{p}_i^{r-1,k}\}$ in round $r-1$. Equivalently,
\begin{align}
    \bm{\Delta}_t=\frac{1}{K}\sum_{k=0}^{K-1}\bar{\bm{p}}_{s_k},
\end{align}
where
\begin{align}
    \bar{\bm{p}}_{s_k}=\frac{1}{N}\sum_{i=1}^{N}\bm{p}_i^{r-1,k}
\end{align}
is the virtual averaged direction at the $k$-th local step of round $r-1$. The maximum staleness is
\begin{align}
    \Gamma=2K-1 .
\end{align}
\end{lemma}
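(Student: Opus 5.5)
The identity follows by unrolling the local recursion with $\lambda=0$ and substituting into the server rule of Algorithm~\ref{alg:algorithm_fedact}. First, for a fixed round $r$ and client $i$, the local update with $\lambda=0$ reads $\bm{x}_i^{r,k+1}=\bm{x}_i^{r,k}-\eta\,\bm{p}_i^{r,k}$, where $\bm{p}_i^{r,k}=\bm{\phi}_i^{r,k}\odot\bm{u}_i^{r,k}$. Here we index local steps $k=0,\ldots,K-1$ to match the flattened notation. Telescoping over $k$ gives
\begin{align}
    \bm{x}_i^{r,K}-\bm{x}_i^{r,0}=-\eta\sum_{k=0}^{K-1}\bm{p}_i^{r,k}.
\end{align}
Second, under Assumption~\ref{ass:full_participation} we have $S=N$, so the server rule becomes $\bm{\Delta}_G^r=\frac{-1}{NK\eta}\sum_{i=1}^{N}(\bm{x}_i^{r,K}-\bm{x}_i^{r,0})$. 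Substituting the telescoped identity cancels the factor $-\eta$ and yields \eqref{eq:delta_as_average} directly. Swapping the finite sums and using $\bar{\bm{p}}=\frac{1}{N}\sum_i\bm{p}_i$ then gives the equivalent form $\frac{1}{K}\sum_k\bar{\bm{p}}_{s_k}$, where $s_k=(r-1)K+k$ is the flattened index of step $k$ in round $r-1$.

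Third, the algorithm computes $\bm{\Delta}_G^{r-1}$ at the end of round $r-1$ and broadcasts it unchanged. Every step of round $r$ therefore uses $\bm{\Delta}_t=\bm{\Delta}_G^{r-1}$, which gives the stated form in terms of round-$(r-1)$ directions. For round $r=1$ we have $\bm{\Delta}_G^0=\bm{0}$, so the identity holds trivially if we adopt the convention that round-$0$ directions are zero.

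Finally, the staleness bound comes from counting indices. A step $t=rK+k'$ with $k'\in\{0,\ldots,K-1\}$ uses directions with indices $s_k=(r-1)K+k$. The maximal gap $t-s_k$ occurs at $k'=K-1$ and $k=0$, giving $(K-1)+K=2K-1=\Gamma$.

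The only delicate point is bookkeeping rather than analysis. Algorithm~\ref{alg:algorithm_fedact} indexes local steps $1,\ldots,K$ and includes the weight-decay term, whereas the lemma uses $0,\ldots,K-1$ with $\lambda=0$. I would state this index shift and the $\lambda=0$ reduction explicitly. If $\lambda\neq0$ were allowed, the decay terms would add $-\eta\lambda\sum_k\bm{x}_i^{r,k}$ to the telescoped sum, and the identity would fail, so the restriction to $\lambda=0$ is essential.
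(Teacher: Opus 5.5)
Your proposal is correct and follows essentially the same route as the paper: telescope the $\lambda=0$ local recursion, substitute into the server rule with $S=N$, swap the finite sums, and count the index gap to get $\Gamma=2K-1$. Your extra bookkeeping is sound and is left implicit in the paper's proof: the $1,\ldots,K$ versus $0,\ldots,K-1$ index shift, the zero-direction convention needed for $r=1$ since $\bm{\Delta}_G^0=\bm{0}$, and the observation that $\lambda=0$ is what makes the identity hold with $\bm{p}=\bm{\phi}\odot\bm{u}$.
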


\begin{proof}
Since $\lambda=0$, the local update is
\begin{align}
    \bm{x}_i^{r,k+1}=\bm{x}_i^{r,k}-\eta\bm{p}_i^{r,k}.
\end{align}
Summing over $k=0,\ldots,K-1$ gives
\begin{align}
    \bm{x}_i^{r,K}-\bm{x}_i^{r,0}=-\eta\sum_{k=0}^{K-1}\bm{p}_i^{r,k}.
\end{align}
Using the server definition under full participation,
\begin{align}
    \bm{\Delta}_G^r=\frac{-1}{NK\eta}\sum_{i=1}^{N}\left(\bm{x}_i^{r,K}-\bm{x}_i^{r,0}\right),
\end{align}
we obtain \eqref{eq:delta_as_average}. The second representation follows by swapping the order of summation. The staleness bound follows because $\bm{\Delta}_G^{r-1}$ is constructed from directions in round $r-1$ and is used in round $r$; hence the maximum time gap between the current step and the oldest contributing direction is at most $2K-1$ local steps.
\end{proof}

\begin{lemma}[Uniform boundedness of \method directions]
\label{lem:direction_bounded_new}
Let $0<\rho<\tau\le 1$. Then every effective direction satisfies
\begin{equation}
    \|\bm{p}_{i,t}\|\le P_{\rho,\tau},\quad P_{\rho,\tau}=\frac{(1-\rho)MG}{\tau-\rho}.
    \label{eq:P_new}
\end{equation}
\end{lemma}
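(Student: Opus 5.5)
The plan is a strong induction over communication rounds. It rests on two facts. First, each effective direction $\bm{p}_{i,t}$ is a modulated combination of a bounded preconditioned gradient and the correction $\bm{\Delta}_t$. Second, by Lemma~\ref{lem:correction_average}, $\bm{\Delta}_t$ is an average of effective directions from the previous round. A uniform bound on past directions therefore feeds back into a bound on current ones, and $P_{\rho,\tau}$ is exactly the fixed point of that recursion.

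\textbf{Step 1: one-step bound.} Since $\bm{\phi}_{i,t}$ is diagonal, I will use $\|\bm{\phi}_{i,t}\odot\bm{u}_{i,t}\|\le \|\bm{\phi}_{i,t}\|_\infty\|\bm{u}_{i,t}\|$. The entries of $\bm{\phi}_{i,t}$ lie in $\{\alpha,\gamma\}=\{1/\tau,\tau\}$, and $\tau\le 1$ gives $\tau\le 1/\tau$; equivalently, we work in the regime $\tau\le\phi_{i,t,j}\le 1/\tau$ assumed in Theorem~\ref{thm:fedact_convergence}. Hence $\|\bm{\phi}_{i,t}\|_\infty\le 1/\tau$. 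Next, \eqref{eq:H_bound_new} gives $\|\bm{H}_t\bm{g}_{i,t}\|\le M\|\bm{g}_{i,t}\|$, and Assumption~\ref{ass:bounded_gradient} gives $\|\bm{g}_{i,t}\|\le G$. Combining these with the triangle inequality yields
\begin{equation*}
\|\bm{p}_{i,t}\|\le \frac{1}{\tau}\Big((1-\rho)MG+\rho\|\bm{\Delta}_t\|\Big).
\end{equation*}

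\textbf{Step 2: induction on rounds.} In round $r=1$ we have $\bm{\Delta}_G^0=\bm{0}$, so $\|\bm{p}_{i,t}\|\le (1-\rho)MG/\tau\le P_{\rho,\tau}$, because $\tau-\rho\le\tau$. For $r\ge 2$, suppose every direction $\bm{p}_i^{r-1,k}$ satisfies $\|\bm{p}_i^{r-1,k}\|\le P_{\rho,\tau}$. Lemma~\ref{lem:correction_average} writes $\bm{\Delta}_t=\frac{1}{NK}\sum_{i,k}\bm{p}_i^{r-1,k}$. The triangle inequality (convexity of the norm) then gives $\|\bm{\Delta}_t\|\le P_{\rho,\tau}$. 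Substituting into Step 1,
\begin{equation*}
\|\bm{p}_{i,t}\|\le \frac{(1-\rho)MG+\rho P_{\rho,\tau}}{\tau}.
\end{equation*}
This is at most $P_{\rho,\tau}$ if and only if $(1-\rho)MG\le(\tau-\rho)P_{\rho,\tau}$. With $P_{\rho,\tau}=(1-\rho)MG/(\tau-\rho)$ this holds with equality, and the condition $\rho<\tau$ is precisely what makes the denominator positive. The induction therefore closes and the bound holds for all $t$.

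\textbf{Main obstacle.} The only delicate point is the self-referential dependence: $\bm{\Delta}_t$ is built from earlier $\bm{p}$'s, so a naive bound could blow up geometrically across rounds. The induction avoids this only because the contraction factor $\rho/\tau$ is strictly less than $1$. A secondary check is that the modulation bound $\|\bm{\phi}_{i,t}\|_\infty\le 1/\tau$ actually holds for the chosen $\alpha,\gamma$; with the default $\alpha=1/\tau,\gamma=\tau$ it does, and otherwise we simply invoke the stated range $\tau\le\phi_{i,t,j}\le1/\tau$.
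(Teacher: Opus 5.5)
Your proof is correct and follows the paper's argument. It uses the same one-step bound $\|\bm{p}_{i,t}\|\le \tau^{-1}\big((1-\rho)MG+\rho\|\bm{\Delta}_t\|\big)$, the same convex-combination bound $\|\bm{\Delta}_t\|\le P_{\rho,\tau}$ from Lemma~\ref{lem:correction_average}, and the same observation that $P_{\rho,\tau}$ is exactly the fixed point of this recursion when $\rho<\tau$. The only difference is cosmetic: you induct over communication rounds instead of the flattened step index, which makes the whole first round (where $\bm{\Delta}_t=\bm{0}$) an explicit base case.
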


\begin{proof}
We prove the claim by induction on the flattened step index $t$. Since every ACT coefficient satisfies $\tau\le \phi_{i,t,j}\le 1/\tau$, we have
\begin{align}
    \|\bm{\phi}_{i,t}\odot\bm{u}\|\le \frac{1}{\tau}\|\bm{u}\|,\quad \forall\bm{u}\in\mathbb{R}^d .
\end{align}
Therefore,
\begin{equation}
\begin{aligned}
    \|\bm{p}_{i,t}\|&=\|\bm{\phi}_{i,t}\odot\bm{u}_{i,t}\|
    \le \frac{1}{\tau}\|\bm{u}_{i,t}\| \\
    &\le \frac{1-\rho}{\tau}\|\bm{H}_t\bm{g}_{i,t}\|+\frac{\rho}{\tau}\|\bm{\Delta}_t\|
    \le \frac{(1-\rho)MG}{\tau}+\frac{\rho}{\tau}\|\bm{\Delta}_t\|.
\end{aligned}
\label{eq:p_step_bound}
\end{equation}
At $t=0$, $\bm{\Delta}_0=\bm{0}$, hence $\|\bm{p}_{i,0}\|\le(1-\rho)MG/\tau\le P_{\rho,\tau}$. Suppose $\|\bm{p}_{i,s}\|\le P_{\rho,\tau}$ for all $s<t$ and all clients $i$. By Lemma~\ref{lem:correction_average}, $\bm{\Delta}_t$ is a convex combination of previous effective directions. Therefore,
\begin{align}
    \|\bm{\Delta}_t\|\le \max_{s<t,i}\|\bm{p}_{i,s}\|\le P_{\rho,\tau}.
\end{align}
Substituting this into \eqref{eq:p_step_bound} and using $P_{\rho,\tau}=(1-\rho)MG/(\tau-\rho)$ gives
\begin{align}
    \frac{(1-\rho)MG}{\tau}+\frac{\rho}{\tau}P_{\rho,\tau}
    =\frac{(1-\rho)MG}{\tau}\cdot\frac{\tau-\rho+\rho}{\tau-\rho}=P_{\rho,\tau}.
\end{align}
Hence $\|\bm{p}_{i,t}\|\le P_{\rho,\tau}$, completing the induction.
\end{proof}

\begin{lemma}[Bounded corrected direction]
\label{lem:u_bounded}
Under the conditions of Lemma~\ref{lem:direction_bounded_new}, the corrected direction satisfies
\begin{equation}
    \|\bm{u}_{i,t}\|\le \tau P_{\rho,\tau}.
    \label{eq:u_bounded}
\end{equation}
\end{lemma}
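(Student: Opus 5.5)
We bound $\bm{u}_{i,t}$ directly from its definition \eqref{eq:fedact_direction_virtual}, using the triangle inequality together with the estimates already collected for $\bm{H}_t$, $\bm{g}_{i,t}$ and $\bm{\Delta}_t$. This gives
\begin{align}
    \|\bm{u}_{i,t}\|\le (1-\rho)\|\bm{H}_t\bm{g}_{i,t}\|+\rho\|\bm{\Delta}_t\|.
\end{align}
The two terms are handled as follows.
\begin{itemize}
\item For the first term, the spectral bound \eqref{eq:H_bound_new} gives $\|\bm{H}_t\|\le M$. Assumption~\ref{ass:bounded_gradient} gives $\|\bm{g}_{i,t}\|\le G$. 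Hence $(1-\rho)\|\bm{H}_t\bm{g}_{i,t}\|\le (1-\rho)MG$.
\item For the second term, Lemma~\ref{lem:correction_average} shows that $\bm{\Delta}_t$ is an average of previous effective directions $\bm{p}_{i,s}$, so it lies in their convex hull. Lemma~\ref{lem:direction_bounded_new} then gives $\|\bm{\Delta}_t\|\le P_{\rho,\tau}$. In the first round $\bm{\Delta}_t=\bm{0}$, so the bound holds trivially there.
\end{itemize}

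Combining the two estimates,
\begin{align}
    \|\bm{u}_{i,t}\|\le (1-\rho)MG+\rho P_{\rho,\tau}.
\end{align}
The proof finishes with one algebraic identity, the same one used in the induction step of Lemma~\ref{lem:direction_bounded_new}. Since $(1-\rho)MG=(\tau-\rho)P_{\rho,\tau}$ by the definition of $P_{\rho,\tau}$ in \eqref{eq:P_new}, we get
\begin{align}
    (1-\rho)MG+\rho P_{\rho,\tau}=(\tau-\rho)P_{\rho,\tau}+\rho P_{\rho,\tau}=\tau P_{\rho,\tau}.
\end{align}
This is exactly the claimed bound \eqref{eq:u_bounded}.

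There is no real obstacle in this argument. The one point that needs care is invoking Lemma~\ref{lem:direction_bounded_new} for all earlier steps $s<t$ when bounding $\|\bm{\Delta}_t\|$, so that the argument is not circular. Lemma~\ref{lem:direction_bounded_new} holds for every $t$ by its own induction, so this is legitimate.

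As a side remark, \eqref{eq:u_bounded} is essentially the intermediate quantity in \eqref{eq:p_step_bound} before division by $\tau$. It can therefore also be read off from that display directly, giving the bound before the factor $1/\tau$ from $\bm{\phi}$ is applied.
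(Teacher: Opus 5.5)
Your proposal is correct and matches the paper's proof. You apply the triangle inequality to $\bm{u}_{i,t}=(1-\rho)\bm{H}_t\bm{g}_{i,t}+\rho\bm{\Delta}_t$, bound the two terms by $(1-\rho)MG$ and $\rho P_{\rho,\tau}$ (the latter via the convex-combination argument from Lemma~\ref{lem:direction_bounded_new}, using only directions $\bm{p}_{i,s}$ with $s<t$), and finish with the identity $(1-\rho)MG+\rho P_{\rho,\tau}=\tau P_{\rho,\tau}$, exactly as the paper does.
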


\begin{proof}
By the definition of $\bm{u}_{i,t}$ and Lemma~\ref{lem:direction_bounded_new},
\begin{align}
\begin{aligned}
    \|\bm{u}_{i,t}\|&\le (1-\rho)\|\bm{H}_t\bm{g}_{i,t}\|+\rho\|\bm{\Delta}_t\| \\
    &\le (1-\rho)MG+\rho P_{\rho,\tau}
    =(1-\rho)MG\cdot\frac{\tau-\rho+\rho}{\tau-\rho}
    =\frac{\tau(1-\rho)MG}{\tau-\rho}=\tau P_{\rho,\tau}.
\end{aligned}
\end{align}
\end{proof}

\begin{lemma}[Local drift bound]
\label{lem:local_drift_bound}
Define
\begin{align}
    D_t=\frac{1}{N}\sum_{i=1}^{N}\mathbb{E}\|\bm{x}_{i,t}-\bar{\bm{x}}_t\|^2 .
\end{align}
Then
\begin{equation}
    D_t\le 4\eta^2K^2P_{\rho,\tau}^2 .
    \label{eq:drift_bound}
\end{equation}
\end{lemma}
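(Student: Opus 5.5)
The plan is to exploit the synchronization structure: at the start of every communication round all clients are reset to the same model, $\bm{x}_i^{r,0}=\bm{x}^r$ for every $i$. The deviation of a local iterate from the virtual average therefore accumulates only the directions taken within the current round, and Lemma~\ref{lem:direction_bounded_new} controls each such direction uniformly.

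\textbf{Step 1: write the drift as a sum of in-round directions.} Fix a flattened step $t$ lying in round $r$ at local index $k\le K$. Because $\lambda=0$, we have
\begin{align}
    \bm{x}_{i,t}=\bm{x}^r-\eta\sum_{s=0}^{k-1}\bm{p}_i^{r,s}, \qquad \bar{\bm{x}}_t=\bm{x}^r-\eta\sum_{s=0}^{k-1}\bar{\bm{p}}^{r,s}.
\end{align}
Hence
\begin{align}
    \bm{x}_{i,t}-\bar{\bm{x}}_t=-\eta\sum_{s=0}^{k-1}\left(\bm{p}_i^{r,s}-\bar{\bm{p}}^{r,s}\right).
\end{align}

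\textbf{Step 2: bound each difference crudely.} By the triangle inequality and Lemma~\ref{lem:direction_bounded_new}, each difference satisfies $\|\bm{p}_i^{r,s}-\bar{\bm{p}}^{r,s}\|\le 2P_{\rho,\tau}$. Summing at most $K$ such terms gives the almost-sure bound
\begin{align}
    \|\bm{x}_{i,t}-\bar{\bm{x}}_t\|\le 2\eta K P_{\rho,\tau}.
\end{align}

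\textbf{Step 3: square and average.} Squaring the bound gives $4\eta^2K^2P_{\rho,\tau}^2$ for every client. Taking expectations and averaging over $i$ leaves this constant unchanged, which yields \eqref{eq:drift_bound}. If one prefers to stay with squared norms throughout, Jensen's inequality $\|\sum_{s=1}^k a_s\|^2\le k\sum_{s=1}^k\|a_s\|^2$ gives the same $4K^2$ factor. One could also use $\frac1N\sum_i\|\bm{p}_i-\bar{\bm{p}}\|^2\le\frac1N\sum_i\|\bm{p}_i\|^2$ to improve the constant to $K^2$, but this is not needed.

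\textbf{Main obstacle.} The only delicate point is bookkeeping. I need to confirm that the flattened index $t$ and the virtual average agree with the in-round indexing. In particular, at a round boundary the server average $\bm{x}^{r+1}$ coincides with $\bar{\bm{x}}_t$ under full participation (Assumption~\ref{ass:full_participation}), so the drift is reset to zero at the start of each round and never accumulates more than $K$ steps.

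Lemma~\ref{lem:direction_bounded_new} already absorbs the stale correction $\bm{\Delta}_t$, including its staleness $\Gamma$, into the uniform bound $P_{\rho,\tau}$. Therefore no staleness or heterogeneity argument is required, and the bound holds deterministically before expectations are taken.
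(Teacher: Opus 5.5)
Your proof is correct and essentially the paper's argument. Both use the reset $\bm{x}_i^{r,0}=\bm{x}^r$ and the bound $\|\bm{p}_{i,t}\|\le P_{\rho,\tau}$ from Lemma~\ref{lem:direction_bounded_new} to get $\|\bm{x}_{i,t}-\bar{\bm{x}}_t\|\le 2\eta K P_{\rho,\tau}$ almost surely, except that the paper gets the factor $2$ through the anchor $\bm{x}^r$ (bounding $\|\bm{x}_{i,t}-\bm{x}^r\|$ and $\|\bar{\bm{x}}_t-\bm{x}^r\|$ separately) while you bound each $\|\bm{p}_i^{r,s}-\bar{\bm{p}}^{r,s}\|\le 2P_{\rho,\tau}$.
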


\begin{proof}
At the beginning of each communication round, all clients are initialized from the same global model $\bm{x}^{r}$. During at most $K$ local steps, each local update has norm at most $\eta P_{\rho,\tau}$ by Lemma~\ref{lem:direction_bounded_new}. Hence, for every client $i$,
\begin{equation}
    \|\bm{x}_{i,t}-\bm{x}^{r}\|\le \eta KP_{\rho,\tau}.
    \label{eq:client_drift_to_anchor}
\end{equation}
Since $\bar{\bm{x}}_t=N^{-1}\sum_{i=1}^{N}\bm{x}_{i,t}$, we have
\begin{align}
    \|\bar{\bm{x}}_t-\bm{x}^{r}\|\le \frac{1}{N}\sum_{i=1}^{N}\|\bm{x}_{i,t}-\bm{x}^{r}\|\le \eta KP_{\rho,\tau}.
\end{align}
Combining the two bounds,
\begin{align}
    \|\bm{x}_{i,t}-\bar{\bm{x}}_t\|\le \|\bm{x}_{i,t}-\bm{x}^{r}\|+\|\bar{\bm{x}}_t-\bm{x}^{r}\|\le 2\eta KP_{\rho,\tau}.
\end{align}
Squaring and averaging over all clients gives \eqref{eq:drift_bound}.
\end{proof}

\subsection{Second-Moment Bound Using Variance and Heterogeneity}

The following bound controls the averaged update norm using stochastic variance, data heterogeneity, and local drift.

\begin{lemma}[Second moment of \method directions]
\label{lem:second_moment_variance_heterogeneity}
Suppose Assumptions~\ref{ass:smoothness}--\ref{ass:bounded_heterogeneity} hold. Then
\begin{equation}
    \mathbb{E}\|\bar{\bm{p}}_t\|^2\le q_g\mathbb{E}\|\nabla f(\bar{\bm{x}}_t)\|^2+q_0,
    \label{eq:p_second_moment}
\end{equation}
where
\begin{equation}
    q_g=\frac{8(1-\rho)^2M^2}{\tau^2},
    \label{eq:qg_def}
\end{equation}
and
\begin{equation}
    q_0=\frac{8(1-\rho)^2M^2}{\tau^2}\left(\sigma^2+\zeta^2+4L^2\eta^2K^2P_{\rho,\tau}^2\right)+\frac{2\rho^2}{\tau^2}P_{\rho,\tau}^2 .
    \label{eq:q0_def}
\end{equation}
\end{lemma}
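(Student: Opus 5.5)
We split $\bar{\bm p}_t$ into a gradient part and a correction part, and bound each separately. The tool is $\|\bm{\phi}_{i,t}\odot\bm u\|\le \tau^{-1}\|\bm u\|$, which holds since every ACT coefficient is at most $1/\tau$. The ACT vector $\bm\phi_{i,t}$ depends on $\bm g_{i,t}$, so it is not predictable and we cannot take conditional expectations through it. Instead we bound pathwise and only then take expectations; this is why the constant picks up factors of two rather than a clean variance term.

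\textbf{Step 1: separate the two parts.} Write $\bm p_{i,t}=(1-\rho)\bm\phi_{i,t}\odot \bm H_t\bm g_{i,t}+\rho\,\bm\phi_{i,t}\odot\bm\Delta_t$ and apply $\|a+b\|^2\le 2\|a\|^2+2\|b\|^2$ to the average $\bar{\bm p}_t$. The correction part is handled directly. By Jensen and the $1/\tau$ bound it is at most $\frac{2\rho^2}{\tau^2}\|\bm\Delta_t\|^2$. Lemma~\ref{lem:correction_average} shows $\bm\Delta_t$ is a convex combination of past effective directions, and Lemma~\ref{lem:direction_bounded_new} bounds each of those, so $\|\bm\Delta_t\|\le P_{\rho,\tau}$. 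This gives exactly the term $\frac{2\rho^2}{\tau^2}P_{\rho,\tau}^2$ in $q_0$.

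\textbf{Step 2: reduce the gradient part to client gradients.} By Jensen over clients, $\|\bm\phi\|_\infty\le 1/\tau$ and $\bm H_t\preceq M\bm I$, the gradient part is at most
\[
\frac{2(1-\rho)^2M^2}{\tau^2}\cdot\frac1N\sum_i\|\bm g_{i,t}\|^2 .
\]

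\textbf{Step 3: four-way split of each client gradient.} Decompose
\[
\bm g_{i,t}=(\bm g_{i,t}-\nabla f_i(\bm x_{i,t}))+(\nabla f_i(\bm x_{i,t})-\nabla f_i(\bar{\bm x}_t))+(\nabla f_i(\bar{\bm x}_t)-\nabla f(\bar{\bm x}_t))+\nabla f(\bar{\bm x}_t),
\]
and use $\|\sum_{k=1}^4 a_k\|^2\le 4\sum_k\|a_k\|^2$. Taking expectations, each piece is bounded by an earlier result:
\begin{itemize}
\item the noise piece is at most $\sigma^2$ by Assumption~\ref{ass:bounded_variance};
\item the drift piece is at most $L^2D_t\le 4L^2\eta^2K^2P_{\rho,\tau}^2$ by Assumption~\ref{ass:smoothness} and Lemma~\ref{lem:local_drift_bound};
\item the heterogeneity piece, averaged over clients, is at most $\zeta^2$ by Assumption~\ref{ass:bounded_heterogeneity};
\item the last piece is $\|\nabla f(\bar{\bm x}_t)\|^2$.
\end{itemize}
The constant then comes out as $2\cdot 4=8$, giving $q_g=8(1-\rho)^2M^2/\tau^2$ and the first term of $q_0$ with $\sigma^2+\zeta^2+4L^2\eta^2K^2P_{\rho,\tau}^2$.

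\textbf{Main obstacle.} The delicate point is that $\bm\phi_{i,t}$ and $\bm g_{i,t}$ are correlated. This forbids the usual cross-term cancellation in $\mathbb E\|\bm g\|^2=\|\nabla f_i\|^2+\text{var}$. We avoid it by using the crude triangle-type inequalities above, which need no independence, at the cost of the factor $8$. We also need to check that the drift bound from Lemma~\ref{lem:local_drift_bound} is deterministic and uniform in $t$, so that it survives the expectation. It is, since it rests only on the almost-sure direction bound $P_{\rho,\tau}$.
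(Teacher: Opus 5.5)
Your proposal is correct and follows essentially the same route as the paper: the pathwise bound $\|\bm{\phi}_{i,t}\odot\bm{u}\|\le\tau^{-1}\|\bm{u}\|$, a two-way split that produces the $\frac{2\rho^2}{\tau^2}P_{\rho,\tau}^2$ term via $\|\bm{\Delta}_t\|\le P_{\rho,\tau}$, and the identical four-way decomposition of $\bm{g}_{i,t}$ combined with Lemma~\ref{lem:local_drift_bound}, which yields exactly $q_g$ and $q_0$. The only cosmetic difference is ordering: you split $\bar{\bm{p}}_t$ into its gradient and correction parts before applying Jensen over clients, whereas the paper applies Jensen first and then splits each $\bm{p}_{i,t}$; the constants come out the same either way.
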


\begin{proof}
By Jensen's inequality,
\begin{align}
    \mathbb{E}\|\bar{\bm{p}}_t\|^2=\mathbb{E}\left\|\frac{1}{N}\sum_{i=1}^{N}\bm{p}_{i,t}\right\|^2\le \frac{1}{N}\sum_{i=1}^{N}\mathbb{E}\|\bm{p}_{i,t}\|^2 .
\end{align}
Using $\|\bm{\phi}_{i,t}\odot\bm{u}_{i,t}\|\le\tau^{-1}\|\bm{u}_{i,t}\|$ and $\|a+b\|^2\le2\|a\|^2+2\|b\|^2$, we obtain
\begin{align}
\begin{aligned}
    \|\bm{p}_{i,t}\|^2
    &\le \frac{1}{\tau^2}\left\|(1-\rho)\bm{H}_t\bm{g}_{i,t}+\rho\bm{\Delta}_t\right\|^2 \\
    &\le \frac{2(1-\rho)^2M^2}{\tau^2}\|\bm{g}_{i,t}\|^2+\frac{2\rho^2}{\tau^2}P_{\rho,\tau}^2 .
\end{aligned}
\end{align}
It remains to bound the averaged second moment of $\bm{g}_{i,t}$. We decompose
\begin{align}
\begin{aligned}
    \bm{g}_{i,t}
    &=\nabla f(\bar{\bm{x}}_t)+\left(\nabla f_i(\bar{\bm{x}}_t)-\nabla f(\bar{\bm{x}}_t)\right) \\
    &\quad+\left(\nabla f_i(\bm{x}_{i,t})-\nabla f_i(\bar{\bm{x}}_t)\right)
    +\left(\bm{g}_{i,t}-\nabla f_i(\bm{x}_{i,t})\right).
\end{aligned}
\end{align}
Applying $\|a+b+c+d\|^2\le4(\|a\|^2+\|b\|^2+\|c\|^2+\|d\|^2)$, taking expectation, and averaging over all clients gives
\begin{align}
\begin{aligned}
    \frac{1}{N}\sum_{i=1}^{N}\mathbb{E}\|\bm{g}_{i,t}\|^2
    &\le 4\mathbb{E}\|\nabla f(\bar{\bm{x}}_t)\|^2
    +4\mathbb{E}\left[\frac{1}{N}\sum_{i=1}^{N}\left\|\nabla f_i(\bar{\bm{x}}_t)-\nabla f(\bar{\bm{x}}_t)\right\|^2\right] \\
    &\quad+4L^2\mathbb{E}\left[\frac{1}{N}\sum_{i=1}^{N}\|\bm{x}_{i,t}-\bar{\bm{x}}_t\|^2\right]+4\sigma^2 \\
    &\le 4\mathbb{E}\|\nabla f(\bar{\bm{x}}_t)\|^2+4\zeta^2+4L^2D_t+4\sigma^2 \\
    &\le 4\mathbb{E}\|\nabla f(\bar{\bm{x}}_t)\|^2+4\left(\sigma^2+\zeta^2+4L^2\eta^2K^2P_{\rho,\tau}^2\right),
\end{aligned}
\end{align}
where we used Assumptions~\ref{ass:bounded_heterogeneity} and \ref{ass:bounded_variance}, $L$-smoothness, and Lemma~\ref{lem:local_drift_bound}. Substituting this bound back gives \eqref{eq:p_second_moment}.
\end{proof}

\subsection{Descent Lower Bound}

We next lower bound the descent produced by the averaged \method direction. Since ACT selects coordinates using the stochastic trust score $\bm{s}_{i,t}=\bm{u}_{i,t}\odot\bm{g}_{i,t}$, the selected coordinates need not be aligned with $\nabla f(\bar{\bm{x}}_t)$ in the worst case. We therefore use a deterministic bound that only relies on the range of the ACT coefficients.

\begin{lemma}[ACT inner-product bound]
\label{lem:act_conservative_bound}
Let $\bm{r},\bm{u}\in\mathbb{R}^d$ and let $\bm{\phi}\in\mathbb{R}^d$ be any ACT coefficient vector satisfying
\begin{align}
    \tau\le \phi_j\le \frac{1}{\tau},\quad j=1,\ldots,d .
\end{align}
Then
\begin{equation}
    \left\langle \bm{r},\bm{\phi}\odot\bm{u}\right\rangle
    \ge \tau\langle\bm{r},\bm{u}\rangle-\left(\frac{1}{\tau}-\tau\right)\|\bm{r}\|\|\bm{u}\|.
    \label{eq:act_conservative_bound}
\end{equation}
\end{lemma}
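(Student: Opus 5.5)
The plan is to split the modulation vector into a uniform part plus a bounded residual, namely $\bm{\phi}=\tau\bm{1}+(\bm{\phi}-\tau\bm{1})$. Writing $\bm{\delta}=\bm{\phi}-\tau\bm{1}$, the coordinatewise range assumption $\tau\le\phi_j\le 1/\tau$ gives $0\le\delta_j\le \frac{1}{\tau}-\tau$ for every $j$. By bilinearity of the inner product and of the Hadamard product,
\begin{align}
    \langle\bm{r},\bm{\phi}\odot\bm{u}\rangle=\tau\langle\bm{r},\bm{u}\rangle+\langle\bm{r},\bm{\delta}\odot\bm{u}\rangle .
\end{align}
The first term is exactly the leading term in \eqref{eq:act_conservative_bound}. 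It therefore remains to lower bound the residual term by $-(\frac{1}{\tau}-\tau)\|\bm{r}\|\|\bm{u}\|$.

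For the residual, apply Cauchy--Schwarz: $\langle\bm{r},\bm{\delta}\odot\bm{u}\rangle\ge-\|\bm{r}\|\,\|\bm{\delta}\odot\bm{u}\|$. Since $\bm{\delta}$ is a diagonal scaling with entries in $[0,\frac{1}{\tau}-\tau]$, we have $\|\bm{\delta}\odot\bm{u}\|^2=\sum_j\delta_j^2u_j^2\le(\frac{1}{\tau}-\tau)^2\|\bm{u}\|^2$. Taking square roots and combining with the decomposition gives the claim.

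There is essentially no obstacle. The only point to check is that $\frac{1}{\tau}-\tau\ge0$, which holds for $\tau\in(0,1]$, so the bound on $|\delta_j|$ is valid. Note also that centering at $\tau$ rather than at the midpoint of $[\tau,1/\tau]$ is deliberate. It keeps the multiplier of $\langle\bm{r},\bm{u}\rangle$ equal to $\tau$, matching the statement, at the cost of the residual width $\frac{1}{\tau}-\tau$. Finally, the bound is deterministic and holds for any admissible $\bm{\phi}$, regardless of how the top-$\lfloor\tau d\rfloor$ set was chosen from $\bm{s}_{i,t}$. This is why it can later be applied with $\bm{r}=\nabla f(\bar{\bm{x}}_t)$ even though $\bm{\phi}_{i,t}$ depends on the stochastic gradient.
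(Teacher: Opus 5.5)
Your proposal is correct and follows the paper's proof essentially verbatim: the paper also writes $\bm{\phi}=\tau\bm{1}+\bm{a}$ with $0\le a_j\le 1/\tau-\tau$, and bounds the residual by Cauchy--Schwarz together with $\|\bm{a}\odot\bm{u}\|\le\|\bm{a}\|_\infty\|\bm{u}\|$. The one caveat is minor: the condition $1/\tau-\tau\ge 0$ that you check separately already follows from the hypothesis $\tau\le\phi_j\le 1/\tau$, since that interval must be nonempty.
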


\begin{proof}
Write $\bm{\phi}=\tau\bm{1}+\bm{a}$ with $0\le a_j\le 1/\tau-\tau$ for all $j$. Then
\begin{align}
\begin{aligned}
    \langle\bm{r},\bm{\phi}\odot\bm{u}\rangle
    &=\tau\langle\bm{r},\bm{u}\rangle+\langle\bm{r},\bm{a}\odot\bm{u}\rangle \\
    &\ge \tau\langle\bm{r},\bm{u}\rangle-\|\bm{r}\|\|\bm{a}\odot\bm{u}\|.
\end{aligned}
\end{align}
Using $\|\bm{a}\odot\bm{u}\|\le\|\bm{a}\|_\infty\|\bm{u}\|\le(1/\tau-\tau)\|\bm{u}\|$, we obtain \eqref{eq:act_conservative_bound}.
\end{proof}

\begin{lemma}[Descent lower bound]
\label{lem:descent_lower_bound_new}
Suppose Assumptions~\ref{ass:smoothness}--\ref{ass:full_participation} hold. Let $\lambda=0$ and $0<\rho<\tau\le 1$. Define
\begin{align}
    C_\tau=1-\tau^2,
\end{align}
and
\begin{align}
    A_t=\mathbb{E}\left[\left\langle\nabla f(\bar{\bm{x}}_t),\bar{\bm{p}}_t\right\rangle\right].
\end{align}
Then
\begin{equation}
    A_t\ge a_{\rho,\tau}\mathbb{E}\|\nabla f(\bar{\bm{x}}_t)\|^2-b_{\rho,\tau}(\eta),
    \label{eq:descent_lower_final}
\end{equation}
where
\begin{equation}
    a_{\rho,\tau}=\frac{\tau(1-\rho)\mu}{4},
    \label{eq:a_def}
\end{equation}
and
\begin{equation}
    b_{\rho,\tau}(\eta)=\tau(1-\rho)\frac{M^2L^2}{2\mu}\cdot 4\eta^2K^2P_{\rho,\tau}^2
    +\frac{2\tau\rho^2P_{\rho,\tau}^2}{(1-\rho)\mu}
    +\frac{2C_\tau^2P_{\rho,\tau}^2}{\tau(1-\rho)\mu}.
    \label{eq:b_def}
\end{equation}
\end{lemma}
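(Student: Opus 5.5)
The plan is to lower bound $A_t$ client by client and then average. Write $\bm{r}=\nabla f(\bar{\bm{x}}_t)$. For each $i$, apply Lemma~\ref{lem:act_conservative_bound} with $\bm{u}=\bm{u}_{i,t}$ to get
\begin{align}
\langle \bm{r},\bm{p}_{i,t}\rangle\ge \tau\langle\bm{r},\bm{u}_{i,t}\rangle-\Big(\tfrac{1}{\tau}-\tau\Big)\|\bm{r}\|\|\bm{u}_{i,t}\|.
\end{align}
Since $\tfrac1\tau-\tau=C_\tau/\tau$ and $\|\bm{u}_{i,t}\|\le\tau P_{\rho,\tau}$ by Lemma~\ref{lem:u_bounded}, the penalty term is at most $C_\tau P_{\rho,\tau}\|\bm{r}\|$. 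Then expand $\tau\langle\bm{r},\bm{u}_{i,t}\rangle=\tau(1-\rho)\langle\bm{r},\bm{H}_t\bm{g}_{i,t}\rangle+\tau\rho\langle\bm{r},\bm{\Delta}_t\rangle$. The $\bm{\Delta}_t$ term is bounded below by $-\tau\rho P_{\rho,\tau}\|\bm{r}\|$, using $\|\bm{\Delta}_t\|\le P_{\rho,\tau}$ from Lemmas~\ref{lem:correction_average} and \ref{lem:direction_bounded_new}.

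For the main term, use the predictable preconditioner (Remark~\ref{rem:predictable_precond}). Because $\bm{H}_t$ and $\bar{\bm{x}}_t$ are $\mathcal{F}_t$-measurable, conditioning gives
\begin{align}
\mathbb{E}\Big\langle\bm{r},\bm{H}_t\tfrac1N\textstyle\sum_i\bm{g}_{i,t}\Big\rangle=\mathbb{E}\Big\langle\bm{r},\bm{H}_t\tfrac1N\textstyle\sum_i\nabla f_i(\bm{x}_{i,t})\Big\rangle.
\end{align}
Split $\tfrac1N\sum_i\nabla f_i(\bm{x}_{i,t})=\bm{r}+\bm{e}_t$, where $\bm{e}_t=\tfrac1N\sum_i(\nabla f_i(\bm{x}_{i,t})-\nabla f_i(\bar{\bm{x}}_t))$. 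Full participation makes the heterogeneity terms cancel exactly. Then $\langle\bm{r},\bm{H}_t\bm{r}\rangle\ge\mu\|\bm{r}\|^2$. For the cross term, Young's inequality gives
\begin{align}
\langle\bm{r},\bm{H}_t\bm{e}_t\rangle\ge-\tfrac{\mu}{2}\|\bm{r}\|^2-\tfrac{M^2}{2\mu}\|\bm{e}_t\|^2,
\end{align}
and $L$-smoothness with Jensen and Lemma~\ref{lem:local_drift_bound} give $\mathbb{E}\|\bm{e}_t\|^2\le L^2D_t\le 4L^2\eta^2K^2P_{\rho,\tau}^2$. This leaves $\tau(1-\rho)\tfrac{\mu}{2}\mathbb{E}\|\bm{r}\|^2$ minus the first term of $b_{\rho,\tau}$, which matches its form exactly.

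The two linear penalties $\tau\rho P_{\rho,\tau}\|\bm{r}\|$ and $C_\tau P_{\rho,\tau}\|\bm{r}\|$ are absorbed with Young's inequality, each spending $\tfrac{\tau(1-\rho)\mu}{8}\|\bm{r}\|^2$. The first gives
\begin{align}
\tau\rho P\|\bm{r}\|\le\tfrac{\tau(1-\rho)\mu}{8}\|\bm{r}\|^2+\tfrac{2\tau\rho^2P^2}{(1-\rho)\mu}.
\end{align}
The second gives
\begin{align}
C_\tau P\|\bm{r}\|\le\tfrac{\tau(1-\rho)\mu}{8}\|\bm{r}\|^2+\tfrac{2C_\tau^2P^2}{\tau(1-\rho)\mu}.
\end{align}
Both constants coincide with the stated $b_{\rho,\tau}$. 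The remaining coefficient is $\tfrac{\tau(1-\rho)\mu}{2}-\tfrac{\tau(1-\rho)\mu}{4}=a_{\rho,\tau}$. Finally, average over $i$ and take total expectation.

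The main obstacle is justifying the conditional-expectation step. The ACT coefficients $\bm{\phi}_{i,t}$ depend on $\bm{g}_{i,t}$, so unbiasedness cannot be applied to $\bm{p}_{i,t}$ directly. The plan avoids this by first using Lemma~\ref{lem:act_conservative_bound} pointwise to reduce to $\bm{u}_{i,t}$, which is linear in $\bm{g}_{i,t}$. The cost is the worst-case $C_\tau$ term.
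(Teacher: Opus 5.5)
Your proposal is correct and follows the paper's proof essentially step for step: you apply the ACT inner-product bound pointwise with $\|\bm{u}_{i,t}\|\le\tau P_{\rho,\tau}$, use the predictable preconditioner and unbiasedness with the add-and-subtract of $\nabla f_i(\bar{\bm{x}}_t)$, and make the same two Young splits with weight $\tau(1-\rho)\mu/4$, which yield exactly $a_{\rho,\tau}$ and $b_{\rho,\tau}(\eta)$. The only cosmetic difference is in the drift cross term: you apply Young's inequality pointwise, while the paper first applies Cauchy--Schwarz in expectation to get $ML\sqrt{\mathbb{E}\|\nabla f(\bar{\bm{x}}_t)\|^2}\sqrt{D_t}$ and then Young, and the resulting constants are identical.
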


\begin{proof}
Applying Lemma~\ref{lem:act_conservative_bound} with $\bm{r}=\nabla f(\bar{\bm{x}}_t)$ and $\bm{u}=\bm{u}_{i,t}$, and using Lemma~\ref{lem:u_bounded}, we obtain
\begin{align}
\begin{aligned}
    \langle\nabla f(\bar{\bm{x}}_t),\bm{p}_{i,t}\rangle
    &\ge \tau\langle\nabla f(\bar{\bm{x}}_t),\bm{u}_{i,t}\rangle
    -\left(\frac{1}{\tau}-\tau\right)\|\nabla f(\bar{\bm{x}}_t)\|\|\bm{u}_{i,t}\| \\
    &\ge \tau\langle\nabla f(\bar{\bm{x}}_t),\bm{u}_{i,t}\rangle
    -C_\tau P_{\rho,\tau}\|\nabla f(\bar{\bm{x}}_t)\|,
\end{aligned}
\end{align}
where $(1/\tau-\tau)\tau P_{\rho,\tau}=(1-\tau^2)P_{\rho,\tau}=C_\tau P_{\rho,\tau}$. Averaging over all clients and taking expectation gives
\begin{equation}
\begin{aligned}
    A_t
    &\ge \tau(1-\rho)\mathbb{E}\left\langle\nabla f(\bar{\bm{x}}_t),\bm{H}_t\bar{\bm{g}}_t\right\rangle
    +\tau\rho\mathbb{E}\left\langle\nabla f(\bar{\bm{x}}_t),\bm{\Delta}_t\right\rangle \\
    &\quad-C_\tau P_{\rho,\tau}\mathbb{E}\|\nabla f(\bar{\bm{x}}_t)\|,
\end{aligned}
\label{eq:A_decomp}
\end{equation}
where $\bar{\bm{g}}_t=N^{-1}\sum_{i=1}^{N}\bm{g}_{i,t}$.

We bound the three terms separately. First, by Remark~\ref{rem:predictable_precond}, $\bm{H}_t$ is $\mathcal{F}_t$-measurable. By unbiasedness and the tower property,
\begin{align}
    \mathbb{E}\left[\bar{\bm{g}}_t\mid\mathcal{F}_t\right]=\frac{1}{N}\sum_{i=1}^{N}\nabla f_i(\bm{x}_{i,t}).
\end{align}
Therefore,
\begin{align}
    \mathbb{E}\left\langle\nabla f(\bar{\bm{x}}_t),\bm{H}_t\bar{\bm{g}}_t\right\rangle
    =\mathbb{E}\left\langle\nabla f(\bar{\bm{x}}_t),\bm{H}_t\frac{1}{N}\sum_{i=1}^{N}\nabla f_i(\bm{x}_{i,t})\right\rangle .
\end{align}
Adding and subtracting $\nabla f_i(\bar{\bm{x}}_t)$ and using $N^{-1}\sum_{i=1}^{N}\nabla f_i(\bar{\bm{x}}_t)=\nabla f(\bar{\bm{x}}_t)$, we obtain
\begin{align}
\begin{aligned}
    \mathbb{E}\left\langle\nabla f(\bar{\bm{x}}_t),\bm{H}_t\bar{\bm{g}}_t\right\rangle
    &=\mathbb{E}\left\langle\nabla f(\bar{\bm{x}}_t),\bm{H}_t\nabla f(\bar{\bm{x}}_t)\right\rangle \\
    &\quad+\mathbb{E}\left\langle\nabla f(\bar{\bm{x}}_t),\bm{H}_t\frac{1}{N}\sum_{i=1}^{N}\left(\nabla f_i(\bm{x}_{i,t})-\nabla f_i(\bar{\bm{x}}_t)\right)\right\rangle .
\end{aligned}
\end{align}
The first term is lower bounded by $\mu\mathbb{E}\|\nabla f(\bar{\bm{x}}_t)\|^2$. For the second term, by Cauchy--Schwarz, $\|\bm{H}_t\|\le M$, the triangle inequality, and $L$-smoothness,
\begin{align}
\begin{aligned}
    &\left|\mathbb{E}\left\langle\nabla f(\bar{\bm{x}}_t),\bm{H}_t\frac{1}{N}\sum_{i=1}^{N}\left(\nabla f_i(\bm{x}_{i,t})-\nabla f_i(\bar{\bm{x}}_t)\right)\right\rangle\right| \\
    &\le ML\,\mathbb{E}\left[\|\nabla f(\bar{\bm{x}}_t)\|\cdot\frac{1}{N}\sum_{i=1}^{N}\|\bm{x}_{i,t}-\bar{\bm{x}}_t\|\right].
\end{aligned}
\end{align}
Using Cauchy--Schwarz and Jensen's inequality,
\begin{align}
    ML\,\mathbb{E}\left[\|\nabla f(\bar{\bm{x}}_t)\|\cdot\frac{1}{N}\sum_{i=1}^{N}\|\bm{x}_{i,t}-\bar{\bm{x}}_t\|\right]
    \le ML\sqrt{\mathbb{E}\|\nabla f(\bar{\bm{x}}_t)\|^2}\sqrt{D_t}.
\end{align}
By Young's inequality,
\begin{align}
    ML\sqrt{\mathbb{E}\|\nabla f(\bar{\bm{x}}_t)\|^2}\sqrt{D_t}
    \le \frac{\mu}{2}\mathbb{E}\|\nabla f(\bar{\bm{x}}_t)\|^2+\frac{M^2L^2}{2\mu}D_t .
\end{align}
Combining the above bounds and using Lemma~\ref{lem:local_drift_bound} yields
\begin{equation}
    \mathbb{E}\left\langle\nabla f(\bar{\bm{x}}_t),\bm{H}_t\bar{\bm{g}}_t\right\rangle
    \ge \frac{\mu}{2}\mathbb{E}\|\nabla f(\bar{\bm{x}}_t)\|^2
    -\frac{M^2L^2}{2\mu}\cdot 4\eta^2K^2P_{\rho,\tau}^2 .
    \label{eq:local_term_lower}
\end{equation}

Second, by Cauchy--Schwarz and $\|\bm{\Delta}_t\|\le P_{\rho,\tau}$ from Lemma~\ref{lem:direction_bounded_new},
\begin{align}
    \tau\rho\mathbb{E}\left\langle\nabla f(\bar{\bm{x}}_t),\bm{\Delta}_t\right\rangle
    \ge -\tau\rho P_{\rho,\tau}\mathbb{E}\|\nabla f(\bar{\bm{x}}_t)\|.
\end{align}
Young's inequality with $\delta_1=\tau(1-\rho)\mu/4$ gives
\begin{equation}
\begin{aligned}
    \tau\rho P_{\rho,\tau}\mathbb{E}\|\nabla f(\bar{\bm{x}}_t)\|
    &\le \frac{\delta_1}{2}\mathbb{E}\|\nabla f(\bar{\bm{x}}_t)\|^2+\frac{\tau^2\rho^2P_{\rho,\tau}^2}{2\delta_1} \\
    &=\frac{\tau(1-\rho)\mu}{8}\mathbb{E}\|\nabla f(\bar{\bm{x}}_t)\|^2+\frac{2\tau\rho^2P_{\rho,\tau}^2}{(1-\rho)\mu}.
\end{aligned}
\label{eq:corr_young}
\end{equation}
Third, Young's inequality with $\delta_2=\tau(1-\rho)\mu/4$ gives
\begin{equation}
\begin{aligned}
    C_\tau P_{\rho,\tau}\mathbb{E}\|\nabla f(\bar{\bm{x}}_t)\|
    &\le \frac{\delta_2}{2}\mathbb{E}\|\nabla f(\bar{\bm{x}}_t)\|^2+\frac{C_\tau^2P_{\rho,\tau}^2}{2\delta_2} \\
    &=\frac{\tau(1-\rho)\mu}{8}\mathbb{E}\|\nabla f(\bar{\bm{x}}_t)\|^2+\frac{2C_\tau^2P_{\rho,\tau}^2}{\tau(1-\rho)\mu}.
\end{aligned}
\label{eq:act_residual_young}
\end{equation}
Combining \eqref{eq:A_decomp}, \eqref{eq:local_term_lower}, \eqref{eq:corr_young}, and \eqref{eq:act_residual_young}, the coefficient of $\mathbb{E}\|\nabla f(\bar{\bm{x}}_t)\|^2$ is
\begin{align}
    \tau(1-\rho)\cdot\frac{\mu}{2}-\frac{\tau(1-\rho)\mu}{8}-\frac{\tau(1-\rho)\mu}{8}
    =\frac{\tau(1-\rho)\mu}{4}=a_{\rho,\tau}.
\end{align}
The remaining constant terms give $b_{\rho,\tau}(\eta)$ in \eqref{eq:b_def}. This proves \eqref{eq:descent_lower_final}.
\end{proof}

\begin{remark}[Correction term]
\label{rem:bypass_recursion}
The correction term $\bm{\Delta}_t$ is an average of previous local directions. One can alternatively exploit this structure through a delayed-recursion argument. We use the direct Cauchy--Schwarz bound instead, which gives a simpler proof and leads to the $\mathcal{O}(\rho^2P_{\rho,\tau}^2)$ term in $b_{\rho,\tau}(\eta)$.
\end{remark}

\subsection{Convergence Rate}

\begin{theorem}[Nonconvex convergence to a stationary neighborhood]
\label{thm:fedact_convergence_final}
Suppose Assumptions~\ref{ass:lower_bound}--\ref{ass:bounded_gradient} hold. Let $\lambda=0$ and $0<\rho<\tau\le 1$. Let $a_{\rho,\tau}$, $b_{\rho,\tau}(\eta)$, $q_g$, and $q_0$ be defined in \eqref{eq:a_def}, \eqref{eq:b_def}, \eqref{eq:qg_def}, and \eqref{eq:q0_def}. If
\begin{equation}
    \eta\le \min\left\{\frac{1}{LK},\frac{a_{\rho,\tau}}{Lq_g}\right\},
    \label{eq:stepsize_condition}
\end{equation}
then
\begin{equation}
    \frac{1}{T}\sum_{t=0}^{T-1}\mathbb{E}\|\nabla f(\bar{\bm{x}}_t)\|^2
    \le \frac{2\Delta_f}{\eta T a_{\rho,\tau}}+\frac{2b_{\rho,\tau}(\eta)}{a_{\rho,\tau}}+\frac{L\eta q_0}{a_{\rho,\tau}} .
    \label{eq:final_convergence_bound}
\end{equation}
With $\eta=\Theta(1/\sqrt{T})$, the bound becomes
\begin{equation}
    \frac{1}{T}\sum_{t=0}^{T-1}\mathbb{E}\|\nabla f(\bar{\bm{x}}_t)\|^2
    =\mathcal{O}\!\left(\frac{1}{\sqrt{T}}\right)
    +\mathcal{O}\!\left(
        \frac{\rho^2P_{\rho,\tau}^2}{(1-\rho)\mu}
        +\frac{(1-\tau^2)^2P_{\rho,\tau}^2}{\tau(1-\rho)\mu}
    \right).
    \label{eq:neighborhood_rate}
\end{equation}
The hidden constant in $\mathcal{O}(1/\sqrt{T})$ depends on $L,\sigma^2,\zeta^2,K,\rho,\tau,G,\epsilon,\mu,M$; the dependence on $\rho$ and $\tau$ in the stationary-neighborhood term is shown explicitly. Since $T=RK$, the first term is equivalently $\mathcal{O}(1/\sqrt{RK})$.
\end{theorem}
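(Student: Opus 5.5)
The proof is the standard descent-lemma argument on the virtual averaged sequence \eqref{eq:virtual_avg_update}, using Lemma~\ref{lem:second_moment_variance_heterogeneity} and Lemma~\ref{lem:descent_lower_bound_new} as the two ingredients.

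First, $L$-smoothness of $f$ (Assumption~\ref{ass:smoothness}) applied to $\bar{\bm{x}}_{t+1}=\bar{\bm{x}}_t-\eta\bar{\bm{p}}_t$ gives
\begin{align}
    \mathbb{E}f(\bar{\bm{x}}_{t+1})\le \mathbb{E}f(\bar{\bm{x}}_t)-\eta A_t+\frac{L\eta^2}{2}\mathbb{E}\|\bar{\bm{p}}_t\|^2 .
\end{align}
Substituting $A_t\ge a_{\rho,\tau}\mathbb{E}\|\nabla f(\bar{\bm{x}}_t)\|^2-b_{\rho,\tau}(\eta)$ and $\mathbb{E}\|\bar{\bm{p}}_t\|^2\le q_g\mathbb{E}\|\nabla f(\bar{\bm{x}}_t)\|^2+q_0$ yields a gradient coefficient of $-\eta\big(a_{\rho,\tau}-\tfrac{L\eta q_g}{2}\big)$.

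The stepsize condition $\eta\le a_{\rho,\tau}/(Lq_g)$ makes this coefficient at most $-\eta a_{\rho,\tau}/2$. The other condition, $\eta\le 1/(LK)$, keeps the drift-dependent parts of $q_0$ and $b_{\rho,\tau}(\eta)$ (those containing $\eta^2K^2P_{\rho,\tau}^2$) bounded by $\eta$-independent constants, though this is not strictly needed for the inequality. The one-step bound is then
\begin{align}
    \frac{\eta a_{\rho,\tau}}{2}\mathbb{E}\|\nabla f(\bar{\bm{x}}_t)\|^2\le \mathbb{E}f(\bar{\bm{x}}_t)-\mathbb{E}f(\bar{\bm{x}}_{t+1})+\eta b_{\rho,\tau}(\eta)+\frac{L\eta^2q_0}{2}.
\end{align}
Summing over $t=0,\dots,T-1$, telescoping with Assumption~\ref{ass:lower_bound} ($f(\bar{\bm{x}}_T)\ge f^\star$, $\bar{\bm{x}}_0=\bm{x}_0$), and dividing by $\eta Ta_{\rho,\tau}/2$ gives \eqref{eq:final_convergence_bound} exactly.

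For \eqref{eq:neighborhood_rate}, set $\eta=c/\sqrt{T}$ with $c$ small enough that the stepsize condition holds for large $T$. Then:
\begin{itemize}
    \item the term $2\Delta_f/(\eta Ta_{\rho,\tau})$ is $\mathcal{O}(1/\sqrt{T})$;
    \item the term $L\eta q_0/a_{\rho,\tau}$ is $\mathcal{O}(1/\sqrt{T})$, since $q_0$ is bounded once $\eta\le 1/(LK)$;
    \item in $2b_{\rho,\tau}(\eta)/a_{\rho,\tau}$, the drift part scales as $\eta^2K^2$ and is $\mathcal{O}(1/T)$.
\end{itemize}
The two $\eta$-independent pieces of $b_{\rho,\tau}$, divided by $a_{\rho,\tau}=\tau(1-\rho)\mu/4$, give the displayed neighborhood terms $\rho^2P_{\rho,\tau}^2/((1-\rho)\mu)$ and $(1-\tau^2)^2P_{\rho,\tau}^2/(\tau(1-\rho)\mu)$. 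The factors of $\tau$ and $(1-\rho)$ I would simply track up to constants inside the $\mathcal{O}$.

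I expect the only delicate point to be bookkeeping rather than a genuine obstacle. One has to check that $q_g$, $q_0$ and $b_{\rho,\tau}$ are uniformly bounded along the iteration, which uses $P_{\rho,\tau}<\infty$ from $\rho<\tau$. One also has to be honest that the neighborhood terms do not vanish for fixed $\rho,\tau$: they are genuine bias terms caused by ACT and the stale correction.

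For the main-text Theorem~\ref{thm:fedact_convergence}, these bias terms are removed by the $T$-dependent choices. With $\rho_T=\mathcal{O}(T^{-1/4})$ and $1-\tau_T=\mathcal{O}(T^{-1/4})$, we have $\rho_T^2=\mathcal{O}(T^{-1/2})$ and $(1-\tau_T^2)^2\le 4(1-\tau_T)^2=\mathcal{O}(T^{-1/2})$. Since $\tau_T-\rho_T$ is bounded away from zero, $P_{\rho_T,\tau_T}$ stays bounded, so both neighborhood terms become $\mathcal{O}(1/\sqrt{T})$.
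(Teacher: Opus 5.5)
Your proposal is correct and follows the paper's proof essentially step for step: the $L$-smoothness descent inequality on $\bar{x}_t$, substitution of the descent lower bound and second-moment lemmas, the condition $\eta\le a_{\rho,\tau}/(Lq_g)$ to keep the coefficient $\eta a_{\rho,\tau}/2$, telescoping, and then reading off the $\mathcal{O}(1/\sqrt{T})$ terms, the $\mathcal{O}(\eta^2)$ drift term, and the constant neighborhood contributions. One bookkeeping caveat, which the paper shares: dividing the $\eta$-independent parts of $b_{\rho,\tau}(\eta)$ by $a_{\rho,\tau}/2$ actually gives $16\rho^2P_{\rho,\tau}^2/((1-\rho)^2\mu^2)$ and $16(1-\tau^2)^2P_{\rho,\tau}^2/(\tau^2(1-\rho)^2\mu^2)$, so relative to the displayed terms there are extra factors $1/((1-\rho)\mu)$ and $1/(\tau(1-\rho)\mu)$; this is harmless for fixed $\rho,\tau$ and for your $T$-dependent corollary (where $\rho_T\to 0$ and $\tau_T\to 1$), but it means the $\rho,\tau$-dependence is not literally ``shown explicitly.''
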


\begin{proof}
By $L$-smoothness and the averaged update $\bar{\bm{x}}_{t+1}=\bar{\bm{x}}_t-\eta\bar{\bm{p}}_t$,
\begin{align}
    f(\bar{\bm{x}}_{t+1})\le f(\bar{\bm{x}}_t)-\eta\left\langle\nabla f(\bar{\bm{x}}_t),\bar{\bm{p}}_t\right\rangle+\frac{L\eta^2}{2}\|\bar{\bm{p}}_t\|^2 .
\end{align}
Taking expectation and applying Lemma~\ref{lem:descent_lower_bound_new} and Lemma~\ref{lem:second_moment_variance_heterogeneity},
\begin{align}
\begin{aligned}
    \mathbb{E}f(\bar{\bm{x}}_{t+1})
    &\le \mathbb{E}f(\bar{\bm{x}}_t)-\eta a_{\rho,\tau}\mathbb{E}\|\nabla f(\bar{\bm{x}}_t)\|^2+\eta b_{\rho,\tau}(\eta) \\
    &\quad+\frac{L\eta^2}{2}\left(q_g\mathbb{E}\|\nabla f(\bar{\bm{x}}_t)\|^2+q_0\right).
\end{aligned}
\end{align}
By \eqref{eq:stepsize_condition}, $\eta Lq_g/2\le a_{\rho,\tau}/2$, and hence
\begin{align}
    \eta a_{\rho,\tau}-\frac{L\eta^2q_g}{2}\ge \frac{\eta a_{\rho,\tau}}{2}.
\end{align}
Rearranging gives
\begin{align}
    \frac{\eta a_{\rho,\tau}}{2}\mathbb{E}\|\nabla f(\bar{\bm{x}}_t)\|^2
    \le \mathbb{E}f(\bar{\bm{x}}_t)-\mathbb{E}f(\bar{\bm{x}}_{t+1})+\eta b_{\rho,\tau}(\eta)+\frac{L\eta^2q_0}{2}.
\end{align}
Summing over $t=0,\ldots,T-1$, telescoping, and using $f(\bar{\bm{x}}_T)\ge f^\star$,
\begin{align}
    \frac{\eta a_{\rho,\tau}}{2}\sum_{t=0}^{T-1}\mathbb{E}\|\nabla f(\bar{\bm{x}}_t)\|^2
    \le \Delta_f+T\eta b_{\rho,\tau}(\eta)+\frac{TL\eta^2q_0}{2}.
\end{align}
Dividing by $\eta Ta_{\rho,\tau}/2$ yields \eqref{eq:final_convergence_bound}. When $\eta=\Theta(1/\sqrt{T})$, the first term is $\mathcal{O}(1/\sqrt{T})$. The local-drift contribution in $b_{\rho,\tau}(\eta)$ is $\mathcal{O}(\eta^2)$, hence contributes $\mathcal{O}(1/T)$ to \eqref{eq:final_convergence_bound}. The constant components of $b_{\rho,\tau}(\eta)$ contribute the stationary-neighborhood terms in \eqref{eq:neighborhood_rate}. The third term $L\eta q_0/a_{\rho,\tau}$ is $\mathcal{O}(1/\sqrt{T})$. Combining these terms gives \eqref{eq:neighborhood_rate}.
\end{proof}

\begin{proposition}[Choice of the trust ratio $\tau$]
\label{rem:tau_choice}
Theorem~\ref{thm:fedact_convergence_final} allows any fixed $0<\rho<\tau\le1$.
With $\eta=\Theta(1/\sqrt{T})$, \method satisfies
\begin{align}
    \frac{1}{T}\sum_{t=0}^{T-1}
    \mathbb{E}\|\nabla f(\bar{\bm{x}}_t)\|^2
    =
    \mathcal{O}\!\left(\frac{1}{\sqrt{T}}\right)
    +
    \mathcal{B}_{\rho,\tau},
\end{align}
where
\begin{align}
    \mathcal{B}_{\rho,\tau}
    =
    \mathcal{O}\!\left(
        \frac{\rho^2P_{\rho,\tau}^2}{(1-\rho)\mu}
        +
        \frac{(1-\tau^2)^2P_{\rho,\tau}^2}{\tau(1-\rho)\mu}
    \right),
    \quad
    P_{\rho,\tau}
    =
    \frac{(1-\rho)MG}{\tau-\rho}.
\end{align}
Thus, for fixed $\rho$ and $\tau$, \method converges to a stationary
neighborhood at the rate $\mathcal{O}(1/\sqrt{T})$, or equivalently
$\mathcal{O}(1/\sqrt{RK})$.

The parameter $\tau$ controls the strength of coordinate modulation and the
size of the worst-case neighborhood. A smaller $\tau$ yields stronger
coordinate modulation, while a larger $\tau$ shrinks
$\mathcal{B}_{\rho,\tau}$. If exact asymptotic convergence is desired, one may
choose $\rho=\rho_T$ and $\tau=\tau_T$ such that
\begin{align}
    \rho_T=\mathcal{O}(T^{-1/4}),
    \quad
    1-\tau_T=\mathcal{O}(T^{-1/4}),
\end{align}
with $\tau_T-\rho_T$ bounded away from zero. In this case
$P_{\rho_T,\tau_T}=\mathcal{O}(1)$ and
\begin{align}
    \mathcal{B}_{\rho_T,\tau_T}
    =
    \mathcal{O}\!\left(\frac{1}{\sqrt{T}}\right).
\end{align}
Consequently,
\begin{align}
    \frac{1}{T}\sum_{t=0}^{T-1}
    \mathbb{E}\|\nabla f(\bar{\bm{x}}_t)\|^2
    =
    \mathcal{O}\!\left(\frac{1}{\sqrt{T}}\right)
    =
    \mathcal{O}\!\left(\frac{1}{\sqrt{RK}}\right).
\end{align}
\end{proposition}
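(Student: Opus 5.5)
The plan is to obtain the proposition directly from Theorem~\ref{thm:fedact_convergence_final}, by tracking how the stationary-neighborhood term depends on $\rho$ and $\tau$. The theorem is stated for fixed $0<\rho<\tau\le 1$. The first display of the proposition is therefore a restatement of \eqref{eq:neighborhood_rate}, with $\mathcal{B}_{\rho,\tau}$ collecting the two constant components of $2b_{\rho,\tau}(\eta)/a_{\rho,\tau}$. I would verify this by writing $2b_{\rho,\tau}(\eta)/a_{\rho,\tau}$ out explicitly using \eqref{eq:a_def} and \eqref{eq:b_def}. The drift part of $b_{\rho,\tau}(\eta)$ is $\mathcal{O}(\eta^2)=\mathcal{O}(1/T)$ and goes into the $\mathcal{O}(1/\sqrt{T})$ term. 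The other two parts give, up to the factor $8/(\tau(1-\rho)\mu)$, exactly the $\rho^2P^2$ and $(1-\tau^2)^2P^2$ contributions. The monotonicity remarks about $\tau$ then follow by inspection.

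For the vanishing-parameter regime, I substitute $\rho_T$ and $\tau_T$.

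\textbf{Step 1: $P_{\rho_T,\tau_T}=\mathcal{O}(1)$.} This holds because $1-\rho_T\le 1$ and $\tau_T-\rho_T\ge c>0$ by assumption.

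\textbf{Step 2: the neighborhood term.} Since $\tau_T\to 1$, we have $\tau_T\ge 1/2$ for large $T$. Since $\rho_T\to 0$, also $1-\rho_T\ge 1/2$. Both denominators $\tau(1-\rho)\mu$ are therefore bounded below by a constant times $\mu$. We have $\rho_T^2=\mathcal{O}(T^{-1/2})$. Also $(1-\tau_T^2)^2=(1-\tau_T)^2(1+\tau_T)^2\le 4(1-\tau_T)^2=\mathcal{O}(T^{-1/2})$. Hence $\mathcal{B}_{\rho_T,\tau_T}=\mathcal{O}(1/\sqrt{T})$.

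\textbf{Step 3: the remaining terms.} The other terms in \eqref{eq:final_convergence_bound} are $2\Delta_f/(\eta T a)$, $L\eta q_0/a$, and the drift part. Their constants $a_{\rho_T,\tau_T}$, $q_g$, $q_0$ involve $\tau_T^{-1}$, $(1-\rho_T)^{-1}$ and $P_{\rho_T,\tau_T}$. All of these are uniformly bounded in $T$, with $a_{\rho_T,\tau_T}$ bounded below by $\mu/16$. So the hidden constant in $\mathcal{O}(1/\sqrt{T})$ does not blow up.

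\textbf{Step 4: admissibility of the stepsize.} The condition \eqref{eq:stepsize_condition} must hold with $\eta=\Theta(1/\sqrt{T})$. Its right-hand side $\min\{1/(LK),a/(Lq_g)\}$ is bounded below uniformly in $T$ by the bounds in Step 3. The condition therefore holds for all sufficiently large $T$, and for small $T$ the constant in $\Theta(\cdot)$ can be adjusted.

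Summing the pieces gives $\mathcal{O}(1/\sqrt{T})=\mathcal{O}(1/\sqrt{RK})$ via $T=RK$. The hypothesis $\rho_T<\tau_T\le 1$ also has to hold for all $T$ considered, but this is immediate since $\rho_T$ is small and $\tau_T$ is near one.

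The main obstacle is Step 3: ensuring uniformity of the constants when $\rho,\tau$ depend on $T$. Theorem~\ref{thm:fedact_convergence_final} is stated for fixed parameters, so I cannot simply cite its $\mathcal{O}$ statement. Instead I would rework the explicit bound \eqref{eq:final_convergence_bound} and check that each coefficient is bounded using $\tau_T\ge 1/2$, $\rho_T\le 1/2$, and $\tau_T-\rho_T\ge c$. This check is routine but is the only place where the argument goes beyond direct substitution.
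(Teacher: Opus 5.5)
Your proposal is correct and follows the same route as the paper, which substitutes $\rho_T,\tau_T$ into the explicit bound of Theorem~\ref{thm:fedact_convergence_final} and notes that $P_{\rho_T,\tau_T}=\mathcal{O}(1)$, so that $\mathcal{B}_{\rho_T,\tau_T}=\mathcal{O}(1/\sqrt{T})$. Your Steps~3--4 (the uniform lower bounds on $a_{\rho_T,\tau_T}$ and on the stepsize threshold $a/(Lq_g)$) fill in details the paper leaves implicit. So does your observation that $2b_{\rho,\tau}(\eta)/a_{\rho,\tau}$ carries an extra factor $8/(\tau(1-\rho)\mu)$ compared with the displayed $\mathcal{B}_{\rho,\tau}$, which is harmless here because $\tau_T$ and $1-\rho_T$ stay bounded away from zero.
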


\begin{remark}[Degenerate case]
\label{rem:degenerate}
When $\rho=0$ and $\tau=1$, the correction term disappears and $\bm{\phi}_{i,t}=\bm{1}$. Hence
\begin{align}
    \bm{p}_{i,t}=\bm{H}_t\bm{g}_{i,t}.
\end{align}
Both neighborhood terms in \eqref{eq:neighborhood_rate} vanish, and the result reduces to the standard nonconvex preconditioned local-SGD bound with stochastic variance and client heterogeneity.
\end{remark}

}

\section{Appendix C: Additional Experimental Details}

\subsection{More Details on Datasets}
We evaluate \method on both vision and language tasks, covering image classification, federated LLM pre-training, and federated LLM fine-tuning. The vision benchmarks are used to evaluate federated image classification under controlled statistical heterogeneity, while the language benchmarks are used to assess the effectiveness of \method in more challenging federated LLM training scenarios.

\begin{itemize}
    \item \textbf{CIFAR-10}~\cite{krizhevsky2009learning}: 
    CIFAR-10 contains 10 object categories with 6,000 color images per class at a resolution of $32 \times 32$. It is a standard benchmark for image classification and is used to evaluate federated learning under relatively simple visual recognition settings.
    \item \textbf{CIFAR-100}~\cite{krizhevsky2009learning}: 
    CIFAR-100 has the same image resolution and total number of images as CIFAR-10, but contains 100 fine-grained classes with 600 images per class. Compared with CIFAR-10, it provides a more challenging classification task due to the larger label space and fewer images per class.
    \item \textbf{Tiny ImageNet}~\cite{le2015tiny}: 
    Tiny ImageNet is a compact subset of ImageNet with 200 classes and $64 \times 64$ images. Each class contains 500 training images, making it a more challenging vision benchmark than CIFAR-10 and CIFAR-100 in terms of both image resolution and number of categories.
    \item \textbf{C4-en}~\cite{raffel2020exploring}: 
    C4-en is a cleaned English corpus extracted from Common Crawl. We use it for federated LLM pre-training to evaluate whether \method remains effective when dense AdamW updates are repeatedly applied over long training horizons.
    \item \textbf{Alpaca-GPT4}\footnote{\url{https://huggingface.co/datasets/vicgalle/alpaca-gpt4}}: 
    Alpaca-GPT4 is an instruction-following dataset generated with GPT-4~\cite{achiam2023gpt} using the Self-Instruct procedure~\cite{wang2023self}. We use it for federated instruction tuning (FedIT), where clients fine-tune language models on decentralized instruction-following data.
    \item \textbf{HH-RLHF}\footnote{\url{https://huggingface.co/datasets/Anthropic/hh-rlhf}}: 
    HH-RLHF consists of human preference data for helpfulness and harmlessness~\cite{bai2022training}, together with red-teaming data~\cite{ganguli2022red}. We use it for federated value alignment (FedVA), where clients optimize models using decentralized preference or alignment data.
\end{itemize}

For vision tasks, we simulate statistical heterogeneity using Dirichlet partitioning~\cite{hsu2019measuring}. A smaller Dirichlet concentration parameter indicates stronger label distribution skew across clients. In particular, Dir-0.6 represents relatively mild heterogeneity, while Dir-0.1 corresponds to a highly non-IID setting. This allows us to evaluate whether \method remains robust as client data distributions become increasingly imbalanced.

\begin{table}[!ht]
    \centering
    \caption{A detailed summary of the image classification datasets, including the number of classes, image size, dataset splits, and number of training images per class.}
    \label{tab:dataset_summary}
    \vskip -0.2cm
    \resizebox{0.75\linewidth}{!}{%
    \begin{tabular}{lccccccc}
      \toprule
      \textbf{Dataset} & \textbf{\#Classes} & \textbf{Image Size}
      & \textbf{Train} & \textbf{Val} & \textbf{Test}
      & \textbf{Total} & \textbf{Train / Class} \\
      \midrule
      CIFAR-10 
        & 10  & $3 \times 32 \times 32$ & 50,000 & -- & 10,000 & 60,000 & 5,000 \\
      CIFAR-100
        & 100 & $3 \times 32 \times 32$ & 50,000 & -- & 10,000 & 60,000 & 500 \\
      Tiny-ImageNet
        & 200 & $3 \times 64 \times 64$ & 100,000 & 10,000 & 10,000 & 120,000 & 500 \\
      \bottomrule
    \end{tabular}
    }
\end{table}

\subsection{More Training Details}

\paragraph{Image Classification Task.} For image classification, we tune the learning rate of FedAvg, FedProx, SCAFFOLD, and FedAdam from $\{10^{-2}, 3\times10^{-2}, 5\times10^{-2}, 10^{-1}, 3\times10^{-1}\}$ with weight decay $0.001$. For FedLADA, LocalAdamW, FedAdamW, and \method, we tune the learning rate from $\{10^{-4}, 3\times10^{-4}, 5\times10^{-4}, 8\times10^{-4}, 10^{-3}\}$, use $\beta_1=0.9$ and $\beta_2=0.999$, and search weight decay from $\{0.001, 0.01\}$. We apply cosine learning-rate decay for all vision experiments. Unless otherwise specified, image classification experiments use 100 clients, 10\% client participation, batch size 50, 50 local steps, and 300 communication rounds.

\paragraph{Federated LLM Pre-Training Task.} For federated LLM pre-training, we follow the Chinchilla scaling law~\citep{hoffmann2022training} by allocating approximately 20 training tokens per model parameter. We use a learning rate of $3\times10^{-4}$ with AdamW momentum parameters $\beta_1=0.9$ and $\beta_2=0.99$. In the federated setting, we use 20 clients with 20\% client participation per communication round, a batch size of 64, a sequence length of 1024, and 50 local steps on each selected client. Since different model sizes require different token budgets, we adjust the number of communication rounds according to the model scale to match the Chinchilla-style token budget.

\paragraph{Federated LLM Fine-Tuning Task.} For federated instruction tuning, following prior federated LLM benchmarks~\cite{ye2024openfedllm, ye2024fedllm}, we fine-tune Llama2-7B with LoRA on Alpaca-GPT4 for 200 communication rounds. The learning rate is decayed from $5\times10^{-5}$ to $1\times10^{-6}$, and the LoRA rank and scaling factor are set to 32 and 64, respectively. We use the Alpaca template~\cite{taori2023stanford} for instruction formatting.

For federated value alignment, we fine-tune an uncensored Wizard-Vicuna-style instruction-following model~\footnote{\url{https://huggingface.co/QuixiAI/Wizard-Vicuna-7B-Uncensored}} on HH-RLHF for 200 communication rounds. The learning rate is decayed from $5\times10^{-4}$ to $1\times10^{-5}$, and the LoRA rank and scaling factor are set to 8 and 16, respectively. We use the Vicuna template~\cite{chiang2023vicuna} for dialogue formatting. 

Following common practice in LLM pre-training and fine-tuning, we use AdamW~\cite{loshchilov2017decoupled} as the local optimizer for all federated LLM methods, except FedLADA, which follows its original Adam-based update rule~\citep{sun2023efficient}. The key hyperparameters used in our experiments are summarized in Tables~\ref{tab:hyperparams_resnet_vit_swin},~\ref{tab:hyperparams_llama}, and~\ref{tab:hyperparams_fine_tuning}.

\paragraph{Experiment Environments.} All experiments were conducted on GPU servers equipped with NVIDIA RTX 4090 or A100 GPUs, using PyTorch 2.4.1 and CUDA 12.4. We set the random seed to 42 for all experiments.

\begin{table}[!ht]
    \centering
    \caption{Hyperparameter configuration of ResNet-18, ViT-Tiny, and Swin-Lite on CIFAR-10, CIFAR-100 and Tiny-ImageNet across different algorithms.}
    \label{tab:hyperparams_resnet_vit_swin}
    \vskip -0.2cm
    \resizebox{0.85\linewidth}{!}{%
    \begin{tabular}{lcccccccc}
      \toprule
      \textbf{Method} & \textbf{Local Optimizer} & \textbf{Global Optimizer}
      & \textbf{Local LR} & \textbf{Global LR}
      & $\boldsymbol{\rho}$ & $\boldsymbol{\beta_1}$ & $\boldsymbol{\beta_2}$
      & \textbf{Weight Decay} \\
      \midrule
      FedAvg (Local SGD) & SGD   & SGD  & 0.1  & 1.0  & --  & --  & --    & 0.001 \\
      SCAFFOLD           & SGD   & SGD  & 0.1  & 1.0  & --  & --  & --    & 0.001 \\
      FedProx            & SGD   & SGD  & 0.1  & 1.0  & --  & --  & --    & 0.001 \\
      FedAdam            & SGD   & Adam & 0.1  & 0.01 & --  & 0.9 & 0.98  & 0.001 \\
      LocalAdam          & Adam  & SGD  & 3e-4 & 1.0  & --  & 0.9 & 0.999 & 0.001 \\
      FedLADA            & Adam  & SGD  & 3e-4 & 1.0  & 0.5 & 0.9 & 0.999 & 0.001 \\
      LocalAdamW         & AdamW & SGD  & 3e-4 & 1.0  & --  & 0.9 & 0.999 & 0.01  \\
      FedAdamW           & AdamW & SGD  & 3e-4 & 1.0  & 0.5 & 0.9 & 0.999 & 0.01  \\
      \rowcolor{mygray}
      FedACT             & AdamW & SGD  & 3e-4 & 1.0  & 0.5 & 0.9 & 0.999 & 0.01  \\
      \bottomrule
    \end{tabular}
    }
\end{table}

\begin{table}[!ht]
    \vskip -0.2cm
    \centering
    \caption{Hyperparameter configuration of Llama2-60M/130M/250M on C4-en across different algorithms.}
    \label{tab:hyperparams_llama}
    \vskip -0.2cm
    \resizebox{0.8\linewidth}{!}{%
    \begin{tabular}{lcccccccc}
      \toprule
      \textbf{Method} & \textbf{Local Optimizer} & \textbf{Global Optimizer}
      & \textbf{Local LR} & \textbf{Global LR} & $\boldsymbol{\rho}$
      & $\boldsymbol{\beta_1}$ & $\boldsymbol{\beta_2}$ & \textbf{Weight Decay} \\
      \midrule
      SCAFFOLD           & AdamW & SGD & 3e-4 & 1.0 & --  & 0.9 & 0.99 & 0.01 \\
      FedProx            & AdamW & SGD & 3e-4 & 1.0 & --  & 0.9 & 0.99 & 0.01 \\
      FedLADA            & Adam  & SGD & 3e-3 & 1.0 & 0.5 & 0.9 & 0.99 & 0.0  \\
      LocalAdamW         & AdamW & SGD & 3e-4 & 1.0 & --  & 0.9 & 0.99 & 0.01 \\
      FedAdamW           & AdamW & SGD & 3e-4 & 1.0 & 0.5 & 0.9 & 0.99 & 0.01 \\
      \rowcolor{mygray}
      FedACT             & AdamW & SGD & 3e-4 & 1.0 & 0.5 & 0.9 & 0.99 & 0.01 \\
      \bottomrule
    \end{tabular}
    }
\end{table}

\begin{table}[!ht]
    \centering
    \caption{Hyperparameter configuration of FedIT and FedVA on Alpaca-GPT4 and HH-RLHF across different algorithms.}
    \label{tab:hyperparams_fine_tuning}
    \vskip -0.2cm
    \resizebox{0.8\linewidth}{!}{%
    \begin{tabular}{lcccccccc}
      \toprule
      \textbf{Method} & \textbf{Local Optimizer} & \textbf{Global Optimizer}
      & \textbf{Local LR} & \textbf{Global LR} & $\boldsymbol{\rho}$
      & $\boldsymbol{\beta_1}$ & $\boldsymbol{\beta_2}$ & \textbf{Weight Decay} \\
      \midrule
      \multicolumn{9}{c}{\textbf{Alpaca-GPT4/SFT}} \\
      \midrule
      SCAFFOLD           & AdamW & SGD  & 5e-5 & 1.0  & --  & 0.9 & 0.999 & 0.01  \\
      FedProx            & AdamW & SGD  & 5e-5 & 1.0  & --  & 0.9 & 0.999 & 0.01  \\
      FedAdam            & AdamW & Adam & 5e-5 & 1e-3 & --  & 0.9 & 0.999 & 0.01  \\
      FedLADA            & Adam  & SGD  & 5e-5 & 1.0  & 0.5 & 0.9 & 0.999 & 0.001 \\
      LocalAdamW         & AdamW & SGD  & 5e-5 & 1.0  & --  & 0.9 & 0.999 & 0.01  \\
      FedAdamW           & AdamW & SGD  & 5e-5 & 1.0  & 0.5 & 0.9 & 0.999 & 0.01  \\
      \rowcolor{mygray}
      FedACT             & AdamW & SGD  & 5e-5 & 1.0  & 0.5 & 0.9 & 0.999 & 0.01  \\
      \midrule
      \multicolumn{9}{c}{\textbf{HH-RLHF/DPO}} \\
      \midrule
      SCAFFOLD           & AdamW & SGD  & 5e-4 & 1.0  & --  & 0.9 & 0.999 & 0.01  \\
      FedProx            & AdamW & SGD  & 5e-4 & 1.0  & --  & 0.9 & 0.999 & 0.01  \\
      FedAdam            & AdamW & Adam & 5e-4 & 1e-3 & --  & 0.9 & 0.999 & 0.01  \\
      FedLADA            & Adam  & SGD  & 5e-4 & 1.0  & 0.5 & 0.9 & 0.999 & 0.001 \\
      LocalAdamW         & AdamW & SGD  & 5e-4 & 1.0  & --  & 0.9 & 0.999 & 0.01  \\
      FedAdamW           & AdamW & SGD  & 5e-4 & 1.0  & 0.5 & 0.9 & 0.999 & 0.01  \\
      \rowcolor{mygray}
      FedACT             & AdamW & SGD  & 5e-4 & 1.0 & 0.5  & 0.9 & 0.999 & 0.01  \\
      \bottomrule
    \end{tabular}
    }
\end{table}

\section{Appendix D: Additional Ablation Study}

\paragraph{Sensitivity to Global Guidance Strength $\rho$.}
We study the sensitivity of \method to the global guidance strength $\rho$ in Table~\ref{tab:abs_global_corr_coef}. The parameter $\rho$ controls the interpolation between the local AdamW direction and the global correction signal: $\rho=0$ reduces the method to local-only trust modulation, while larger values introduce stronger global guidance. The results show that a moderate value is consistently preferable, with $\rho=0.50$ achieving the best accuracy across both ViT-Tiny and Swin-Lite under Dir-0.6 and Dir-0.1. Compared with $\rho=0$, using $\rho=0.50$ clearly improves performance, e.g., from 40.85\% to 43.58\% on ViT-Tiny under Dir-0.6 and from 44.21\% to 48.50\% on Swin-Lite under Dir-0.1, confirming the importance of global guidance. However, overly large $\rho$ values cause severe degradation, suggesting that the global correction signal should guide rather than dominate local adaptive updates. We therefore use $\rho=0.50$ as the default setting.

\begin{table}[!ht]
    \centering
    \caption{Accuracy (\%) of FedACT with different global guidance strengths $\rho$ on ViT-Tiny and Swin-Lite over 300 communication rounds on CIFAR-100.}
    \label{tab:abs_global_corr_coef}
    \vskip -0.2cm
    \resizebox{0.5\linewidth}{!}{
        \begin{tabular}{lcccccccc}
  \toprule
  \multirow{2}{*}{\textbf{Setting}}
  & \multicolumn{8}{c}{\textbf{$\bm{\rho}$}} \\
  \cmidrule(lr){2-9}
  & \textbf{0.00}
  & \textbf{0.10}
  & \textbf{0.25}
  & \textbf{0.50}
  & \textbf{0.60}
  & \textbf{0.75}
  & \textbf{0.80}
  & \textbf{0.90} \\
  \midrule
  \multicolumn{9}{l}{\textbf{ViT-Tiny}} \\
  Dir-0.6
    & 40.85 & 41.35 & 42.08 & \textbf{43.58} & 40.93 & 18.11 & 16.44 & 12.57 \\
  Dir-0.1
    & 39.24 & 39.22 & 40.59 & \textbf{42.03} & 24.06 & 11.83 & 9.79 & 8.01 \\
  \midrule
  \multicolumn{9}{l}{\textbf{Swin-Lite}} \\
  Dir-0.6
    & 48.40 & 49.69 & 50.63 & \textbf{52.23} & 51.87 & 11.93 & 10.00 & 8.08 \\
  Dir-0.1
    & 44.21 & 44.90 & 45.84 & \textbf{48.50} & 18.19 & 7.22 & 6.93 & 4.92 \\
  \bottomrule
\end{tabular}
    }
\end{table}

\section{Appendix E: Additional Detailed Experimental Results}
\label{appendix:more_vis_results}

\paragraph{Detailed Results on CNN.} Table~\ref{tab:main_results_resnet18} reports the results on ResNet-18. \method also improves over FedAdamW across all nine ResNet-18 settings, indicating that coordinate-level trust allocation remains compatible with CNN-based federated training. However, the margins are smaller than those observed on ViT-Tiny and Swin-Lite. For example, the improvements over FedAdamW on CIFAR-10 are 0.27, 0.46, and 0.22 points under Dir-0.6, Dir-0.3, and Dir-0.1, respectively, while the corresponding improvements on CIFAR-100 are 1.41, 0.76, and 1.18 points. On Tiny-ImageNet, \method improves over FedAdamW by 0.40, 0.91, and 0.44 points under Dir-0.6, Dir-0.3, and Dir-0.1, respectively. This contrast is consistent with our motivation: coordinate-level trust allocation can refine corrected AdamW updates beyond Transformer architectures, but its advantage becomes more pronounced when local adaptive updates are stronger, denser, and more heterogeneous, as in federated Transformer training.

\begin{table}[!ht]
    \centering
    \caption{Accuracy (\%) of all methods on ResNet-18 over 300 communication rounds under different datasets and heterogeneity settings.}
    \label{tab:main_results_resnet18}
    \vskip -0.2cm
    \resizebox{0.7\linewidth}{!}{
        {
\begin{tabular}{lccccccccc}
  \toprule
  \multirow{2}{*}{\textbf{Method}}
  & \multicolumn{3}{c}{\textbf{CIFAR-10}}
  & \multicolumn{3}{c}{\textbf{CIFAR-100}}
  & \multicolumn{3}{c}{\textbf{Tiny-ImageNet}} \\
  \cmidrule(lr){2-4} \cmidrule(lr){5-7} \cmidrule(lr){8-10}
  & Dir-0.6 & Dir-0.3 & Dir-0.1
  & Dir-0.6 & Dir-0.3 & Dir-0.1
  & Dir-0.6 & Dir-0.3 & Dir-0.1 \\
  \midrule
  FedAvg
    & 89.76 & 88.29 & 83.52
    & 64.93 & 64.50 & 60.88
    & 53.59 & 51.79 & 48.38 \\
  SCAFFOLD
    & 89.45 & 88.30 & 83.69
    & 64.96 & 64.58 & 61.43
    & 53.78 & 52.41 & 48.22 \\
  FedProx
    & 89.64 & 87.76 & 82.91
    & 64.54 & 64.28 & 60.33
    & 53.07 & 51.46 & 47.69 \\
  LocalAdam
    & 88.59 & 87.29 & 82.05
    & 63.95 & 63.70 & 59.69
    & 50.82 & 48.99 & 45.13 \\
  FedAdam
    & 87.60 & 86.87 & 77.92
    & 62.55 & 62.21 & 57.82
    & 50.87 & 49.61 & 45.36 \\
  FedLADA
    & 89.78 & 88.77 & 83.82
    & 64.30 & 63.95 & 61.31
    & 54.04 & 53.01 & 50.50 \\
  LocalAdamW
    & 89.06 & 87.08 & 80.86
    & 64.23 & 63.75 & 59.78
    & 50.62 & 49.46 & 44.95 \\
  FedAdamW
    & 90.69 & 89.37 & 84.27
    & 65.43 & 65.86 & 63.24
    & 53.66 & 52.71 & 50.65 \\
  \rowcolor{mygray}
  \textbf{Ours}
    & \textbf{90.96} & \textbf{89.83} & \textbf{84.49}
    & \textbf{66.84} & \textbf{66.62} & \textbf{64.42}
    & \textbf{54.06} & \textbf{53.62} & \textbf{51.09} \\
  \bottomrule
\end{tabular}
}
    }
\end{table}

\paragraph{Detailed Results on Federated LLM Fine-Tuning.} Table~\ref{tab:main_results_llm_sft_dpo} evaluates \method on federated LLM fine-tuning using held-out/test metrics that are directly aligned with the supervised fine-tuning (SFT) and direct preference optimization (DPO) training objectives. On Alpaca-GPT4/SFT, \method achieves the best loss, perplexity (PPL), token accuracy, and response negative log-likelihood (NLL) among all compared methods. Compared with FedAdamW, \method reduces loss from 0.9047 to 0.8979, reduces PPL from 2.4712 to 2.4544, improves token accuracy from 73.31 to 73.47, and reduces response NLL from 0.8735 to 0.8613. On HH-RLHF/DPO, \method also obtains the best loss, preference accuracy, reward margin, and policy gap, improving over FedAdamW on all four metrics.

Figure~\ref{fig:loss_sft_dpo_appendix} further compares the training loss trajectories. On Alpaca-GPT4/SFT, \method reaches a lower global training loss than the baselines after the early training stage and maintains a clear advantage in later rounds. On HH-RLHF/DPO, the advantage is more pronounced: \method descends faster and remains below the other baselines for most of training. These curves indicate that \method improves not only held-out/test metrics, but also the optimization of the federated training objective.

These results suggest that \method can be applied to parameter-efficient federated LLM fine-tuning without degrading training stability, and can provide consistent objective-level improvements.

\begin{table*}[!ht]
  \centering
  \caption{Federated LLM fine-tuning performance of all methods after 200 communication rounds. Supervised fine-tuning (SFT) results are evaluated on held-out Alpaca-GPT4 under federated instruction tuning (FedIT; 20 clients, 10\% participation, batch size 16, $K=10$), and direct preference optimization (DPO) results are evaluated on the HH-RLHF test set under federated value alignment (FedVA; 5 clients, 40\% participation, batch size 16, $K=10$). Base denotes the corresponding task-specific base model for each setting.}
  \label{tab:main_results_llm_sft_dpo}
  \vskip -0.2cm
  \resizebox{0.8\linewidth}{!}{
    \begin{tabular}{lcccccccc}
  \toprule
  \multirow{2}{*}{\textbf{Method}}
  & \multicolumn{4}{c}{\textbf{Alpaca-GPT4 / SFT}}
  & \multicolumn{4}{c}{\textbf{HH-RLHF / DPO}} \\
  \cmidrule(lr){2-5} \cmidrule(lr){6-9}
  & \textbf{Loss $\downarrow$}
  & \textbf{PPL $\downarrow$}
  & \textbf{Token Acc. $\uparrow$}
  & \textbf{Resp. NLL $\downarrow$}  
  & \textbf{Loss $\downarrow$}
  & \textbf{Pref. Acc. $\uparrow$}   
  & \textbf{Reward Marg. $\uparrow$} 
  & \textbf{Policy Gap $\uparrow$} \\
  \midrule
  Base
    & 1.0682 & 2.9102 & 70.32 & 1.3236
    & 0.6931 & -     & 0.0000 & 0.1182 \\
  FedAvg
    & 0.8997 & 2.4589 & 73.42 & 0.8650
    & 0.6880 & 55.59 & 0.0113 & 0.2308 \\
  FedProx
    & 0.9002 & 2.4600 & 73.41 & 0.8653
    & 0.6881 & 55.54 & 0.0113 & 0.2309 \\
  SCAFFOLD
    & 0.9007 & 2.4614 & 73.40 & 0.8674
    & 0.6888 & 55.61 & 0.0094 & 0.2122 \\
  FedAdam
    & 0.9046 & 2.4708 & 73.31 & 0.8749
    & 0.6895 & 55.04 & 0.0080 & 0.1977 \\
  FedLADA
    & 0.9112 & 2.4873 & 73.16 & 0.8850
    & 0.6902 & 54.31 & 0.0064 & 0.1820 \\
  FedAdamW
    & 0.9047 & 2.4712 & 73.31 & 0.8735
    & 0.6891 & 54.90 & 0.0090 & 0.2080 \\
  \rowcolor{mygray}
  \textbf{Ours}
    & \textbf{0.8979} & \textbf{2.4544} & \textbf{73.47} & \textbf{0.8613}
    & \textbf{0.6866} & \textbf{55.95} & \textbf{0.0147} & \textbf{0.2656} \\
  \bottomrule
\end{tabular}

  }
\end{table*}

\begin{figure}[!ht]
    \centering
    \includegraphics[width=1\linewidth]{figs/loss_sft_dpo.pdf}
    \vskip -0.2cm
    \caption{Training dynamics of different methods for federated instruction tuning (FedIT) and federated value alignment (FedVA).}
    \label{fig:loss_sft_dpo_appendix}
\end{figure}

Figures~\ref{fig:appendix_swin_lite_training},~\ref{fig:appendix_vit_tiny_training}, and~\ref{fig:appendix_resnet18_training} provide additional training dynamics in terms of test Top-1 accuracy over communication rounds.

\section{Appendix F: Limitations and Future Work}
\label{app:limitations}

\paragraph{Scope of Optimizer Applicability.}
\method is designed for AdamW-style diagonal adaptive optimization, where each coordinate has an independent adaptive scale and coordinate-wise reliability is therefore a natural object to estimate. This design matches the main setting studied in this work, namely federated Transformer training with local AdamW. However, \method should not be viewed as an optimizer-agnostic update rule. For matrix-structured optimizers such as Muon~\cite{jordan2024muon}, the update direction is obtained through matrix orthogonalization, and the useful optimization geometry is encoded at the matrix or block level rather than in independent coordinates. Applying coordinate-wise reweighting after orthogonalization may disrupt this structured geometry and weaken the benefits of the base optimizer. Extending \method to such optimizers likely requires block-wise or matrix-aware trust modulation, where reliability is estimated and applied at the level of rows, columns, blocks, or full matrices. We leave this direction for future work.

\paragraph{Scale of LLM Pre-Training Experiments.}
Due to computational resource limitations, our federated LLM pre-training experiments are conducted on relatively small-scale Llama2 models with 60M, 130M, and 250M parameters. These experiments allow us to systematically evaluate training dynamics, validation perplexity, and communication-round efficiency under controlled federated settings. However, they do not fully cover the regime of larger foundation models with billions of parameters or substantially longer pre-training horizons. Extending the evaluation of \method to billion-scale LLMs, larger token budgets, and more diverse federated language corpora is an important direction for future work.

\paragraph{Hyperparameter Adaptivity.}
In this work, we use fixed trust-modulation hyperparameters, including the global guidance strength $\rho$, the trust ratio $\tau$, and the attenuation factor $\gamma$. Our sensitivity studies show that moderate global guidance and soft attenuation are important for stable performance, and the default setting performs consistently in the evaluated settings. Nevertheless, the optimal trust allocation may depend on the model architecture, heterogeneity level, training stage, and optimizer dynamics. A promising future direction is to develop adaptive schedules for $\rho$, $\tau$, and $\gamma$, or to estimate them online from client update statistics, direction consistency, or coordinate-level trust distributions.

\paragraph{Theoretical Assumptions.}
Our convergence analysis focuses on a simplified but standard setting for local adaptive federated optimization. In particular, the analysis considers full client participation, removes decoupled weight decay, and uses a predictable diagonal preconditioner assumption. These assumptions make the analysis tractable and help isolate the effect of global-aware coordinate trust modulation. However, practical federated training may involve partial participation, client sampling variability, stale or noisy global correction signals, and additional system constraints. Extending the theory to these more realistic settings would further strengthen the understanding of \method.

\paragraph{Systems and Privacy Considerations.}
\method introduces coordinate-level modulation on top of corrected local AdamW updates and can be implemented without changing the overall federated communication protocol. However, this work does not specifically study interactions with systems-level techniques such as gradient compression, quantization, secure aggregation, or differential privacy. Since these techniques may alter update magnitudes or inject noise into local updates, they could affect the quality of the coordinate-wise trust signal. Studying how global-aware trust modulation interacts with communication-efficient and privacy-preserving federated training is an important direction for future work.

\begin{figure*}[!ht]
    \centering
    \begin{subfigure}{0.33\textwidth}
        \centering
        \includegraphics[width=1\textwidth]{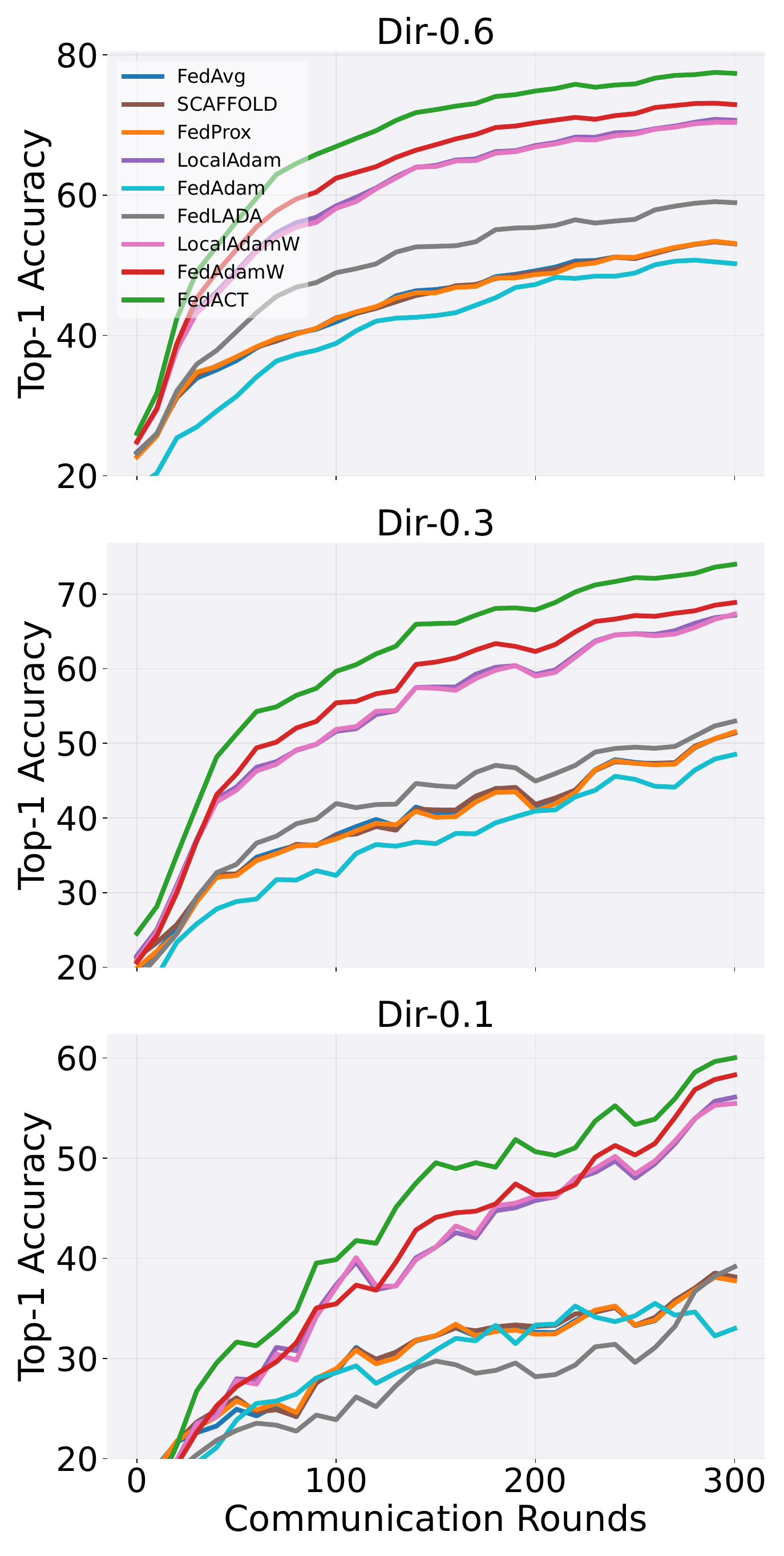}
        \caption{CIFAR-10}
    \end{subfigure}\hfill
    \begin{subfigure}{0.33\textwidth}
        \centering
        \includegraphics[width=1\textwidth]{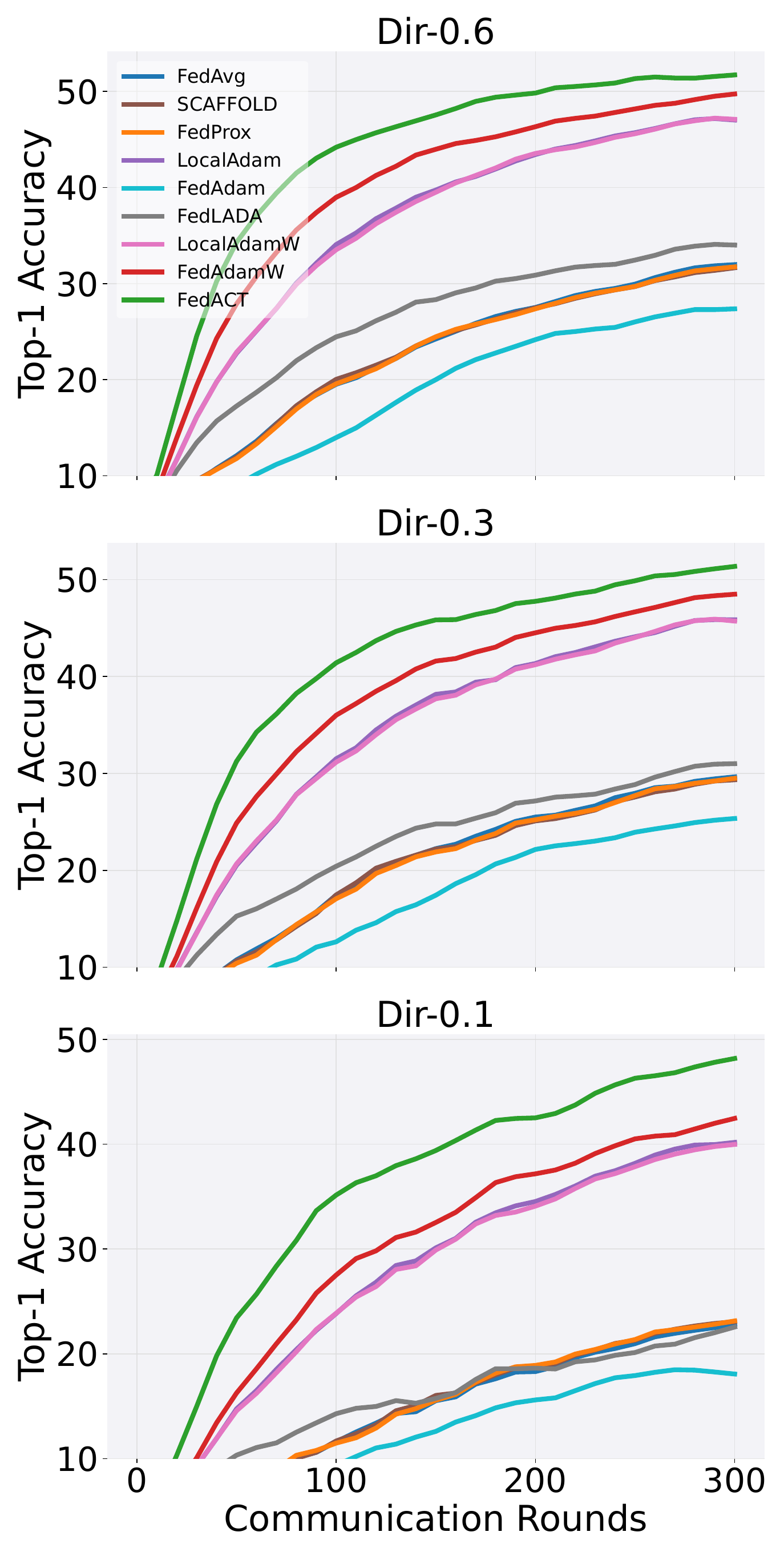}
        \caption{CIFAR-100}
    \end{subfigure}\hfill
    \begin{subfigure}{0.33\textwidth}
        \centering
        \includegraphics[width=1\textwidth]{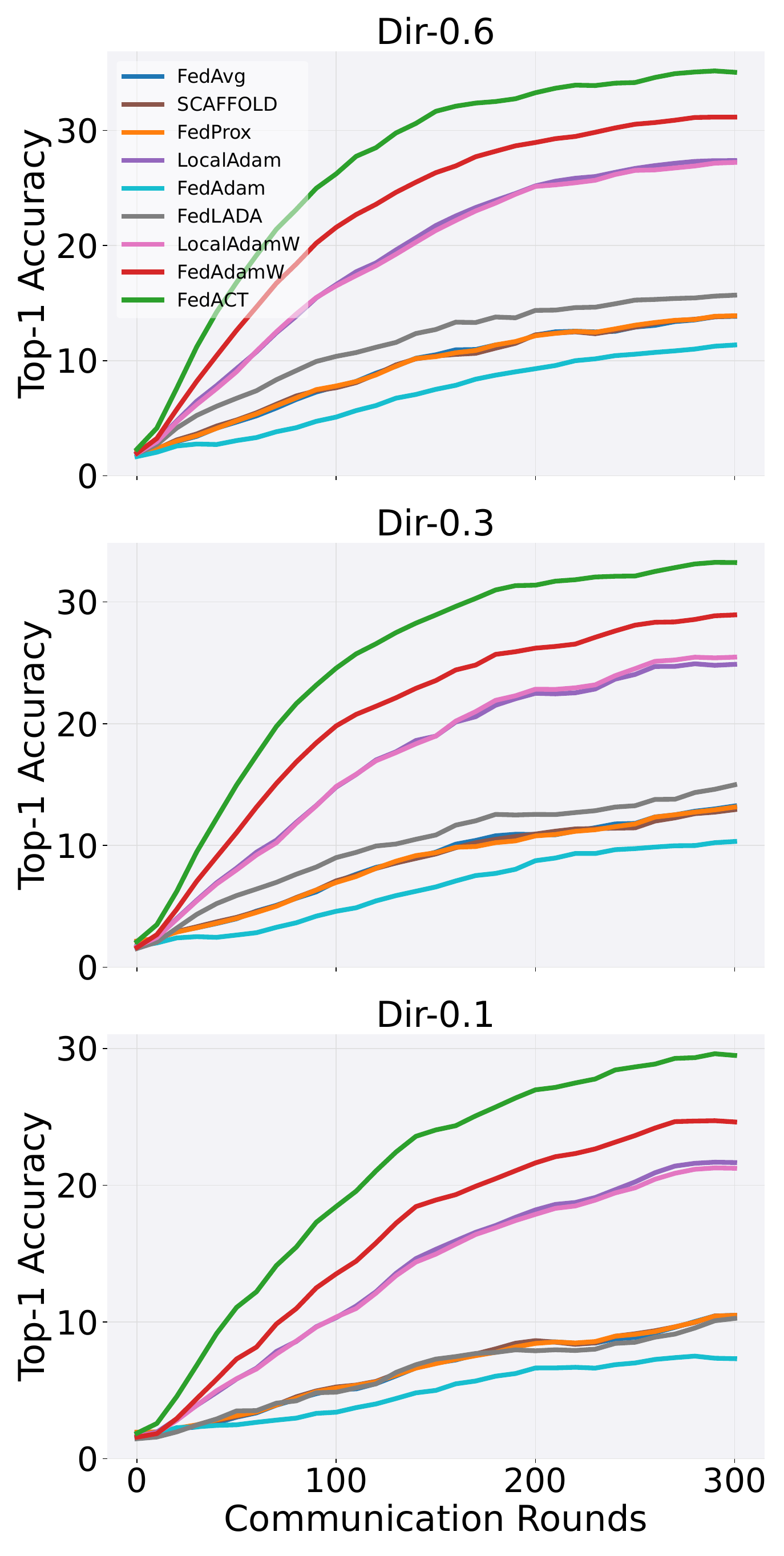}
        \caption{Tiny-ImageNet}
    \end{subfigure}
    \caption{\textbf{Test Top-1 accuracy over communication rounds for federated Swin-Lite training.} Results are reported on CIFAR-10, CIFAR-100, and Tiny-ImageNet under Dirichlet heterogeneity levels Dir-0.6, Dir-0.3, and Dir-0.1. \method consistently improves convergence and final accuracy across datasets and heterogeneity levels.}
    \label{fig:appendix_swin_lite_training}
\end{figure*}

\begin{figure*}[!ht]
    \centering
    \begin{subfigure}{0.33\textwidth}
        \centering
        \includegraphics[width=1\textwidth]{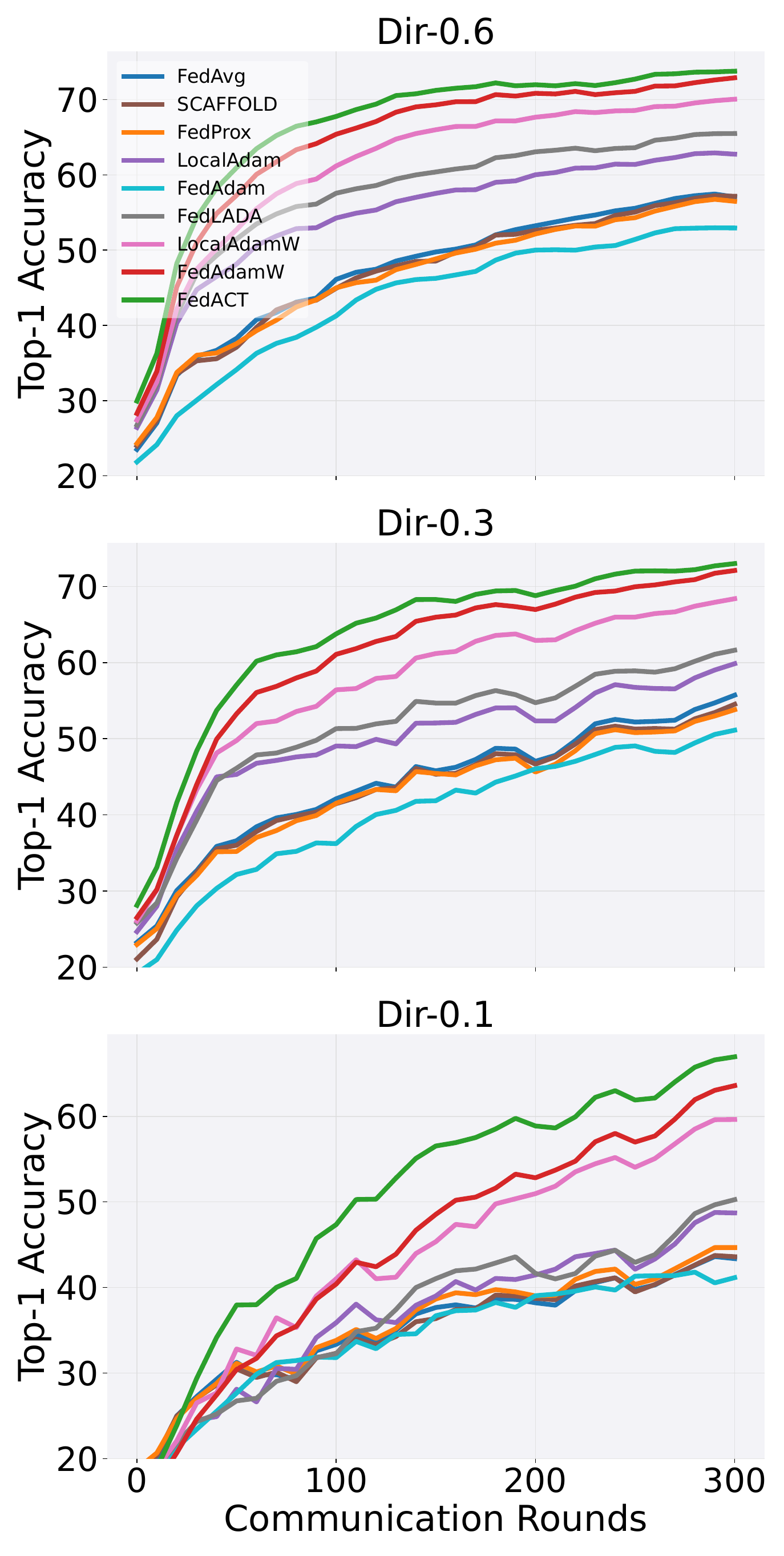}
        \caption{CIFAR-10}
    \end{subfigure}\hfill
    \begin{subfigure}{0.33\textwidth}
        \centering
        \includegraphics[width=1\textwidth]{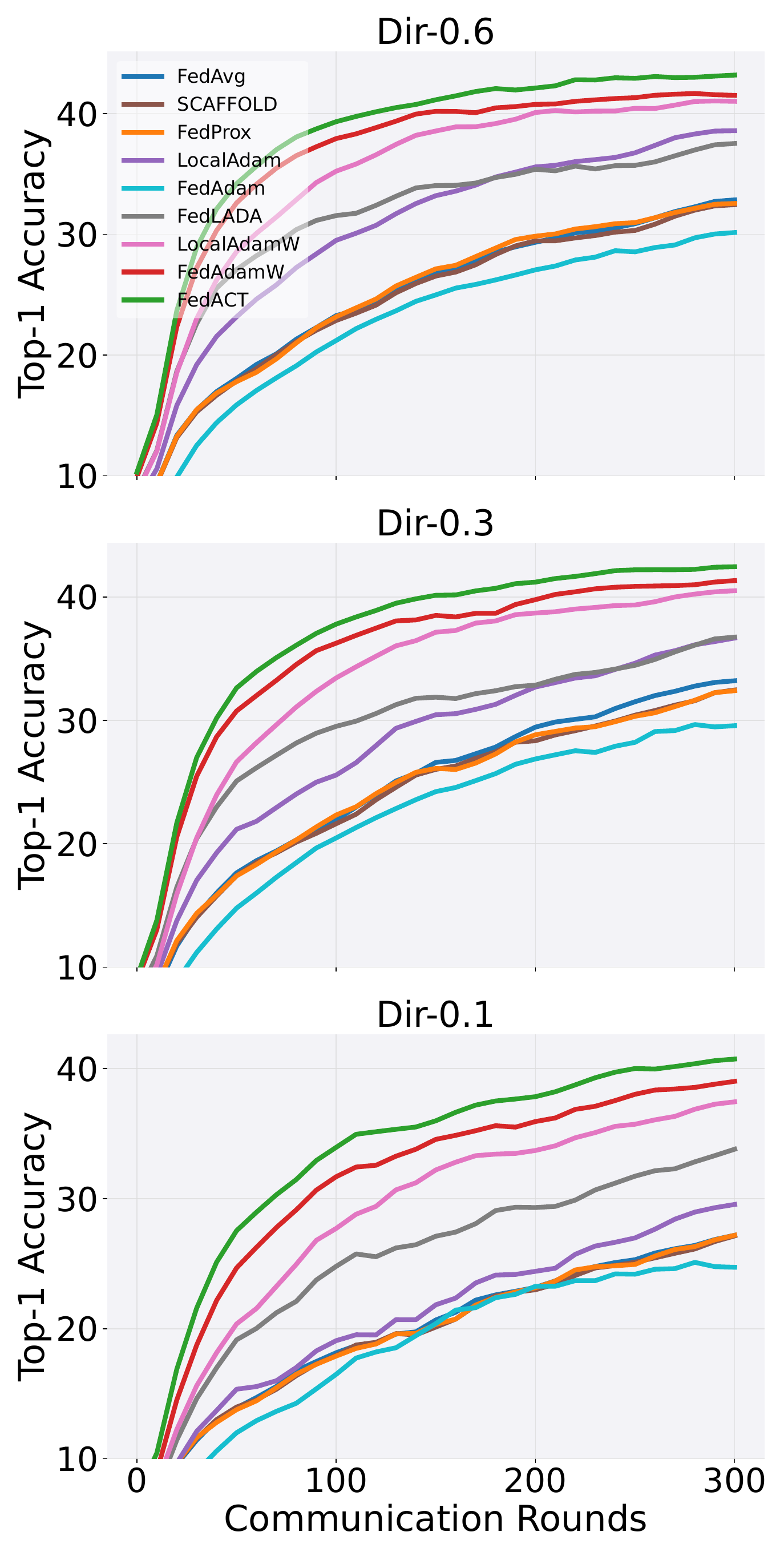}
        \caption{CIFAR-100}
    \end{subfigure}\hfill
    \begin{subfigure}{0.33\textwidth}
        \centering
        \includegraphics[width=1\textwidth]{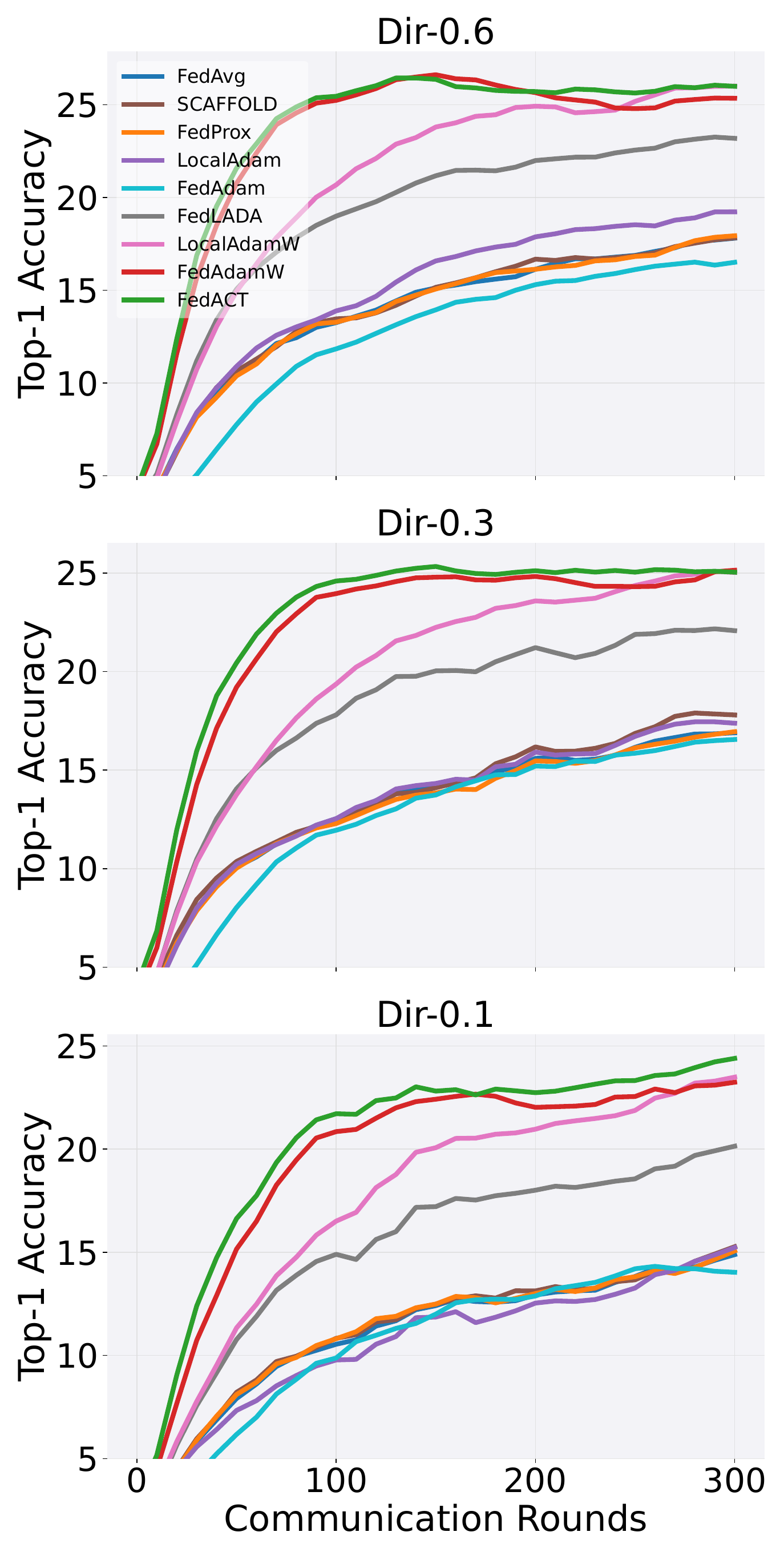}
        \caption{Tiny-ImageNet}
    \end{subfigure}
    \caption{\textbf{Test Top-1 accuracy over communication rounds for federated ViT-Tiny training.} Results are reported on CIFAR-10, CIFAR-100, and Tiny-ImageNet under Dirichlet heterogeneity levels Dir-0.6, Dir-0.3, and Dir-0.1. \method also shows improved convergence and higher final accuracy.}
    \label{fig:appendix_vit_tiny_training}
\end{figure*}

\begin{figure*}[!ht]
    \centering
    \begin{subfigure}{0.33\textwidth}
        \centering
        \includegraphics[width=1\textwidth]{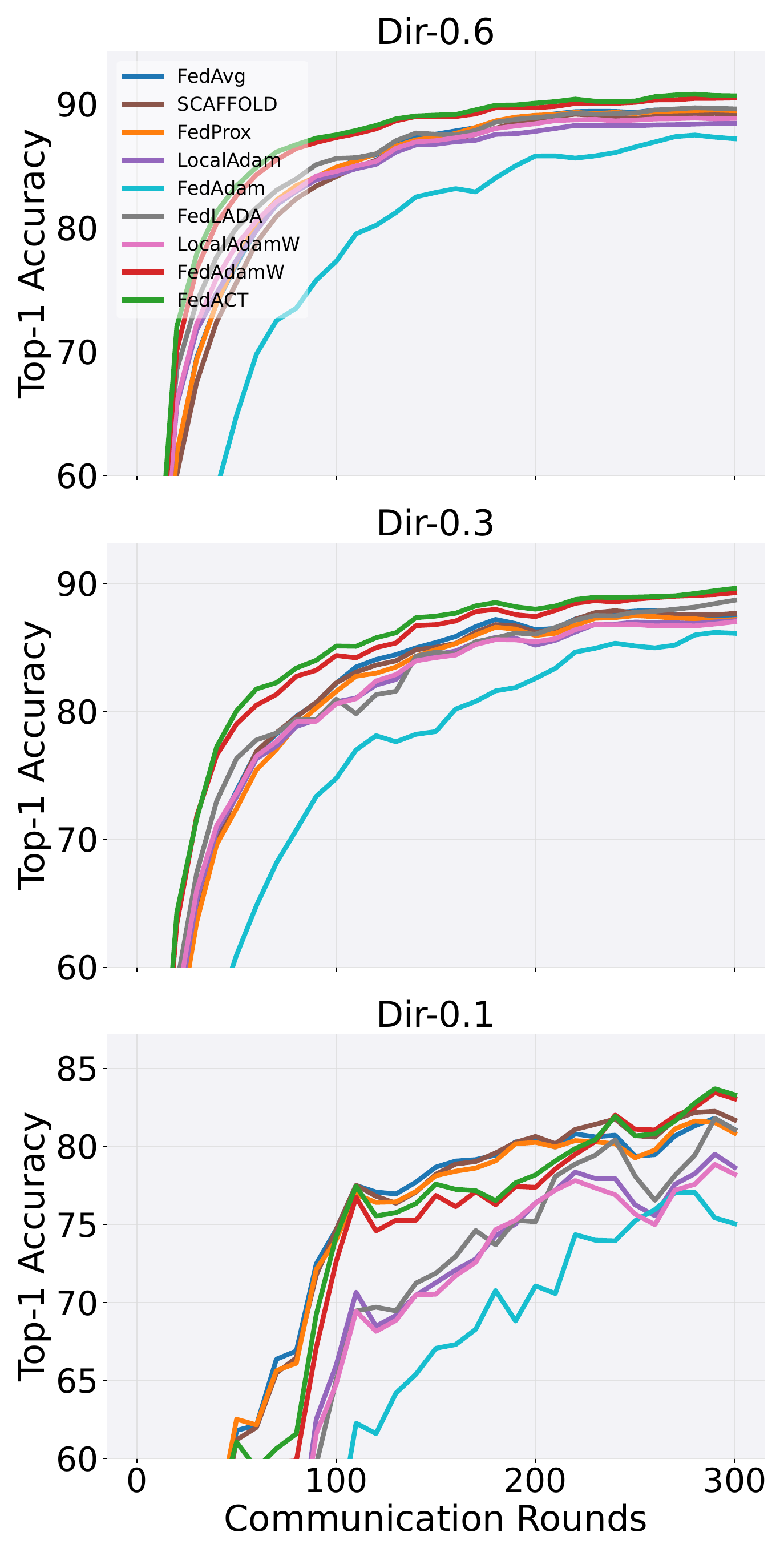}
        \caption{CIFAR-10}
    \end{subfigure}\hfill
    \begin{subfigure}{0.33\textwidth}
        \centering
        \includegraphics[width=1\textwidth]{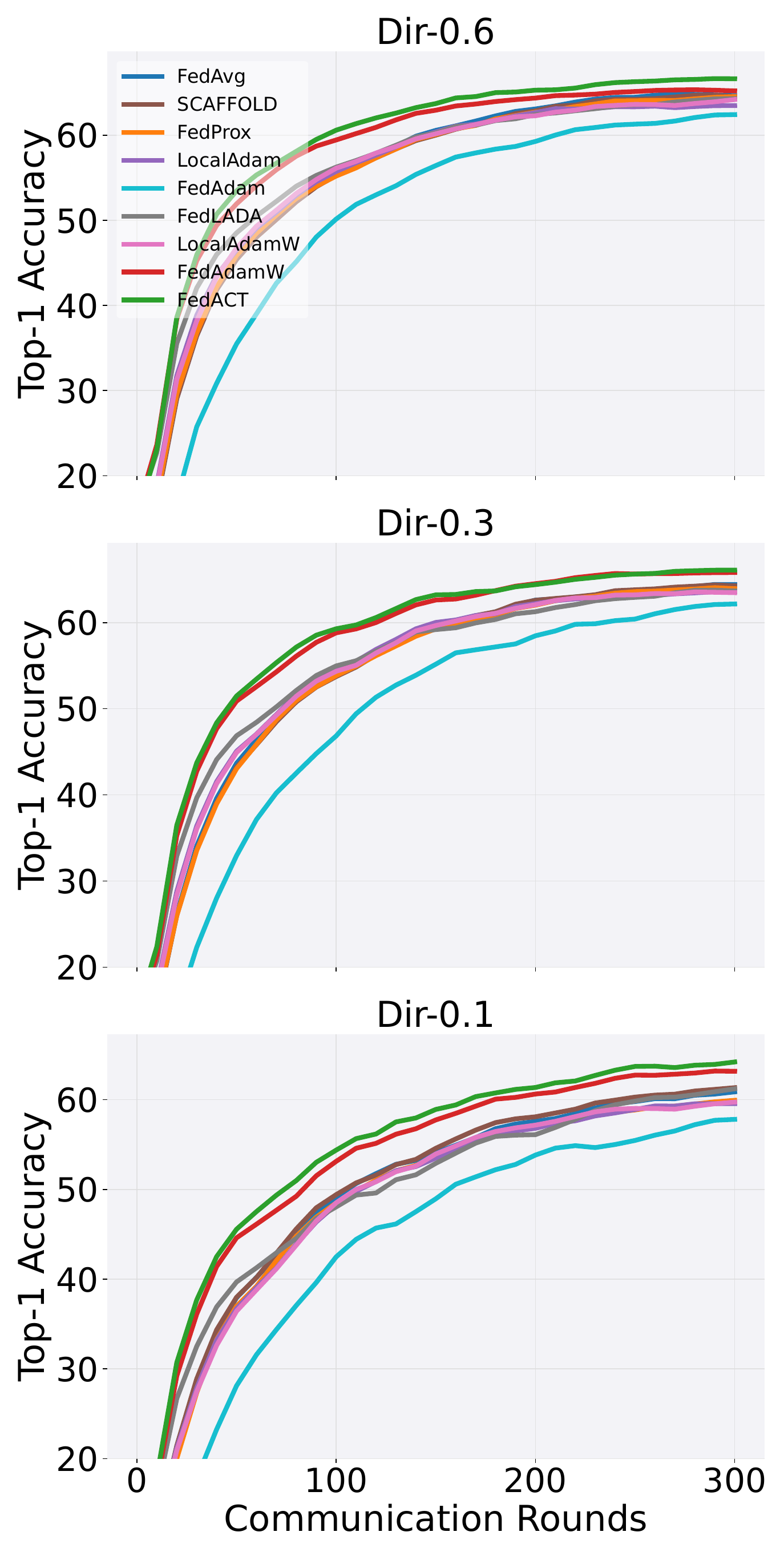}
        \caption{CIFAR-100}
    \end{subfigure}\hfill
    \begin{subfigure}{0.33\textwidth}
        \centering
        \includegraphics[width=1\textwidth]{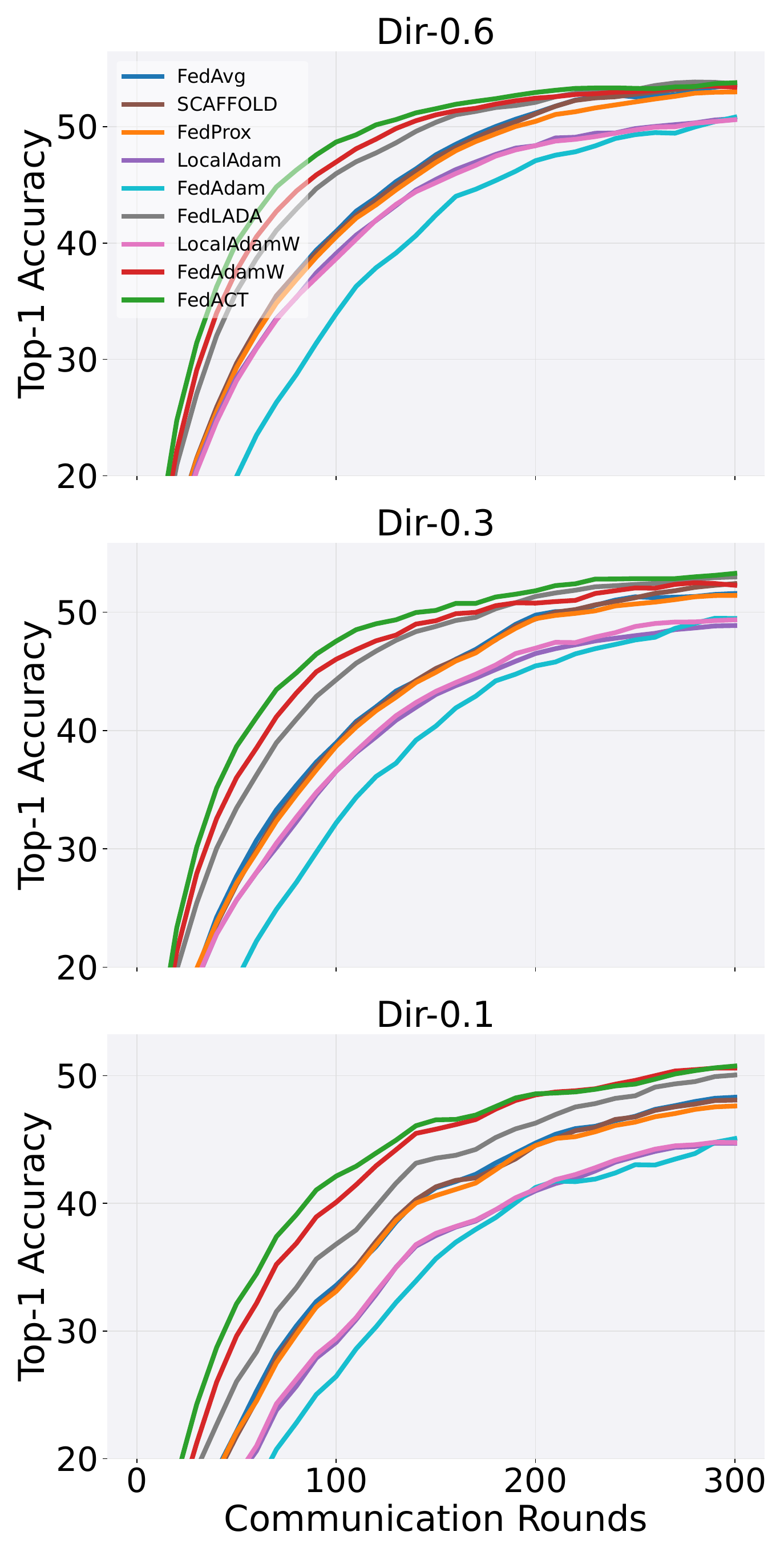}
        \caption{Tiny-ImageNet}
    \end{subfigure}
    \caption{\textbf{Test Top-1 accuracy over communication rounds for ResNet-18.} Results are reported on CIFAR-10, CIFAR-100, and Tiny-ImageNet under Dirichlet heterogeneity levels Dir-0.6, Dir-0.3, and Dir-0.1. \method remains effective on CNN training, but the improvements are generally smaller than those on vision Transformers, supporting our view that coordinate-level trust modulation is most beneficial when local adaptive updates are stronger and more heterogeneous.}
    \label{fig:appendix_resnet18_training}
\end{figure*}

\end{document}